\pdfoutput=1 

\documentclass[twoside]{article}

\usepackage[accepted]{aistats2020_bis}
\usepackage{color,xcolor}
\usepackage[english]{babel}
\usepackage[utf8]{inputenc}
\usepackage{amsmath,amssymb,amsthm, amsfonts, mathtools}
\usepackage[colorinlistoftodos]{todonotes}
\usepackage{microtype}
\usepackage{graphicx}
\usepackage{booktabs}
\usepackage[inline, shortlabels]{enumitem}
\definecolor{mydarkblue}{rgb}{0,0.08,0.45}
\usepackage[colorlinks=true,
    linkcolor=mydarkblue,
    citecolor=mydarkblue,
    filecolor=mydarkblue,
    urlcolor=mydarkblue,
    breaklinks=true,
    ]{hyperref}
\usepackage{url}
\usepackage{nicefrac}
\usepackage{microtype}
\usepackage{array}
\usepackage{verbatim}
\usepackage{subcaption}
\newcolumntype{P}[1]{>{\centering\arraybackslash}p{#1}}

\newcommand{\N}{\mathcal N}

\newcommand{\R}{\mathbb{R}}
\newcommand{\C}{\mathbb{C}}
\newcommand{\bigO}{\mathcal{O}}

\newcommand{\half}{\frac{1}{2}}

\DeclareMathOperator*{\argmin}{arg\,min}
\DeclareMathOperator*{\Sp}{Sp}
\DeclareMathOperator*{\diag}{diag}
\DeclareMathOperator*{\Tr}{Tr}
\DeclareMathOperator*{\Id}{Id}
\DeclareMathOperator*{\Ker}{Ker}
\DeclareMathOperator*{\sign}{sign}

\DeclarePairedDelimiterX{\inp}[2]{\langle}{\rangle}{#1, #2}

\usepackage{thmtools}
\usepackage{thm-restate}

\declaretheorem[name=Theorem]{thm}
\declaretheorem[name=Theorem, numbered=no]{thm*}
\declaretheorem[name=Proposition]{prop}
\declaretheorem[name=Lemma]{lemma}
\declaretheorem[name=Corollary]{cor}

\declaretheorem[name=Example, style=definition]{ex}
\declaretheorem[name=Remark, style=remark]{rmk}
\declaretheorem[name=Assumption, style=definition]{asm}

\usepackage{pgfplotstable} %
\usetikzlibrary{automata, positioning, arrows, shapes, fit, calc, intersections}
\usepgfplotslibrary{statistics}

\usepgfplotslibrary{fillbetween} %
\newcommand{\drawSpectrum}[1][\textwidth]{
\newcounter{t}
\setcounter{t}{80}
\newcounter{L}
\setcounter{L}{10}
\newcounter{g}
\setcounter{g}{4}
\begin{tikzpicture}
\begin{axis}[
    ticks=none,
    grid=major,
    axis equal image,
    max space between ticks=1000pt,
    width = #1,
    yticklabel={
    $\pgfmathprintnumber{\tick}i$
    },
	xmin=0,   xmax={\value{L} + 2},
	ymin={-\value{L} - 0.2},   ymax={\value{L} + 0.2},
	]
    
   \draw [help lines, name path=Lpath, domain=-\value{t}:\value{t}, draw=\spectrumcolor, thick, samples=65] plot (axis cs: {\value{L}*cos(\x)}, {\value{L}*sin(\x)});
   \draw [name path=gammapath, domain=-64.27066043:64.27066043, draw=\spectrumcolor, thick, samples=65] plot (axis cs: {\value{g}*cos(\x)}, {\value{g}*sin(\x)});
   \draw [draw=\spectrumcolor, thick, dashed](axis cs: {\value{L}*cos(\value{t})}, {\value{g}*sin(64.27066043)}) -- (axis cs: {\value{L}*cos(\value{t})}, {\value{g}*sin(-64.27066043)});
   \draw [draw=\spectrumcolor, thick](axis cs: {\value{L}*cos(\value{t})}, {\value{g}*sin(64.27066043)}) -- (axis cs: {\value{L}*cos(\value{t})}, {\value{L}*sin(\value{t})});
   \draw [draw=\spectrumcolor, thick](axis cs: {\value{L}*cos(\value{t})}, {\value{g}*sin(-64.27066043)}) -- (axis cs: {\value{L}*cos(\value{t})}, {\value{L}*sin(-\value{t})});
   
    \addplot [
        thick,
        color=blue,
        fill=\spectrumcolor!50!white,
        fill opacity=0.3,
    ]
    fill between[
        of= Lpath and gammapath,
    ];
   
   \node[anchor=south] (mu) at (axis cs: {\value{L} * cos(\value{t})+.2}, 0.25) {$\min \Re \lambda$};
   \draw [fill=black] (axis cs: {\value{L} * cos(\value{t})}, 0) circle (2pt);
   \node [anchor=north] (g) at (axis cs: {\value{g}}, -0.25) {$\min |\lambda|$};
   \draw [fill=black] (axis cs: {\value{g}}, 0) circle (2pt);
   \node [anchor=north] (L) at (axis cs: {\value{L}-0.25}, -0.25) {$\max |\lambda|$};
   \draw [fill=black] (axis cs: {\value{L}}, 0) circle (2pt);
   \draw [fill=\spectrumcolor] (axis cs: {\value{L}*0.707}, {\value{L}*0.707}) circle (2pt) ;
   \draw [fill=\spectrumcolor] (axis cs: {\value{L}*0.707}, {-\value{L}*0.707}) circle (2pt) ;
   \draw [fill=\spectrumcolor] (axis cs: {\value{L}*0.5}, 0) circle (2pt);
   \draw [fill=\spectrumcolor] (axis cs: {\value{L} * cos(\value{t})}, 5+\value{g}) circle (2pt);
   \draw [fill=\spectrumcolor] (axis cs: {\value{L} * cos(\value{t})}, -5-\value{g}) circle (2pt);
   \draw [fill=\spectrumcolor] (axis cs: {\value{g}+2}, \value{g}-0.75) circle (2pt);
   \draw [fill=\spectrumcolor] (axis cs: {\value{g}+2}, -\value{g}+0.75) circle (2pt);
   \draw [fill=\spectrumcolor] (axis cs: {\value{g}*0.707}, {\value{g}*0.707}) circle (2pt);
   \draw [fill=\spectrumcolor] (axis cs: {\value{g}*0.707}, {-\value{g}*0.707}) circle (2pt);
   \end{axis}
\end{tikzpicture}
}

\setlength{\pdfpageheight}{11in}
\setlength{\pdfpagewidth}{8.5in}

\usepackage[round]{natbib}

\usepackage{sidecap}
\usepackage[capitalize]{cleveref}
\crefname{thm}{Thm.}{theorems}
\crefname{fig}{Fig.}{fig.}
\crefname{prop}{Prop.}{propositions}
\crefname{cor}{Cor.}{corollaries}
\crefname{lemma}{Lem.}{lem.}
\crefname{asm}{Assumption}{assumptions}
\crefname{ex}{Example}{examples}
\crefname{rmk}{Remark}{examples}
\crefformat{subsection}{\S#2#1#3}
\crefformat{section}{\S#2#1#3}
\crefformat{equation}{(#2\textup{#1}#3)}

\newif\ifcompress
\compresstrue

\newcommand{\varvspace}[1]{
\ifcompress
    \vspace{#1}
\fi
}

\begin{document}

\runningtitle{A Unified Analysis of Gradient-Based Methods for a Whole Spectrum of Games}
\twocolumn[

\aistatstitle{A Tight and Unified Analysis of Gradient-Based Methods \\ for a Whole Spectrum of Differentiable Games}

\aistatsauthor{ Waïss Azizian$^{1,\dag}$ \; Ioannis Mitliagkas$^{2,\ddagger}$ \;  Simon Lacoste-Julien$^{2,\ddagger}$ \;  Gauthier Gidel$^{2}$}
\runningauthor{Waïss Azizian, Ioannis Mitliagkas, Simon Lacoste-Julien, Gauthier Gidel}
\aistatsaddress{ $^1$École Normale Supérieure, Paris  $\quad ^2$Mila \& DIRO, Université de Montréal} 

]

\begin{abstract}
We consider differentiable games where the goal is to find a Nash equilibrium. 
The machine learning community has recently started using variants of the gradient method (\textsc{GD}).
Prime examples are extragradient (\textsc{EG}), the optimistic gradient method (\textsc{OG}) and consensus optimization (\textsc{CO}), which enjoy linear convergence in cases like bilinear games, where the standard \textsc{GD} fails. 
The full benefits of theses relatively new methods are not known as there is no unified analysis for both strongly monotone and bilinear games.
We provide new analyses of the \textsc{EG}’s local and global convergence properties and use is to get a tighter global convergence rate for \textsc{OG} and \textsc{CO}.
Our analysis covers the whole range of settings between bilinear and strongly monotone games.
It reveals that these methods converge via different mechanisms at these extremes;
in between, it exploits the most favorable mechanism for the given problem.
We then prove that \textsc{EG} achieves the optimal rate for a wide class of algorithms with any number of extrapolations.
Our tight analysis of \textsc{EG}’s convergence rate in games shows that, unlike in convex minimization, \textsc{EG} may be much faster than \textsc{GD}. 
\end{abstract}

\section{Introduction}
Gradient-based optimization methods have underpinned many of the recent successes of machine learning. The training of many models is indeed formulated as the minimization of a loss involving the data. However, a growing number of frameworks rely on optimization problems that involve multiple players and objectives. For instance, actor-critic models \citep{pfauConnectingGenerativeAdversarial2016}, generative adversarial networks (GANs) \citep{goodfellowGenerativeAdversarialNets2014} and automatic curricula \citep{sukhbaatarIntrinsicMotivationAutomatic2018} can be cast as two-player games.

Hence games are a generalization of the standard single-objective framework. The aim of the optimization is to find \emph{Nash equilibria}, that is to say situations where no player can unilaterally decrease their loss. However, new issues that were not present for single-objective problems arise. The presence of rotational dynamics prevent standard algorithms such as the gradient method to converge on simple bilinear examples \citep{goodfellowNIPS2016Tutorial2016,balduzziMechanicsNPlayerDifferentiable2018b}.
Furthermore, stationary points of the gradient dynamics are not necessarily Nash equilibria \citep{adolphsLocalSaddlePoint2018, mazumdarFindingLocalNash2019a}.

Some recent progress has been made by introducing new methods specifically designed with games or variational inequalities in mind. The main example are the optimistic gradient method (\textsc{OG}) introduced by \citet{rakhlin2013optimization} initially for online learning,  consensus optimization (\textsc{CO}) which adds a regularization term to the optimization problem and the extragradient method (\textsc{EG}) originally introduced by~\citet{g.m.korpelevichExtragradientMethodFinding1976}. Though these news methods and the gradient method (\textsc{GD})
have similar performance in convex optimization, their behaviour seems to differ when applied to games: unlike gradient, they converge on the so-called bilinear example \citep{tsengLinearConvergenceIterative1995,gidelVariationalInequalityPerspective2018a,mokhtariUnifiedAnalysisExtragradient2019,abernethyLastiterateConvergenceRates2019a}. 

However, linear convergence results for \textsc{EG} and \textsc{OG} (a.k.a extrapolation from the past) in particular have only been proven for either strongly monotone variational inequalities problems, which include strongly convex-concave saddle point problems, or in the bilinear setting separately \citep{tsengLinearConvergenceIterative1995,gidelVariationalInequalityPerspective2018a,mokhtariUnifiedAnalysisExtragradient2019}. 
    
In this paper, we study the dynamics of such gradient-based methods and in particular \textsc{GD}, \textsc{EG} and more generally multi-step extrapolations methods for unconstrained games. Our objective is three-fold. First, taking inspiration from the analysis of \textsc{GD} by \citet{gidelNegativeMomentumImproved2018b}, we aim at providing a single precise analysis of \textsc{EG} which covers both the bilinear and the strongly monotone settings and their intermediate cases.
Second, we are interested in theoretically comparing \textsc{EG} to \textsc{GD} and general multi-step extrapolations through upper and lower bounds on convergence rates. Third, we provide a framework to extend the unifying results of spectral analysis in global guarantees and leverage it to prove tighter convergence rates for \textsc{OG} and \textsc{CO}.
Our contributions can be summarized as follows:
    \begin{itemize} %
        \item We perform a spectral analysis of \textsc{EG} in \S\ref{subsection: spectral analysis of eg}. We derive a local rate of convergence which covers the whole range of settings between purely bilinear and strongly monotone games and which is faster than existing rates in some regimes. Our analysis also encompasses multi-step extrapolation methods and highlights the similarity between \textsc{EG} and the proximal point methods. 
        \item We use and extend the framework from \citet{arjevaniLowerUpperBounds2016} to derive lower bounds for specific classes of algorithms.
        \begin{enumerate*}[series = tobecont, itemjoin = \quad, label=(\roman*)]
        \item We show in \S\ref{subsection: lower bound gradient} that the previous spectral analysis of \textsc{GD} by \citet{gidelNegativeMomentumImproved2018b} is tight, confirming the difference of behaviors with \textsc{EG}.
        \item We prove lower bounds for $1$-Stationary Canonical Linear Iterative methods with any number of extrapolation steps in \S\ref{subsection: lower bounds n-extra}. As expected, this shows that increasing this number or choosing different step sizes for each does not yield significant improvements and hence \textsc{EG} can be considered as optimal among this class.
        \end{enumerate*}
        \item In \S\ref{section: global cv rate}, we derive a  global convergence rate for the \textsc{EG} with the same unifying properties as the local analysis. We then leverage our approach to derive global convergence guarantees for \textsc{OG} and \textsc{CO} with similar unifying properies. 
        It shows that, while these methods converges for different reasons in the convex and bilinear settings, in between they actually take advantage of the most favorable one. 
    \end{itemize}
    
\begin{table}[t]
\centering
\resizebox{\columnwidth}{!}{
\begin{tabular}{@{}lP{0.8cm}
P{0.8cm}P{1.1cm}P{1.2cm}P{2cm}@{}}
\toprule
    & \footnotesize \citet{tsengLinearConvergenceIterative1995} & 
     \footnotesize \citet{gidelVariationalInequalityPerspective2018a}&  \footnotesize \citet{mokhtariUnifiedAnalysisExtragradient2019}  &  \footnotesize \citet{abernethyLastiterateConvergenceRates2019a}&
     \small This work \,\; \S\ref{section: global cv rate}
\\ \midrule

\small \textsc{EG}        &       $c\frac{\mu^2}{L^2}$       &       
- & $\frac{\mu}{4L}$    & - & $\frac{1}{4}(\frac{\mu}{L}+\frac{\gamma^2}{16L^2})$ \\[2mm]
\small \textsc{OG}        &  - & 
$\frac{\mu}{4L}$  & $\frac{\mu}{4L}$ & - & $\frac{1}{4}(\frac{\mu}{L}+\frac{\gamma^2}{32L^2})$ \\[2mm]
\small \textsc{CO}        & -  & 
- & - & $\frac{\gamma^2}{4L_H^2}$&  $ \tfrac{\mu^2}{2L_H^2} + \tfrac{\gamma^2}{2L_H^2}$ \\
\bottomrule
\end{tabular}
}
\caption{\small
Summary of the global convergence results presented in~\S\ref{section: global cv rate} for extragradient (\textsc{EG}), optimistic gradient (\textsc{OMD}) and consensus optimization (\textsc{CO}) methods. If a result shows that the iterates converge as $\bigO((1-r)^t)$, the quantity $r$ is reported (the larger the better). The letter $c$ indicates that the numerical constant was not reported by the authors.
$\mu$ is the strong monotonicity of the vector field, $\gamma$ is a global lower bound on the singular values of $\nabla v$ , $L$ is the Lipschitz constant of the vector field and $L_H^2$ the Lipschitz-smoothness of $\frac{1}{2}\|v\|_2^2$. For instance, for the so-called bilinear example (Ex.~\ref{ex: bilinear game}), we have $\mu =0$ and $\gamma = \sigma_{\min}(A)$. Note that for this particular example, previous papers developed a specific analysis that breaks when a small regularization is added (see Ex.~\ref{ex: in between example}). 
}\label{fig: small table}
\varvspace{-4mm}
\end{table}

\section{Related Work}
\textbf{Extragradient} was first introduced by \citet{g.m.korpelevichExtragradientMethodFinding1976} in the context of variational inequalities. \citet{tsengLinearConvergenceIterative1995} proves results which induce linear convergence rates for this method in the bilinear and strongly monotone cases. %
We recover both rates with our analysis. The extragradient method was generalized to arbitrary geometries by \citet{nemirovskiProxMethodRateConvergence2004} as the mirror-prox method. A sublinear rate of $\bigO(1/t)$ was proven for monotone variational inequalities by treating this method as an approximation of the proximal point method as we will discuss later. More recently, \citet{mertikopoulosOptimisticMirrorDescent2018} proved that, for a broad class of saddle-point problems, its stochastic version converges almost surely to a solution. 

\textbf{Optimistic gradient method} is slightly different from \textsc{EG} and can be seen as a kind of extrapolation from the past~\citep{gidelVariationalInequalityPerspective2018a}. It was initially introduced for online learning \citep{chiang2012online,rakhlin2013optimization} and subsequently studied in the context of games by \citet{daskalakisTrainingGANsOptimism2017a}, who proved that this method converges on bilinear games.
\citet{gidelVariationalInequalityPerspective2018a} interpreted GANs as a variational inequality problem and derived \textsc{OG} as a variant of \textsc{EG} which avoids ``wasting" a gradient. They prove a linear convergence rate for strongly monotone variational inequality problems.
Treating \textsc{EG} and \textsc{OG} as perturbations of the proximal point method, \cite{mokhtariUnifiedAnalysisExtragradient2019} gave new but still separate derivations for the standard linear rates in the bilinear and the strongly convex-concave settings. 
\citet{liangInteractionMattersNote2018} mentioned the potential impact of the interaction between the players, but they only formally show this on bilinear examples: our results show that this conclusion extends to general nonlinear games. 

\textbf{Consensus optimization} has been motivated by the use of gradient penalty objectives for the practical training of GANs~\citep{gulrajani2017improved,meschederNumericsGANs2017}. It has been analysed by \citet{abernethyLastiterateConvergenceRates2019a} as a perturbation of Hamiltonian gradient descent. 

We provide a unified and tighter analysis for these three algorithms leading to faster rates (cf.\ Tab.~\ref{fig: small table}).

\textbf{Lower bounds in optimization} date back to \citet{nemirovskyProblemComplexityMethod1983} and were popularized by \citet{nesterovIntroductoryLecturesConvex2004}. 
One issue with these results is that they are either only valid for a finite number of iterations depending on the dimension of the problem or are proven in infinite dimensional spaces. To avoid this issue, \citet{arjevaniLowerUpperBounds2016} introduced a new framework called $p$-Stationary Canonical Linear Iterative algorithms ($p$-SCLI). It encompasses methods which, applied on  quadratics, compute the next iterate as fixed linear transformation of the $p$ last iterates, for some fixed $p \geq 1$. We build on and extend this framework to derive lower bounds for games for 1-SCLI. Concurrenty, \citet{ibrahimLowerBoundsConditioning2019} extended the whole $p$-SCLI framework to games but excluded extrapolation methods. Note that sublinear lower bounds have been proven for saddle-point problems by \citet{nemirovskyInformationbasedComplexityLinear1992a,nemirovskiProxMethodRateConvergence2004,chenAcceleratedSchemesClass2014,ouyangLowerComplexityBounds2018}, but they are outside the scope of this paper since we focus on linear convergence bounds.

Our notation is presented in \S\ref{section: notations}. The proofs can be found in the subsequent appendix sections.

\section{Background and motivation}\label{section: game optimization}

\subsection{\textit{n}-player differentiable games}
Following \citet{balduzziMechanicsNPlayerDifferentiable2018b}, a \emph{$n$-player differentiable game} can be defined as a family of twice continuously differentiable losses $l_i: \R^{d} \rightarrow \R$ for $i = 1,\dots,n$. The parameters for player $i$ are $\omega^i \in \R^{d_i}$ and we note $\omega = (\omega^1,\dots,\omega^n) \in \R^d$ with $d = \sum_{i = 1}^n d_i$.
Ideally, we are interested in finding an \emph{unconstrained Nash equilibrium} \citep{vonneumannTheoryGamesEconomic1944}: that is to say a point $\omega^* \in \R^d$ such that \begin{equation*}
\forall i \in \{1,\dots,n\},\quad (\omega^i)^* \in \argmin_{\omega^i \in \R^{d_i}}  l_i((\omega^{-i})^*,\omega^i)\,,
\varvspace{-2mm}
\end{equation*}
where the vector $(\omega^{-i})^*$ contains all the coordinates of $\omega^*$ except the $i^{th}$ one.
Moreover, we say that a game is \emph{zero-sum} if $\sum_{i = 1}^n l_i = 0$. For instance, following \citet{meschederNumericsGANs2017,gidelNegativeMomentumImproved2018b}, the standard formulation of GANs from \citet{goodfellowGenerativeAdversarialNets2014} can be cast as a two-player zero-sum game.  The Nash equilibrium corresponds to the desired situation where the generator exactly capture the data distribution, completely confusing a perfect discriminator.

Let us now define the \emph{vector field} 
\begin{equation*}
v(\omega) = \begin{pmatrix}
\nabla_{\omega^1} l_1(\omega)\;, & \cdots &,\;
\nabla_{\omega^n} l_n(\omega)
\end{pmatrix}
\end{equation*} 
associated to a $n$-player game
and its Jacobian:
\begin{equation*}
\nabla v(\omega) =
\begin{pmatrix}
\nabla^2_{\omega^1} l_1(\omega) & \dots & \nabla_{\omega^n}\nabla_{\omega^1} l_1(\omega)\\
 \vdots & & \vdots \\ \nabla_{\omega^1}\nabla_{\omega^n} l_n(\omega) &\dots &
 \nabla^2_{\omega^n} l_n(\omega)\\
\end{pmatrix}
\,. \notag
\end{equation*} 
We say that $v$ is \emph{$L$-Lipschitz} for some $L \geq 0$ if $\| v(\omega) - v(\omega')\| \leq L \|\omega - \omega'\|$ $\forall \omega,\omega' \in \R^d$, that $v$ is \emph{$\mu$-strongly monotone} for some $\mu \geq 0$, if   $\mu\|\omega - \omega'\|^2 \leq  (v(\omega) - v(\omega'))^T(\omega - \omega')$ $\forall \omega,\omega' \in \R^d$.

A Nash equilibrium is always a \emph{stationary} point of the gradient dynamics, i.e. a point $\omega \in \R^d$ such that $v(\omega)=0$. However, as shown by \citet{adolphsLocalSaddlePoint2018,mazumdarFindingLocalNash2019a,berard2019closer}, in general, being a Nash equilibrium is neither necessary nor sufficient for being a locally stable stationary point, but if $v$ is monotone, these two notions are equivalent. Hence, in this work we focus on finding stationary points.
One important class of games is \emph{saddle-point problems}: two-player games with $l_1 = - l_2$. If $v$ is monotone, or equivalently $f$ is convex-concave, stationary points correspond to the solutions of the min-max problem
\begin{equation*}
\min_{\omega_1 \in \R^{d_1}}\max_{\omega_2 \in \R^{d_2}} l_1(\omega_1, \omega_2) \,.
\varvspace{-2mm}
\end{equation*}

 \citet{gidelNegativeMomentumImproved2018b} and \citet{balduzziMechanicsNPlayerDifferentiable2018b} mentioned two particular classes of games, which can be seen as the two opposite ends of a spectrum. As the definitions vary, we only give the intuition for these two categories. The first one is \emph{adversarial games}, where the Jacobian has eigenvalues with small real parts and large imaginary parts and the cross terms $\nabla_{\omega_i}\nabla_{\omega_j} l_j(\omega)$, for $i \neq j$, are dominant. Ex.~\ref{ex: bilinear game} gives a prime example of such game that has been heavily studied: a simple bilinear game whose Jacobian is anti-symmetric and so only has imaginary eigenvalues (see Lem.~\ref{lemma: spectrum bilinear game} in App.~\ref{subsection: app spectral analysis of eg}):
 \begin{ex}[Bilinear game]\label{ex: bilinear game}
\varvspace{-1mm}
  \begin{equation*}%
\min_{x \in \R^{m}} \max_{y \in \R^{m}} x^TA y + b^T x + c^Ty 
\varvspace{-1mm}
 \end{equation*}
 with $A \in \R^{m \times m}$ non-singular, $b \in \R^m$ and $c \in \R^m$.
 \end{ex}
 If $A$ is non-singular, there is an unique stationary point which is also the unique Nash equilibrium. The gradient method is known not to converge in such game while the proximal point and extragradient methods converge \cite{rockafellarMonotoneOperatorsProximal1976,tsengLinearConvergenceIterative1995}.
 
Bilinear games are of particular interest to us as they are seen as models of the convergence problems that arise during the training of GANs. Indeed, \citet{meschederNumericsGANs2017} showed that eigenvalues of the Jacobian of the vector field with small real parts and large imaginary parts could be at the origin of these problems. Bilinear games have pure imaginary eigenvalues and so are limiting models of this situation. Moreover, they can also be seen as a very simple type of WGAN, with the generator and the discriminator being both linear, as explained in \citet{gidelVariationalInequalityPerspective2018a,meschederWhichTrainingMethods}.
 
 The other category is \emph{cooperative games}, where the Jacobian has eigenvalues with large positive real parts and small imaginary parts and the diagonal terms $\nabla^2_{\omega_i} l_i$  are dominant. Convex minimization problems are the archetype of such games. Our hypotheses, for both the local and the global analyses, encompass these settings.

\subsection{Methods and convergence analysis}\label{subsection: background}
\textbf{Convergence theory of fixed-point iterations.}
Seeing optimization algorithms as the repeated application of some operator allows us to deduce their convergence properties from the spectrum of this operator. This point of view was presented by \cite{polyakIntroductionOptimization1987a,bertsekasNonlinearProgramming1999} and recently used by \cite{arjevaniLowerUpperBounds2016,meschederNumericsGANs2017,gidelNegativeMomentumImproved2018b}  for instance. 
The idea is that the iterates of a method $(\omega_t)_t$ are generated by a scheme of the form:
\begin{equation*}
    \omega_{t+1} = F(\omega_t)\,,\quad \forall t \geq 0
\end{equation*}
where $F : \R^d \rightarrow \R^d$ is an operator representing the method. Near a stationary point $\omega^*$, the behavior of the iterates is mainly governed by the properties of $\nabla F(\omega^*)$ as
    $F(\omega) - \omega^* \approx \nabla F(\omega^*)(\omega - \omega^*)\,.$
This is formalized by the following classical result:
\begin{thm}[\citet{polyakIntroductionOptimization1987a}]\label{thm: local convergence}
Let $F: \R^d \longrightarrow \R^d$ be continuously differentiable and let $\omega^* \in \R^d$ be a fixed point of $F$. If $\rho(\nabla F(\omega^*)) < 1$, then for $\omega_0$ in a neighborhood of $\omega^*$, the iterates $(\omega_t)_t$ defined by $\omega_{t+1} = F(\omega_t)$ for all $t \geq 0$ converge linearly to $\omega^*$ at a rate of $\bigO((\rho(\nabla F(\omega^*))+\epsilon)^t)$ for all $\epsilon > 0$.
\end{thm}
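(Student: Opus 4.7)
The plan is to convert the spectral radius condition into a genuine contraction on some neighborhood of $\omega^*$, and then run the standard Banach fixed-point argument locally. The key tool is the classical fact that for any matrix $A \in \R^{d \times d}$ and any $\epsilon > 0$ there exists a norm $\|\cdot\|_\epsilon$ on $\R^d$ whose induced operator norm satisfies $\|A\|_\epsilon \leq \rho(A) + \epsilon/2$. One way to see this is to take a Jordan decomposition $A = PJP^{-1}$ and conjugate $J$ by a diagonal rescaling $D_\eta = \diag(1, \eta, \eta^2, \dots)$ with $\eta$ small: the super-diagonal entries of the Jordan blocks are multiplied by $\eta$, so the $\ell_\infty$-induced norm of $D_\eta^{-1} J D_\eta$ is at most $\rho(A) + \epsilon/2$. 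Pulling this norm back through $P D_\eta$ gives the desired $\|\cdot\|_\epsilon$. Set $r := \rho(\nabla F(\omega^*)) + \epsilon$; by assumption $r < 1$ for all sufficiently small $\epsilon > 0$.

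Applied to $A = \nabla F(\omega^*)$, this yields $\|\nabla F(\omega^*)\|_\epsilon \leq r - \epsilon/2$. Since $\nabla F$ is continuous, there exists $\delta > 0$ such that $\|\nabla F(\omega)\|_\epsilon \leq r$ for every $\omega$ in the closed $\|\cdot\|_\epsilon$-ball $B_\delta$ of radius $\delta$ around $\omega^*$. For any $\omega \in B_\delta$, the segment $\omega^* + t(\omega - \omega^*)$, $t \in [0,1]$, lies in $B_\delta$ by convexity, so the fundamental theorem of calculus gives
\begin{equation*}
F(\omega) - \omega^* = \int_0^1 \nabla F\bigl(\omega^* + t(\omega - \omega^*)\bigr)(\omega - \omega^*)\, dt,
\end{equation*}
and taking $\|\cdot\|_\epsilon$ norms yields $\|F(\omega) - \omega^*\|_\epsilon \leq r\, \|\omega - \omega^*\|_\epsilon$. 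In particular $F(B_\delta) \subseteq B_\delta$, so the ball is invariant.

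The conclusion is then immediate by induction: for $\omega_0 \in B_\delta$, one has $\|\omega_t - \omega^*\|_\epsilon \leq r^t \|\omega_0 - \omega^*\|_\epsilon$ for all $t \geq 0$. Since all norms on $\R^d$ are equivalent, there is a constant $C > 0$ with $\|\omega_t - \omega^*\| \leq C r^t \|\omega_0 - \omega^*\|$, which is exactly the claimed $\bigO\bigl((\rho(\nabla F(\omega^*)) + \epsilon)^t\bigr)$ rate. The neighborhood of $\omega^*$ over which convergence holds is the $\|\cdot\|_\epsilon$-ball $B_\delta$; translated back to the Euclidean norm it contains a genuine open neighborhood of $\omega^*$.

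The only non-routine step is the construction of the norm $\|\cdot\|_\epsilon$ in which $\nabla F(\omega^*)$ is a strict contraction; once this is available, everything else is a standard continuity plus mean-value argument. The subtlety is that one cannot work directly in the Euclidean norm, because $\|\nabla F(\omega^*)\|_2$ may exceed $1$ even when $\rho(\nabla F(\omega^*)) < 1$ (for instance if $\nabla F(\omega^*)$ is a strongly non-normal matrix with small eigenvalues but large off-diagonal structure). This is precisely what the Jordan rescaling trick absorbs, at the cost of the extra $\epsilon$ appearing in the rate.
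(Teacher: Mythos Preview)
Your proof is correct and follows the standard argument for this classical result. The paper itself does not prove this theorem: it is stated as background and attributed to \citet{polyakIntroductionOptimization1987a}, with no proof given. That said, the key ingredient you rely on---the existence of an adapted norm in which a matrix has operator norm at most $\rho + \epsilon/2$---is exactly the tool the paper invokes (citing \citet[Prop.~A.15]{bertsekasNonlinearProgramming1999}) when proving the related but more delicate \Cref{thm: local convergence lemma with singularity} in the appendix, so your approach is fully in the spirit of the paper's techniques.
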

This theorem means that to derive a local rate of convergence for a given method, one needs only to focus on the eigenvalues of $\nabla F(\omega^*)$.
Note that if the operator $F$ is linear, there exists slightly stronger results such as \Cref{thm: global convergence for constant jacobian} in \cref{section: complement background}. %

\textbf{Gradient method.} 
Following \cite{gidelNegativeMomentumImproved2018b}, we define \textsc{GD} as the application of the operator $F_\eta (\omega) \coloneqq \omega - \eta v(\omega)$, for $\omega \in \R^d$. Thus we have:
\begin{equation}
    \omega_{t+1} = F_\eta(\omega_t) = \omega_t - \eta v(\omega_{t})\,. \tag{GD}
    \varvspace{-2mm}
\end{equation}

\textbf{Proximal point.} 
For $v$ monotone \citep{mintyMonotoneNonlinearOperators1962, rockafellarMonotoneOperatorsProximal1976}, the proximal point operator can be defined as $P_\eta(\omega) = (\Id + \eta v)^{-1}(\omega)$
and therefore can be seen as an implicit scheme:
$\omega_{t+1} = \omega_t - \eta v(\omega_{t+1})$.

\textbf{Extragradient.}
\textsc{EG} was introduced by \citet{g.m.korpelevichExtragradientMethodFinding1976} in the context of variational inequalities. Its update rule is
\begin{equation}\omega_{t+1} = \omega_t - \eta v(\omega_{t} - \eta v(\omega_t))\,.
\label{eq: extrapolation method}
\tag{EG}\end{equation}
It can be seen as an approximation of the implicit update of the proximal point method. Indeed \citet{nemirovskiProxMethodRateConvergence2004} showed a rate of $\bigO(1/t)$ for extragradient by treating it as a ``good enough" approximation of the proximal point method. To see this, fix $\omega \in \R^d$. Then $P_\eta(\omega)$ is the solution of $z = \omega - \eta v(z)$. Equivalently, $P_\eta(\omega)$ is the fixed point of 
\begin{equation}
\varphi_{\eta,\omega}: z \longmapsto \omega - \eta v(z)\,,
\end{equation}
which is a contraction for $\eta > 0$ small enough. From Picard's fixed point theorem, one gets that the proximal point operator $P_\eta(\omega)$ can be obtained as the limit of $\varphi_{\eta, \omega}^k(\omega)$ when $k$ goes to infinity. What \citet{nemirovskiProxMethodRateConvergence2004} showed is that $\varphi_{\eta, \omega}^2(\omega)$, that is to say the extragradient update, is close enough to the result of the fixed point computation to be used in place of the proximal point update without affecting the sublinear convergence speed. Our analysis of multi-step extrapolation methods will encompass all the iterates $\varphi_{\eta, \omega}^k$ and we will show that a similar phenomenon happens for linear convergence rates. 

\textbf{Optimistic gradient.} Originally introduced in the online learning literature~\citep{chiang2012online,rakhlin2013optimization} as a two-steps method, \citet{daskalakisTrainingGANsOptimism2017a} reformulated it with only one step in the unconstrained case:
\begin{equation}\label{equation: optimistic method}
    w_{t+1} = w_t - 2\eta v(w_t) + \eta v(w_{t-1})    \,.
    \tag{OG}
    \varvspace{-2mm}
\end{equation}

\textbf{Consensus optimization.} Introduced by 
\citet{meschederNumericsGANs2017} in the context of games, consensus optimization is a second-order yet efficient method, as it only uses a Hessian-vector multiplication whose cost is the same as two gradient evaluations \citep{pearlmutterFastExactMultiplication1994}.
We define the CO update as:
\begin{equation}\label{eq: co iterates}
\omega_{t+1} = \omega_t - (\alpha v(\omega_t) + \beta \nabla H(\omega_t))\, \tag{CO}
\end{equation}
where $H(\omega) = \frac{1}{2}\|v(\omega)\|^2_2$ and $\alpha, \beta > 0$ are step sizes.

\subsection{\textit{p}-SCLI framework for game optimization}\label{subsection: pSCLI framework}
In this section, we present an extension of the framework of \citet{arjevaniLowerUpperBounds2016} to derive lower bounds for game optimization (also see \S\ref{section: p-scli game optimization}). 
The idea of this framework is to see algorithms as the iterated application of an operator. 
If the vector field is linear, this transformation is linear too and so its behavior when iterated is mainly governed by its spectral radius. 
This way, showing a lower bound for a class of algorithms is reduced to lower bounding a class of spectral radii.

We consider $\mathcal{V}_d$ the set of linear vector fields $v:\R^d \rightarrow \R^d$, i.e., vector fields $v$ whose Jacobian $\nabla v$ is a constant  $d\times d$ matrix.\footnote{With a slight abuse of notation, we also denote by $\nabla v$ this matrix.}
The class of algorithms we consider is the class of \emph{$1$-Stationary Canonical Linear Iterative algorithms  ($1$-SCLI)}. Such an algorithm is defined by a mapping $\mathcal{N}: \R^{d\times d} \rightarrow \R^{d\times d}$.
The associated update rule can be defined through,
\begin{equation}\label{eq: 1-SCLI rule}
F_{\mathcal \N}(\omega) = w + \N(\nabla v)v(\omega)\,\quad \forall \omega \in \R^d\,,
\end{equation}
This form of the update rule is required by the consistency condition of \citet{arjevaniLowerUpperBounds2016} which is necessary for the algorithm to converge to stationary points, as discussed in \S\ref{section: p-scli game optimization}. Also note that $1$-SCLI are first-order methods that use only the last iterate to compute the next one. Accelerated methods such as accelerated gradient descent \citep{nesterovIntroductoryLecturesConvex2004} or the heavy ball method \citep{polyakMethodsSpeedingConvergence1964} belong in fact to the class of $2$-SCLI, which encompass methods which uses the last two iterates.

As announced above, the spectral radius of the operator gives a lower bound on the speed of convergence of the iterates of the method on affine vector fields, which is sufficient to include bilinear games, quadratics and so strongly monotone settings too.
\begin{thm}[name=\citet{arjevaniLowerUpperBounds2016}, restate=thmLowerBoundFromSpectralRadius]\label{thm: lower bound from spectral radius}
For all $v \in \mathcal{V}_d$, for almost every\footnote{For any measure absolutely continuous w.r.t.~the Lebesgue measure.} initialization point $\omega_0 \in \R^d$, if $(\omega_t)_t$ are the iterates of $F_{\N}$ starting from $\omega_0$, 
\begin{equation*}
\|\omega_t - \omega^*\| \geq \Omega(\rho(\nabla F_{\N})^t\|\omega_0 - \omega^*\|).
\varvspace{-2mm}
\end{equation*}
\end{thm}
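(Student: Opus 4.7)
I would reduce the statement to the linear-algebra fact that $\|M^t x\| = \Omega(\rho(M)^t\,\|x\|)$ for almost every $x \in \R^d$, applied to $M := \nabla F_{\N}$ and $x := \omega_0 - \omega^*$. The reduction has three steps: (i) linearize the recursion using the fact that $\nabla v$ is constant on $\mathcal V_d$, (ii) extract the spectral radius via a left eigenvector, and (iii) verify that almost every initialization has nonzero projection onto that eigenvector.

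\textbf{Step 1: Linearize the recursion.} Since $v\in\mathcal V_d$ has constant Jacobian $A := \nabla v$, we can write $v(\omega) = A(\omega - \omega^*)$, where $\omega^*$ is any stationary point. Substituting into the $1$-SCLI rule \eqref{eq: 1-SCLI rule} yields
\[
F_{\N}(\omega) - \omega^* \;=\; \bigl(I + \N(A)A\bigr)(\omega - \omega^*),
\]
so $M := \nabla F_{\N} = I + \N(A)A$ and, by induction, $\omega_t - \omega^* = M^t(\omega_0 - \omega^*)$. It then suffices to lower-bound $\|M^t(\omega_0-\omega^*)\|$.

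\textbf{Step 2: Project onto a dominant left eigenvector.} Pick $\lambda\in\C$ with $|\lambda| = \rho(M)$ and a nonzero (possibly complex) left eigenvector $u\in\C^d$ with $u^H M = \lambda u^H$. Iterating gives $u^H M^t = \lambda^t u^H$, hence
\[
|u^H M^t x| \;=\; \rho(M)^t\,|u^H x| \qquad \forall x\in\R^d.
\]
Cauchy--Schwarz in $\C^d$ then yields
\[
\|M^t x\| \;\geq\; \frac{|u^H M^t x|}{\|u\|} \;=\; \frac{\rho(M)^t}{\|u\|}\,|u^H x|.
\]

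\textbf{Step 3: Genericity of the initialization.} Write $u = u_r + i u_i$ with $u_r,u_i\in\R^d$ not both zero (since $u\neq 0$). The set
\[
S \;:=\; \{x\in\R^d:\, u^H x = 0\} \;=\; \{u_r\}^\perp \cap \{u_i\}^\perp
\]
is a real linear subspace of $\R^d$ of codimension at least $1$, so it has Lebesgue measure zero. For every $\omega_0$ outside the measure-zero set $\omega^* + S$ (and $\omega_0 \neq \omega^*$), the quantity $c(\omega_0) := |u^H(\omega_0 - \omega^*)|/\|\omega_0 - \omega^*\|$ is strictly positive, and combining Steps 1--2 gives
\[
\|\omega_t - \omega^*\| \;\geq\; \frac{c(\omega_0)}{\|u\|}\,\rho(M)^t\,\|\omega_0 - \omega^*\|,
\]
which is exactly the announced $\Omega(\rho(\nabla F_{\N})^t\|\omega_0-\omega^*\|)$ bound.

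\textbf{Main obstacle.} The only delicate point is reconciling the fact that $M$ is real with a possibly complex dominant eigenvalue $\lambda$; I would handle this by allowing $u\in\C^d$ and then using the real/imaginary-part decomposition to check that $\{u^Hx = 0\}$ still cuts out a codimension-$\geq 1$ real subspace. A secondary technicality is non-uniqueness of $\omega^*$ when $A$ is singular, but this is immaterial: fixing any one stationary point $\omega^*$ makes the affine identity $\omega_t - \omega^* = M^t(\omega_0-\omega^*)$ hold verbatim, and the rest of the argument is unchanged.
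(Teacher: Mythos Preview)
Your proof is correct and follows essentially the same route as the paper: linearize the affine recursion to $\omega_t-\omega^*=M^t(\omega_0-\omega^*)$ and then invoke a linear-algebra lower bound on $\|M^tx\|$ that holds off a measure-zero set. The only difference is that the paper cites \citet[Lemma~10]{arjevaniLowerUpperBounds2016} as a black box (which yields the slightly stronger $\|M^tx\|\ge c\,t^{m-1}\rho(M)^t\|x\|$ via Jordan chains, then discards the $t^{m-1}$ factor), whereas you prove exactly what is needed directly with a dominant left eigenvector and Cauchy--Schwarz. Your argument is thus more self-contained and elementary; the paper's cited lemma gives a marginally sharper asymptotic that is not used here.
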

\section{Revisiting \textsc{GD} for games}\label{section: revisiting}

\label{subsection: lower bound gradient}

In this section, our goal is to illustrate the precision of the spectral bounds and the complexity of the interactions between players in games. 
We first give a simplified version of the bound on the spectral radius from \citet{gidelNegativeMomentumImproved2018b} and show that their results also imply that this rate is tight. 
\begin{thm}[{restate=[name=]thmSpectralRateGradient}]\label{thm: spectral rate for gradient}
Let $\omega^*$ be a stationary point of $v$ and denote by $\sigma^*$ the spectrum of $\nabla v(\omega^*)$.
If the eigenvalues of $\nabla v(\omega^*)$ all have positive real parts, then
\varvspace{-2mm}
\begin{enumerate}[label=(\roman*)., font=\itshape]
    \item \citep{gidelNegativeMomentumImproved2018b} For $\eta=\min_{\lambda \in \sigma^*} \Re(1/\lambda)$, the spectral radius of $F_\eta$ can be upper-bounded as
    \begin{equation*}
\rho(\nabla F_\eta(\omega^*))^2 \leq 1 -  \min_{\lambda \in \sigma^*} \Re(1/\lambda) \min_{\lambda \in \sigma^*} \Re(\lambda)\,.
\end{equation*}
\varvspace{-8mm}
\item For all $\eta > 0$, the spectral radius of the gradient operator $F_\eta$ at  $\omega^*$ is lower bounded by
\begin{equation*}
\rho(\nabla F_\eta(\omega^*))^2 \geq 1 -  4\min_{\lambda \in \sigma^*} \Re(1/\lambda) \min_{\lambda \in \sigma^*} \Re(\lambda)\,.
\end{equation*}
\varvspace{-9mm}
\end{enumerate}
\end{thm}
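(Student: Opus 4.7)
The plan is to reduce both statements to a spectral problem about $\nabla F_\eta(\omega^*) = I - \eta \nabla v(\omega^*)$. Its spectrum consists of the numbers $1 - \eta\lambda$ for $\lambda \in \sigma^*$, so
\begin{equation*}
\rho(\nabla F_\eta(\omega^*))^2 = \max_{\lambda \in \sigma^*}|1-\eta\lambda|^2,
\end{equation*}
and the identity $|1-\eta\lambda|^2 = 1 - 2\eta\Re\lambda + \eta^2|\lambda|^2$ will drive everything that follows; all that remains is bookkeeping over the spectrum.

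For the upper bound (i), attributed to \citet{gidelNegativeMomentumImproved2018b}, I set $\eta^* := \min_{\lambda \in \sigma^*}\Re(1/\lambda) = \min_\lambda \Re\lambda/|\lambda|^2$. The defining inequality $\eta^*|\lambda|^2 \leq \Re\lambda$ holds for \emph{every} $\lambda \in \sigma^*$, so multiplying by $\eta^*$ yields $(\eta^*)^2|\lambda|^2 \leq \eta^*\Re\lambda$. Plugging this into the identity above gives $|1-\eta^*\lambda|^2 \leq 1 - \eta^*\Re\lambda$ term by term, and taking the maximum over $\lambda$ replaces $\Re\lambda$ by $\min_\lambda \Re\lambda$, yielding exactly the bound of (i).

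For the lower bound (ii) my strategy will be a case split on the size of $\eta$ relative to $2\eta^*$. Let $\lambda_*$ be an eigenvalue attaining $\eta^* = \Re\lambda_*/|\lambda_*|^2$ and $\lambda_\mu$ an eigenvalue attaining $\mu = \min_\lambda\Re\lambda$. \emph{If $\eta \geq 2\eta^*$}, the univariate quadratic $\eta \mapsto |1-\eta\lambda_*|^2 = 1 - 2\eta\Re\lambda_* + \eta^2|\lambda_*|^2$ takes the value $1$ at both $\eta = 0$ and $\eta = 2\Re\lambda_*/|\lambda_*|^2 = 2\eta^*$, hence is $\geq 1$ for every $\eta \geq 2\eta^*$; this gives $\rho(\nabla F_\eta(\omega^*))^2 \geq |1-\eta\lambda_*|^2 \geq 1 \geq 1 - 4\eta^*\mu$. \emph{If $0 < \eta < 2\eta^*$}, I use $\lambda_\mu$ instead: dropping the non-negative $\eta^2|\lambda_\mu|^2$ term gives $|1-\eta\lambda_\mu|^2 \geq 1 - 2\eta\mu$, and substituting $\eta < 2\eta^*$ yields $1 - 2\eta\mu > 1 - 4\eta^*\mu$, finishing the case.

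The only real subtlety I expect is accounting for where the constant $4$ comes from: it is the product of the factor $2$ in the threshold $\eta = 2\eta^*$ (the second root of $\eta\mapsto|1-\eta\lambda_*|^2-1$) and the factor $2$ in the crude estimate $|1-\eta\lambda_\mu|^2 \geq 1 - 2\eta\mu$ obtained by discarding the $\eta^2$ term. Combining (i) and (ii), the upper and lower bounds on $\rho(\nabla F_\eta(\omega^*))^2$ differ only by this constant factor of $4$ in the quantity $\eta^*\mu = \min_\lambda\Re(1/\lambda)\min_\lambda\Re(\lambda)$, which is exactly what justifies calling the spectral analysis of \textsc{GD} tight up to a universal constant.
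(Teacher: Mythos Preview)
Your proof is correct and, in fact, more self-contained than the paper's. For part (i), the paper quotes the bound $\rho^2 \leq 1 - \Re(1/\lambda_1)\,\delta$ with $\delta = \min_j |\lambda_j|^2\bigl(2\Re(1/\lambda_j)-\Re(1/\lambda_1)\bigr)$ from \citet{gidelNegativeMomentumImproved2018b} and then simplifies $\delta \geq \min_j \Re\lambda_j$; you bypass this detour by directly using the defining inequality $\eta^*|\lambda|^2 \leq \Re\lambda$ to get $(\eta^*)^2|\lambda|^2 \leq \eta^*\Re\lambda$, which is cleaner. For part (ii), the paper again invokes an external result of \citet{gidelNegativeMomentumImproved2018b} stating that the \emph{optimal} step-size lies in $[\eta^*, 2\eta^*]$, and then bounds the spectral radius at that optimum; your case split on $\eta \gtrless 2\eta^*$ effectively reproves (and uses) the relevant half of that fact from scratch --- the observation that for $\eta \geq 2\eta^*$ the spectral radius already exceeds $1$ shows any minimizer must satisfy $\eta \leq 2\eta^*$. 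The payoff of your route is that the whole theorem is proved from first principles without black-boxing the cited result; the paper's route is shorter on the page because it outsources the location of the optimal step-size.
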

This result is stronger than what we need for a standard lower bound: using \Cref{thm: lower bound from spectral radius}, this yields a lower bound on the convergence of the iterates for all games with affine vector fields.

\label{subsection: comparison}
 We then consider a saddle-point problem, and under some assumptions presented below, one can interpret the spectral rate of the gradient method mentioned earlier in terms of  the standard strong convexity and Lipschitz-smoothness constants. There are several cases, but one of them is of special interest to us as it demonstrates the precision of spectral bounds. %
\begin{ex}[Highly adversarial saddle-point problem]\label{ex: highly adversarial game}%
Consider $\min_{x \in \R^m}\max_{y \in \R^m} f(x, y)$ with $f$ twice differentiable such that
\begin{enumerate}[label=(\roman*)., font=\itshape]
    \item $f$ satisfies, with $\mu_1,\mu_2$ and $\mu_{12}$ non-negative,
    \begin{align*}
&\mu_1 I \preccurlyeq  \nabla_x^2 f \preccurlyeq L_1 I \,,
\quad  \mu_2 I  \preccurlyeq  -\nabla_y^2 f \preccurlyeq L_2 I \\ 
&\mu_{12}^2 I \preccurlyeq  (\nabla_x \nabla_y f)^T(\nabla_x \nabla_y f) \preccurlyeq L_{12}^2 I\,,
\end{align*}
such that $\mu_{12} > 2\max(L_1-\mu_2, L_2-\mu_1)$.
\item There exists a stationary point $\omega^* = (x^*, y^*)$ and at this point, $\nabla_y^2 f(\omega^*)$ and $\nabla_x \nabla_y f (\omega^*)$ commute
and $\nabla_x^2 f(\omega^*)$, $\nabla_y^2 f(\omega^*)$ and $(\nabla_x \nabla_y f (\omega^*))^T(\nabla_x \nabla_y f (\omega^*))$ commute.
\end{enumerate}
\end{ex}

Assumption \textit{(i)} corresponds to a highly adversarial setting as the coupling (represented by the cross derivatives) is much bigger than the Hessians of each player. 
Assumption \textit{(ii)} is a technical assumption needed to compute a precise bound on the spectral radius and holds if, for instance, the objective is separable, i.e.
$f(x, y) = \sum_{i = 1}^m f_i(x_i,y_i)$.
Using these assumptions,  we can upper bound the rate of \Cref{thm: spectral rate for gradient} as follows:
\begin{cor}[{restate=[name=]corInterpretedRateOfGradient}]\label{cor: interpreted rate of gradient}
Under the assumptions of \Cref{thm: spectral rate for gradient} and Ex.~\ref{ex: highly adversarial game},
\varvspace{-2mm}
\begin{equation}\label{eq: interpreted rate of gradient}
\rho(\nabla F_\eta(\omega^*))^2 \leq 1 - \tfrac{1}{4}\tfrac{(\mu_1 + \mu_2)^2}{L_{12}^2 + L_1L_2} \,.
\varvspace{-2mm}
\end{equation}
\end{cor}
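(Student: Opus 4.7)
The plan is to compute the spectrum of $\nabla v(\omega^*)$ exactly under the commutation hypothesis of Ex.~\ref{ex: highly adversarial game}, and then feed the bounds into part \textit{(i)} of \Cref{thm: spectral rate for gradient}. Write the Jacobian at the stationary point in block form as
\[
\nabla v(\omega^*)=\begin{pmatrix} A & B \\ -B^T & D\end{pmatrix},\quad A=\nabla_x^2f,\ D=-\nabla_y^2f,\ B=\nabla_y\nabla_x f,
\]
so that by assumption \textit{(i)} of Ex.~\ref{ex: highly adversarial game} we have $\mu_1 I\preccurlyeq A\preccurlyeq L_1 I$, $\mu_2 I\preccurlyeq D\preccurlyeq L_2 I$, and $\mu_{12}^2 I\preccurlyeq BB^T\preccurlyeq L_{12}^2 I$, and by assumption \textit{(ii)} the matrices $A$, $D$, $B^T$ and $BB^T$ pairwise commute as needed below.

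First, I compute the characteristic polynomial via a Schur complement. Since $D$ and $B^T$ commute, so do $D-\lambda I$ and $B$, hence
\[
\det(\nabla v(\omega^*)-\lambda I)=\det\bigl((A-\lambda I)(D-\lambda I)+BB^T\bigr).
\]
Because $A$, $D$ and $BB^T$ are symmetric and pairwise commute, they are simultaneously orthogonally diagonalizable; let $(a_i,d_i,b_i^2)$ be the joint eigenvalues, with $a_i\in[\mu_1,L_1]$, $d_i\in[\mu_2,L_2]$, $b_i^2\in[\mu_{12}^2,L_{12}^2]$. The characteristic polynomial factors as $\prod_i\bigl[\lambda^2-(a_i+d_i)\lambda+(a_id_i+b_i^2)\bigr]$, so the eigenvalues come in pairs $\lambda^\pm_i=\tfrac{a_i+d_i}{2}\pm\tfrac{1}{2}\sqrt{(a_i-d_i)^2-4b_i^2}$.

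Next I use the highly adversarial hypothesis $\mu_{12}>2\max(L_1-\mu_2,L_2-\mu_1)$. Since $|a_i-d_i|\le\max(L_1-\mu_2,L_2-\mu_1)<\mu_{12}/2\le b_i/2$, the discriminant is strictly negative for every $i$, so every eigenvalue is genuinely complex with
\[
\Re(\lambda^\pm_i)=\tfrac{a_i+d_i}{2},\qquad |\lambda^\pm_i|^2=\tfrac{(a_i+d_i)^2+4b_i^2-(a_i-d_i)^2}{4}=a_id_i+b_i^2.
\]
In particular every eigenvalue has strictly positive real part, so \Cref{thm: spectral rate for gradient}\textit{(i)} applies. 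The bounds are then immediate: $\Re(\lambda)\ge(\mu_1+\mu_2)/2$ and $|\lambda|^2\le L_1L_2+L_{12}^2$. Using $\Re(1/\lambda)=\Re(\lambda)/|\lambda|^2$, we obtain
\[
\min_{\lambda\in\sigma^*}\Re(1/\lambda)\cdot\min_{\lambda\in\sigma^*}\Re(\lambda)\ \ge\ \frac{(\mu_1+\mu_2)/2}{L_1L_2+L_{12}^2}\cdot\frac{\mu_1+\mu_2}{2}\ =\ \tfrac{1}{4}\tfrac{(\mu_1+\mu_2)^2}{L_{12}^2+L_1L_2},
\]
and plugging this into \Cref{thm: spectral rate for gradient}\textit{(i)} yields \eqref{eq: interpreted rate of gradient}.

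\paragraph{Expected obstacle.} The only delicate point is justifying the factorization of $\det(\nabla v(\omega^*)-\lambda I)$: one needs the Schur-complement identity (which requires $D-\lambda I$ to commute with $B$ and to be inverted along a dense set of $\lambda$, with the identity extended by polynomial continuation to all $\lambda$), together with the simultaneous orthogonal diagonalization of $A$, $D$, $BB^T$. Both follow from the commutation hypothesis \textit{(ii)}, but this is precisely where that technical assumption is used; the rest of the argument is a direct quadratic-formula computation.
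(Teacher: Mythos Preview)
Your proposal is correct and follows essentially the same route as the paper. The paper factors the argument through an appendix lemma (computing the characteristic polynomial via the block-determinant identity under commutation) and an appendix theorem (distinguishing the complex-root case $(\alpha_i-\beta_i)^2<4\sigma_i^2$ and reading off $\Re\lambda\ge\tfrac{\mu_1+\mu_2}{2}$, $|\lambda|^2\le L_1L_2+L_{12}^2$), whereas you carry out the identical computation inline; the final plug-in to \Cref{thm: spectral rate for gradient}\textit{(i)} via $\Re(1/\lambda)=\Re(\lambda)/|\lambda|^2$ is the same in both.
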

What is surprising is that, in some regimes, this result induces faster local convergence rates than the existing upper-bound for \textsc{EG}~\citep{tsengLinearConvergenceIterative1995}:
\begin{equation}\label{eq: rate eg litterature}
1 - \tfrac{\min(\mu_1,\mu_2)}{4L_{max}}\;\;
\text{where} \;\; L_{max} = \max(L_1, L_2, L_{12}) \,. 
\end{equation} 
If, say,  $\mu_2$ goes to zero, that is to say the game becomes unbalanced, the rate of \textsc{EG} goes to~1 while the one of \cref{eq: interpreted rate of gradient} stays bounded by a constant which is strictly less than 1. Indeed, the rate of \Cref{cor: interpreted rate of gradient} involves the arithmetic mean of $\mu_1$ and $\mu_2$, which is roughly the maximum of them, while \cref{eq: rate eg litterature} makes only the minimum of the two appear. This adaptivity to the best strong convexity constant is not present in the standard convergence rates of the \textsc{EG} method.
We remedy this situation with a new analysis of \textsc{EG} in the following section.
\section{Spectral analysis of multi-step \textsc{EG}}\label{section: convergence analysis}
\label{subsection: spectral analysis of eg}
In this section, we study the local dynamics of \textsc{EG} and, more generally, of extrapolation methods. Define a \emph{$k$-extrapolation method} ($k$-\textsc{EG)} by the operator
\begin{equation}\label{eq: k-extrapolation methods}
F_{k,\eta}: \omega \mapsto \varphi^k_{\eta,\omega}(\omega)\;\;\text{with}\;\; \varphi_{\eta,\omega}: z \mapsto \omega - \eta v(z)\,.
\end{equation}

We are essentially considering all the iterates of the fixed point computation discussed in \cref{subsection: background}. Note that $F_{1,\eta}$ is \textsc{GD} while $F_{2,\eta}$ is \textsc{EG}. %
We aim at studying the local behavior of these methods at stationary points of the gradient dynamics, so fix $\omega^*$ s.t. $v(\omega^*) = 0$ and let $\sigma^* = \Sp \nabla v(\omega^*)$. We compute the spectra of these operators at this point and this immediately yields the spectral radius on the proximal point operator:
\begin{lemma}[{restate=[name=]lemmaSpectraOperators}]\label{lemma: spectra operators} 
The spectra of the $k$-extrapolation operator and the proximal point operator are given by:
\begin{align*}
&\Sp \nabla F_{\eta, k}(\omega^*) = \big\{ \textstyle\sum_{j = 0}^k (-\eta \lambda)^j\ |\ \lambda \in \sigma^*\big\} \\
& \text{and}\ \ 
\Sp \nabla P_\eta(\omega^*) = \left\{ (1 + \eta \lambda)^{-1}\ |\ \lambda \in \sigma^*\right\}\,.
\end{align*}
Hence, for all $\eta > 0$, the spectral radius of the operator of the proximal point method is equal to: \begin{equation}\label{eq: rate proximal}
\rho(\nabla P_{\eta}(\omega^*))^2 = 1 - \min_{\lambda \in \sigma^*} \tfrac{2\eta \Re \lambda + \eta^2|\lambda|^2}{|1+\eta \lambda|^2}\,.
\varvspace{-2mm}
\end{equation}
\end{lemma}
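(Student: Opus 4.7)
The plan is a direct calculation: we show that both Jacobians at $\omega^*$ are explicit matrix functions of $A \coloneqq \nabla v(\omega^*)$, then invoke the spectral mapping theorem (which for polynomials and rational functions in $A$ is immediate, since any eigenvector of $A$ is an eigenvector of $p(A)$ with eigenvalue $p(\lambda)$, or via the Jordan form).

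\textbf{Step 1: Jacobian of $F_{k,\eta}$.} I would define iteratively $z_0(\omega) = \omega$ and $z_{j+1}(\omega) = \varphi_{\eta,\omega}(z_j(\omega)) = \omega - \eta\, v(z_j(\omega))$, so that $F_{k,\eta}(\omega) = z_k(\omega)$. Since $v(\omega^*) = 0$, a one-line induction gives $z_j(\omega^*) = \omega^*$ for every $j$. Differentiating the recursion at $\omega^*$ and writing $M_j \coloneqq \nabla_\omega z_j(\omega^*)$ yields $M_0 = I$ and $M_{j+1} = I - \eta A M_j$. A second induction then produces $M_k = \sum_{j=0}^{k}(-\eta A)^j$, so $\nabla F_{k,\eta}(\omega^*) = \sum_{j=0}^{k}(-\eta A)^j$ and its spectrum is the stated set by the spectral mapping theorem.

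\textbf{Step 2: Jacobian and spectrum of $P_\eta$.} Since $v(\omega^*)=0$ we have $P_\eta(\omega^*)=\omega^*$, and $\nabla(\Id + \eta v)(\omega^*) = I + \eta A$. The inverse function theorem therefore gives $\nabla P_\eta(\omega^*) = (I + \eta A)^{-1}$; invertibility of $I + \eta A$ is implicit (it follows from monotonicity of $v$, or more generally from $-1/\eta \notin \sigma^*$, which is necessary for $P_\eta$ to be well defined). The spectrum is then $\{(1+\eta\lambda)^{-1} : \lambda \in \sigma^*\}$.

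\textbf{Step 3: Spectral radius of $P_\eta$.} We have $\rho(\nabla P_\eta(\omega^*))^2 = \max_{\lambda \in \sigma^*} |1+\eta\lambda|^{-2}$. Expanding $|1+\eta\lambda|^2 = 1 + 2\eta\Re\lambda + \eta^2|\lambda|^2$ and rewriting
\[
\tfrac{1}{|1+\eta\lambda|^2} \;=\; 1 - \tfrac{2\eta\Re\lambda + \eta^2|\lambda|^2}{|1+\eta\lambda|^2}
\]
turns the outer $\max$ into the $\min$ appearing in \eqref{eq: rate proximal}. There is no real obstacle: everything reduces to the recursion $M_{j+1} = I - \eta A M_j$ and the spectral mapping theorem. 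The only points that need a second look are the base case of the telescoping sum (the index runs through $j=0,\ldots,k$, not $k-1$) and the justification that $I+\eta A$ is invertible so that the inverse function theorem applies in Step 2.
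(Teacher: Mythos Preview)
Your proof is correct and follows essentially the same route as the paper: compute $\nabla F_{k,\eta}(\omega^*)=\sum_{j=0}^k(-\eta A)^j$ by induction on the recursion, apply the spectral mapping theorem, then use the inverse function theorem for $P_\eta$ and expand $|1+\eta\lambda|^{-2}$. The only organizational difference is that the paper factors the Jacobian computation into a separate lemma giving $\nabla_z\varphi_\omega^k$ and $\nabla_\omega\varphi_\omega^k$ individually (reused later for the lower bounds), whereas you differentiate the one-variable map $\omega\mapsto z_k(\omega)$ directly; the induction and the result are the same.
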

Again, this shows that a $k$-\textsc{EG} is essentially an approximation of proximal point for small step sizes as $(1 + \eta \lambda)^{-1} = \sum_{j = 0}^k (-\eta \lambda)^j + \bigO\left(|\eta \lambda|^{k+1}\right)$. This could suggest that increasing the number of extrapolations might yield better methods but we will actually see that $k=2$ is enough to achieve a similar rate to proximal. %
We then bound the spectral radius of $\nabla F_{\eta, k}(\omega^*)$:
\begin{thm}[{restate=[name=]thmSpectralRateExtragradient}]\label{thm: rate of n-extrapolation}
Let $\sigma^* = \Sp \nabla v(\omega^*)$.
If the eigenvalues of $\nabla v(\omega^*)$ all have non-negative real parts, the spectral radius of the $k$-extrapolation method for $k \geq 2$ satisfies:
\begin{equation}\label{eq: first rate eg}
\rho(\nabla F_{\eta,k}(\omega^*))^2 \leq 1 - \min_{\lambda \in \sigma^*} \tfrac{2\eta \Re \lambda + \tfrac{7}{16}\eta^2|\lambda|^2}{|1+\eta \lambda|^2}\,,
\varvspace{-2mm}
\end{equation}
$\forall \eta \leq \frac{1}{4^{\frac{1}{k-1}}}\frac{1}{\max_{\lambda \in \sigma^*}|\lambda|}$.
For $\eta = (4\max_{\lambda \in \sigma^*}|\lambda|)^{-1}$, this can be simplified as (noting $\rho := \rho(\nabla F_{\eta,k}(\omega^*))$):
\begin{equation}\label{eq: second rate eg}
\rho^2 \leq 1 - \tfrac{1}{4}\left(\tfrac{\min_{\lambda \in \sigma^*} \Re \lambda}{\max_{\lambda \in \sigma^*} |\lambda|}+\tfrac{1}{16}\tfrac{\min_{\lambda \in \sigma^*}|\lambda|^2}{\max_{\lambda \in \sigma^*}|\lambda|^2}\right)\,.
\varvspace{-2mm}
\end{equation}
\end{thm}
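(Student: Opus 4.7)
\Cref{lemma: spectra operators} identifies each eigenvalue of $\nabla F_{\eta,k}(\omega^*)$ with the partial geometric sum $\mu_k(\lambda) = \sum_{j=0}^{k}(-\eta\lambda)^j$ for some $\lambda\in\sigma^*$. The first step is to collapse this to closed form,
\[
\mu_k(\lambda) \;=\; \frac{1-(-\eta\lambda)^{k+1}}{1+\eta\lambda}\,,
\]
which exposes the relationship to the proximal eigenvalue $(1+\eta\lambda)^{-1}$: $k$-extrapolation differs from the proximal-point update only by the multiplicative correction $1-(-\eta\lambda)^{k+1}$, and the whole argument reduces to showing that this correction is small enough not to spoil the proximal rate already computed in the same lemma.

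For the first inequality, I would bound the numerator via $|1-z|^2 \leq 1+2|z|+|z|^2$ applied to $z = (-\eta\lambda)^{k+1}$, giving $|1-(-\eta\lambda)^{k+1}|^2 \leq 1 + 2|\eta\lambda|^{k+1} + |\eta\lambda|^{2(k+1)}$, and then show the remainder $2|\eta\lambda|^{k+1}+|\eta\lambda|^{2(k+1)}$ is at most $\tfrac{9}{16}\eta^2|\lambda|^2$. Writing $a = \eta|\lambda|$, this reduces to checking $2a^{k-1}+a^{2k} \leq \tfrac{9}{16}$ whenever $a \leq 4^{-1/(k-1)}$: the first term is $\leq \tfrac12$ directly from the hypothesis, and for the second I would write $a^{2k} = (a^{k-1})^{2k/(k-1)}$ and use $2k/(k-1) \geq 2$ for $k \geq 2$ to bound it by $4^{-2} = \tfrac{1}{16}$. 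Plugging this back in and expanding $|1+\eta\lambda|^2 = 1+2\eta\Re\lambda+\eta^2|\lambda|^2$ rearranges to the stated first rate, and taking the maximum over $\lambda\in\sigma^*$ finishes.

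For the simplified bound at $\eta = (4\max_\lambda|\lambda|)^{-1}$, I would crudely upper bound $|1+\eta\lambda|^2 \leq (1+\eta|\lambda|)^2 \leq 25/16$ (using $\eta|\lambda| \leq 1/4$ and $\Re\lambda \geq 0$) and split the numerator $\min_\lambda(2\eta\Re\lambda+\tfrac{7}{16}\eta^2|\lambda|^2) \geq 2\eta\min_\lambda\Re\lambda + \tfrac{7}{16}\eta^2\min_\lambda|\lambda|^2$. Substituting the value of $\eta$ and checking $\tfrac{16}{25}\cdot\tfrac{1}{2} \geq \tfrac{1}{4}$ and $\tfrac{16}{25}\cdot\tfrac{7}{256} \geq \tfrac{1}{64}$ yields the claimed constants $\tfrac14$ and $\tfrac{1}{64}$.

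The main obstacle is the uniform-in-$k$ tail bound $2a^{k-1}+a^{2k}\leq 9/16$ under the $k$-dependent step-size condition $a \leq 4^{-1/(k-1)}$: as $k$ grows, the step-size budget widens while the exponents on $a$ shift in opposite directions, so one has to check that both tail terms remain within the same $9/16$ budget uniformly in $k$. Everything else is mechanical manipulation of moduli of complex numbers under the assumption $\Re\lambda \geq 0$, which keeps $|1+\eta\lambda|^2 \geq 1$ and prevents denominators from collapsing.
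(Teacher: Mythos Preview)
Your proposal is correct and follows essentially the same route as the paper. The paper expands $|1-(-\eta\lambda)^{k+1}|^2$ exactly as $1+2(-1)^k\eta^{k+1}\Re(\lambda^{k+1})+\eta^{2(k+1)}|\lambda|^{2(k+1)}$ and then bounds $|\Re(\lambda^{k+1})|\leq|\lambda|^{k+1}$, which lands on precisely your triangle-inequality bound $|1-z|^2\leq 1+2|z|+|z|^2$; after factoring out $\eta^2|\lambda|^2$, both arguments reduce to the same inequality $2a^{k-1}+a^{2k}\leq 9/16$ under $a^{k-1}\leq 1/4$, and the simplification at $\eta=(4\max|\lambda|)^{-1}$ via $|1+\eta\lambda|^2\leq 25/16$ and the constant checks $8/25\geq 1/4$, $7/400\geq 1/64$ are identical in spirit.
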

The zone of convergence of extragradient as provided by this theorem is illustrated in Fig.~\ref{fig: eg zone convergence}.

The bound of \cref{eq: second rate eg} involves two terms: the first term can be seen as the strong monotonicity of the problem, which is predominant in convex minimization problems, while the second shows that even in the absence of it, this method still converges, such as in bilinear games.  Furthermore, in situation in between, this bound shows that the extragradient method exploits the biggest of these quantities as they appear as a sum as illustrated by the following simple example.
\begin{ex}[``In between" example]\label{ex: in between example}
\begin{equation*}
\min_{x \in \R} \max_{y \in \R} \tfrac{\epsilon}{2} (x^2 - y^2) + xy \,,\quad \text{for }1 \geq \epsilon > 0
\varvspace{-2mm}
\end{equation*}
\end{ex}
Though for $\epsilon$ close to zero, the dynamics will behave as such, this is not a purely bilinear game.  The associated vector field is only $\epsilon$-strongly monotone and convergence guarantees relying only on strong monotonicity would give a rate of roughly $1 - \epsilon/4$. However \cref{thm: rate of n-extrapolation} yields a convergence rate of roughly $1 - 1/64$ for extragradient.

\newcommand{\spectrumcolor}{magenta}
\begin{SCfigure}
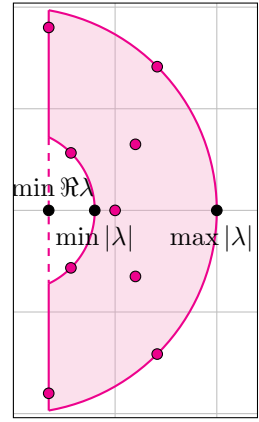

\hspace{-4mm}
\drawSpectrum[.48\textwidth]
\hspace{-2mm}
\caption{\small 
Illustration of the three quantities involved in \cref{thm: rate of n-extrapolation}. The magenta dots are an example of eigenvalues belonging to $\sigma^*$. Note that $\sigma^*$ is always symmetric with respect to the real axis because the Jacobian is a real matrix (and thus non-real eigenvalues are complex conjugates).
Note how $\min \Re\lambda$ may be significantly smaller that $\min |\lambda|$.
\varvspace{-3mm}
}\label{fig: eg zone convergence}
\varvspace{-3mm}
\end{SCfigure}

\textbf{Similarity to the proximal point method.}
First, note that the bound \cref{eq: first rate eg} is surprisingly close to the one of the proximal method \cref{eq: rate proximal}. 
However, one can wonder why the proximal point converges with any step size --- and so arbitrarily fast --- while it is not the case for the $k$-EG, even as $k$ goes to infinity. The reason for this difference is that for the fixed point iterates to converge to the proximal point operator, one needs $\varphi_{\eta, \omega}$ to be a contraction and so to have $\eta$ small enough, at least $\eta < (\max_{\lambda \in \sigma^*}|\lambda|)^{-1}$ for local guarantees. This explains the bound on the step size for 
\ifcompress
$k$-\textsc{EG} .
\else
extrapolation methods.
\fi

\textbf{Comparison with the gradient method.}
We can now compare this result for \textsc{EG} with the convergence rate of the gradient method \cref{thm: spectral rate for gradient} which was shown to be tight.
In general $\min_{\lambda \in \sigma^*} \Re(1/\lambda) \leq (\max_{\lambda \in \sigma^*} |\lambda|)^{-1}$ and, for adversarial games, the first term can be arbitrarily smaller than the second one. %
Hence, in this setting which is of special interest to us, \textsc{EG} has a much faster convergence speed than \textsc{GD}. %

\textbf{Recovery of known rates.}
If $v$ is $\mu$-strongly monotone and $L$-Lipschitz, this bound is at least as precise as the standard one $1 - \mu/(4L)$ as $\mu$ lower bounds the real part of the eigenvalues of the Jacobian, and $L$ upper bounds their magnitude, as shown in \Cref{lemma: relation global quantities to spectrum} in \S\ref{app:proof}. We empirically evaluate the improvement over this standard rate on synthetic examples in \cref{section: app improvement of rate}. On the other hand, \Cref{thm: rate of n-extrapolation} also recovers the standard rates for the bilinear problem,\footnote{Note that by exploiting the special structure of the bilinear game and the fact that $k=2$, one could derive a better constant in the rate. Moreover, our current spectral tools cannot handle the singularity which arises if the two players have a different number of parameters. We provide sharper results to handle this difficulty in \cref{section: app handling singularity}.} as shown below:
\begin{cor}[name=Bilinear game, restate=corRateEGBilinearGame]\label{cor: rate of eg for bilinear game}
Consider Ex.~\ref{ex: bilinear game}. The iterates of the $k$-extrapolation method with $k\geq 2$ converge globally to $\omega^*$ at a linear rate of $\bigO\big(\big(1 - \frac{1}{64}\frac{\sigma_{min}(A)^2}{{\sigma_{max}(A)^2}}\big)^t\big)$.
\end{cor}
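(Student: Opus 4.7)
The strategy is to directly invoke the spectral bound of Theorem~\ref{thm: rate of n-extrapolation} for the bilinear game, then promote the resulting local rate to a global one by exploiting the affinity of the vector field.

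\textbf{Step 1 (spectrum).} For Ex.~\ref{ex: bilinear game}, the vector field $v(x,y) = (Ay+b,\, -A^T x - c)$ is affine and its Jacobian is the constant block anti-diagonal matrix $\begin{pmatrix} 0 & A \\ -A^T & 0 \end{pmatrix}$. By Lem.~\ref{lemma: spectrum bilinear game}, its spectrum is $\{\pm i\, \sigma_j(A) : j = 1, \ldots, m\}$, so all eigenvalues are purely imaginary and, since $A$ is nonsingular, are non-zero with moduli in $[\sigma_{\min}(A), \sigma_{\max}(A)]$.

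\textbf{Step 2 (apply the bound).} The hypothesis of Theorem~\ref{thm: rate of n-extrapolation}---non-negative real parts---holds (they are exactly zero). Reading off $\min_{\lambda \in \sigma^*} \Re \lambda = 0$, $\min_{\lambda \in \sigma^*} |\lambda| = \sigma_{\min}(A)$ and $\max_{\lambda \in \sigma^*} |\lambda| = \sigma_{\max}(A)$, and choosing $\eta = (4\sigma_{\max}(A))^{-1}$, the simplified bound \cref{eq: second rate eg} directly gives
$$\rho(\nabla F_{k,\eta}(\omega^*))^2 \;\leq\; 1 - \tfrac{1}{64}\tfrac{\sigma_{\min}(A)^2}{\sigma_{\max}(A)^2}.$$
The step-size admissibility constraint $\eta \leq 4^{-1/(k-1)}/\max_{\lambda}|\lambda|$ from Theorem~\ref{thm: rate of n-extrapolation} is automatically satisfied for all $k \geq 2$ at this choice, since $4^{-1/(k-1)} \geq 1/4$.

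\textbf{Step 3 (local to global).} Since $v$ is affine, $F_{k,\eta}$ is itself affine with constant Jacobian $\nabla F_{k,\eta}$, and so $\omega_{t+1} - \omega^* = \nabla F_{k,\eta}\,(\omega_t - \omega^*)$ holds globally in $\R^d$. A standard spectral-norm argument (Thm.~\ref{thm: global convergence for constant jacobian} in \S\ref{section: complement background}) then upgrades the spectral-radius estimate of Step~2 to the global rate $\|\omega_t - \omega^*\| = \bigO(\rho^t\, \|\omega_0 - \omega^*\|)$, yielding exactly the stated $\bigO\bigl(\bigl(1 - \tfrac{1}{64}\tfrac{\sigma_{\min}(A)^2}{\sigma_{\max}(A)^2}\bigr)^t\bigr)$ rate after taking square roots.

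\textbf{Main obstacle.} The only real subtlety is passing from a spectral-radius inequality to a genuine global convergence statement: the elementary Polyak-style result of Theorem~\ref{thm: local convergence} loses an arbitrary $\epsilon$ and only applies in a neighborhood, whereas the corollary asserts a tight global rate. This is precisely where the affine structure of $v$ is essential---it turns the scheme into a linear recurrence whose decay is controlled by $\rho(\nabla F_{k,\eta})$ via a Gelfand/Jordan-form argument, absorbing the Jordan blocks into the $\bigO$ constant and dispensing with the $+\epsilon$. The other potential pitfall, namely a singular $\nabla v$ when players have unequal dimensions (as noted in the paper's footnote), is ruled out here by the square, nonsingular assumption on $A$.
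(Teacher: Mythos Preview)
Your proof is correct and follows essentially the same route as the paper's: compute the spectrum of the constant Jacobian via Lem.~\ref{lemma: spectrum bilinear game}, plug the resulting $\min/\max |\lambda|$ and $\min \Re\lambda = 0$ into the simplified bound \cref{eq: second rate eg} of Thm.~\ref{thm: rate of n-extrapolation}, and then invoke Thm.~\ref{thm: global convergence for constant jacobian} to turn the spectral-radius bound into a global linear rate thanks to the affinity of $v$. Your additional explicit check of the step-size admissibility and your discussion of why affinity is what allows dropping the $+\epsilon$ are both accurate elaborations of points the paper leaves implicit.
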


Note that this rate is similar %
to the one derived by \citet{gidelNegativeMomentumImproved2018b} for  alternating gradient descent with negative momentum. 
This raises the question of whether general acceleration exists for games, as we would expect the quantity playing the role of the condition number in \Cref{cor: rate of eg for bilinear game} to appear without the square in the convergence rate of a method using momentum.

Finally it is also worth mentioning that the bound of \Cref{thm: rate of n-extrapolation} also displays the adaptivity discussed in \S\ref{subsection: comparison}. Hence, the bound of \Cref{thm: rate of n-extrapolation} can be arbitrarily better than the rate \cref{eq: rate eg litterature} for \textsc{EG} from the literature and also better than the global convergence rate we prove below.%

\textbf{Lower bounds for extrapolation
methods.}\label{subsection: lower bounds n-extra}
We now show that the rates we proved for \textsc{EG} are tight and optimal by deriving lower bounds of convergence for general extrapolation methods.
As described in \cref{subsection: pSCLI framework}, a 1-SCLI method is parametrized by a polynomial $\N$. We consider the class of methods where $\N$ is any polynomial of degree at most $k - 1$, and we will derive lower bounds for this class. This class is large enough to include all the $k'$-extrapolation methods for $k' \leq k$  with possibly different step sizes for each extrapolation step (see \S\ref{subsection: app lower bounds n-extra} for more examples).

Our main result is that no method of this class can significantly beat the convergence speed of \textsc{EG} of \Cref{thm: rate of n-extrapolation} and \Cref{thm: global rate eg}. 
We proceed in two steps: for each of the two terms of these bounds, we provide an example matching it up to a factor. In $(i)$ of the following theorem, we give an example of convex optimization problem which matches the real part, or strong monotonicity, term. Note that this example is already an extension of \citet{arjevaniLowerUpperBounds2016} as the authors only considered constant $\N$. Next, in $(ii)$, we match the other term with a bilinear game example. %
\begin{thm}[{restate=[name=]thmLowerBoundConvex}]\label{thm: lower bound convex}
Let $0 < \mu, \gamma < L$. 
$(i)$ If $d -2 \geq k \geq 3 $, there exists $v \in \mathcal{V}_d$ with a symmetric positive Jacobian whose spectrum is in $[\mu, L]$, such that for any $\N$real polynomial of degree at most $k - 1$, 
\ifcompress
$\rho(F_{\N}) \geq 1 -  \frac{4k^3}{\pi}\frac{\mu}{L}\,.$
\else
$$\rho(F_{\N}) \geq 1 -  \frac{4k^3}{\pi}\frac{\mu}{L}\,.$$
\fi

$(ii)$ If ${d}/{2} - 2 \geq k/2 \geq 3$ and $d$ is even, there exists $v \in \mathcal{V}_d$ $L$-Lipschitz with $\min_{\lambda \in \Sp \nabla v} |\lambda| = \sigma_{min}(\nabla v) \geq \gamma$ corresponding to a bilinear game of \Cref{ex: bilinear game} with $ m = d/2$, such  that, for any $\N$real polynomial of degree at most $k - 1$,
\ifcompress
$\rho(F_{\N}) \geq 1 - \frac{k^3}{2\pi} \frac{\gamma^2}{L^2}\,.$
\else
$$\rho(F_{\N}) \geq 1 - \frac{k^3}{2\pi} \frac{\gamma^2}{L^2}\,.$$
\fi
\end{thm}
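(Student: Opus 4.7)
Both parts reduce to the same extremal polynomial problem via the following observation. For an affine vector field $v$, the $1$-SCLI update~\cref{eq: 1-SCLI rule} is itself affine, with Jacobian
\begin{equation*}
\nabla F_{\N} \;=\; I + \N(\nabla v)\, \nabla v \;=\; p(\nabla v)\,, \qquad p(\lambda) := 1 + \lambda\, \N(\lambda),
\end{equation*}
where $p$ inherits real coefficients from $\N$ and satisfies $p(0) = 1$ and $\deg p \leq k$. When $\nabla v$ is diagonalisable over $\C$, the spectral mapping theorem gives
\begin{equation*}
\rho(\nabla F_\N) \;=\; \max_{\lambda \in \Sp \nabla v}\, |p(\lambda)|.
\end{equation*}
Hence it suffices, for each regime, to exhibit a vector field in the prescribed class whose Jacobian spectrum forces this maximum to stay close to~$1$.

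\textbf{Part (i).} I would take $v(\omega) = D\omega$ with $D = \diag(\lambda_1, \ldots, \lambda_d)$ and the $\lambda_i$'s chosen in $[\mu, L]$: this $v$ is linear with symmetric positive Jacobian, and its spectrum lies in $[\mu, L]$. The core analytic tool is the classical Chebyshev extremal theorem, which says that among real polynomials $p$ of degree $\leq k$ with $p(0) = 1$,
\begin{equation*}
\min_{p} \, \max_{\lambda \in [\mu, L]}\, |p(\lambda)| \;=\; \frac{1}{T_k\!\left(\tfrac{L+\mu}{L-\mu}\right)},
\end{equation*}
attained by the shifted Chebyshev polynomial, which equioscillates at $k+1$ nodes in $[\mu, L]$. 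Placing $k+1$ of the $\lambda_i$ at these nodes---permitted by the dimension hypothesis $d-2\geq k$---makes the spectrum large enough that the discrete max over $\Sp D$ equals the continuous max over $[\mu, L]$ for the worst-case $p$, hence $\rho(\nabla F_\N) \geq 1/T_k\!\big(\tfrac{L+\mu}{L-\mu}\big)$. Upper-bounding $T_k$ at $1 + \tfrac{2\mu}{L-\mu}$ using $T_k(1)=1$, $T_k'(1)=k^2$, and the convexity of $T_k$ on $[1,\infty)$, with a case split on the size of $k^2 \mu / L$, yields the announced $\rho(\nabla F_\N) \geq 1 - \tfrac{4k^3}{\pi}\tfrac{\mu}{L}$.

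\textbf{Part (ii).} I would take $A = \diag(\sigma_1, \ldots, \sigma_m)$ with $m = d/2$ and $\sigma_j \in [\gamma, L]$; the corresponding bilinear game of~\Cref{ex: bilinear game} has skew-symmetric Jacobian $\bigl(\begin{smallmatrix} 0 & A \\ -A^\top & 0 \end{smallmatrix}\bigr)$, whose spectrum is $\{\pm i\sigma_j\}_{j=1}^m \subset i\R$, satisfies $\min_\lambda |\lambda| = \min_j \sigma_j \geq \gamma$, and has operator norm $\max_j \sigma_j \leq L$. Since $\N$ is real-coefficient, $p(-i\sigma) = \overline{p(i\sigma)}$, and so
\begin{equation*}
|p(i\sigma)|^2 \;=\; p(i\sigma)\, p(-i\sigma) \;=:\; q(\sigma^2),
\end{equation*}
where $q$ is a real polynomial in $t = \sigma^2$ of degree at most $k$ with $q(0) = p(0)^2 = 1$. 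The problem then reduces to part~(i) applied to $q$ on $[\gamma^2, L^2]$: the Chebyshev extremal theorem plus the $T_k$-estimate yield a lower bound on $\max_t q(t)$, and combining with $\sqrt{1-x} \geq 1 - x$ for $x \in [0,1]$ produces the announced $\rho(\nabla F_\N) \geq 1 - \tfrac{k^3}{2\pi}\tfrac{\gamma^2}{L^2}$.

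\textbf{Main obstacle.} The spectral-mapping reduction and the constructions of~$v$ are routine once the $1$-SCLI framework is in place. The technical heart of the argument is the explicit upper bound on $T_k\!\big(\tfrac{L+\mu}{L-\mu}\big)$ that converts the implicit Chebyshev bound into the cubic-in-$k$ form stated; this requires a careful case analysis depending on whether $k^2\mu/L$ (resp.~$k^2\gamma^2/L^2$) is small or of order~$1$. The dimension hypotheses $d - 2 \geq k$ and $d/2 - 2 \geq k/2$ enter precisely to guarantee that the spectrum is rich enough to exhibit the Chebyshev equioscillation pattern on the corresponding interval; any slack in these hypotheses would correspondingly degrade the constants.
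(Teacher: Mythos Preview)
Your approach is correct and, in fact, more direct than the paper's. The paper does not invoke the Chebyshev extremal theorem; instead it writes the min--max as a constrained optimisation problem, passes to the Lagrangian dual, and evaluates the dual at a particular feasible point built from barycentric Lagrange weights at Chebyshev nodes of the second kind, eventually arriving at a lower bound of the form $1 - \tfrac{4(k-1)^3}{\pi}\tfrac{\mu}{L}$ after a page of explicit computation. Your route---place $k+1$ eigenvalues at the equioscillation nodes of the extremal polynomial and use the standard sign-alternation argument to force $\max_j |p(\lambda_j)| \geq 1/T_k\big(\tfrac{L+\mu}{L-\mu}\big)$ for every admissible $p$---is both shorter and yields the sharp Chebyshev value. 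One small slip: convexity of $T_k$ on $[1,\infty)$ gives a \emph{lower} bound on $T_k(1+\epsilon)$, which is the wrong direction for your purpose. You need an upper bound, e.g.\ from the representation $T_k(1+\epsilon)=\cosh\big(k\operatorname{arccosh}(1+\epsilon)\big)\leq \cosh(k\sqrt{2\epsilon})\leq e^{k^2\epsilon}$, whence $1/T_k \geq 1 - k^2\epsilon$; with $\epsilon = \tfrac{2\mu}{L-\mu}$ this comfortably implies the stated bound in the only regime where it is non-trivial.

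\textbf{Part~(ii).} Here there is a genuine gap. Your reduction $q(t)=|p(i\sqrt{t})|^2$ is correct, but $q$ has degree at most $k$ in $t$, so applying the part-(i) machinery gives $\rho^2 \geq 1 - \tfrac{4k^3}{\pi}\tfrac{\gamma^2}{L^2}$, hence $\rho \geq 1 - \tfrac{4k^3}{\pi}\tfrac{\gamma^2}{L^2}$: the subtracted term is a factor $8$ too large compared with the claimed $\tfrac{k^3}{2\pi}$. Moreover, applying (i) at degree $k$ needs $m \geq k+2$, i.e.\ $d \geq 2k+4$, which is strictly stronger than the stated hypothesis $d/2 - 2 \geq k/2$ (take $k=6$, $d=10$). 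The paper avoids both problems by a different reduction: it bounds $|p(i\sigma)| \geq |\Re\, p(i\sigma)|$ and observes that $\Re\, p(i\sigma) = 1 - c_2\sigma^2 + c_4\sigma^4 - \cdots$ is a polynomial in $\sigma^2$ of degree only $\lfloor k/2\rfloor$, with free real coefficients. Applying (i) at degree $\lfloor k/2\rfloor$ on $[\gamma^2,L^2]$ then yields $\rho \geq 1 - \tfrac{4(k/2)^3}{\pi}\tfrac{\gamma^2}{L^2} = 1 - \tfrac{k^3}{2\pi}\tfrac{\gamma^2}{L^2}$ and matches the dimension hypothesis exactly. Your $|p|^2$ trick cannot recover this factor because it does not exploit the decoupling of even and odd coefficients of $p$ along the imaginary axis.
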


First, these lower bounds show that both our convergence analyses of \textsc{EG} are tight, by looking at them for $k=3$ for instance.
Then, though these bounds become looser as $k$ grows, they still show that the potential improvements are not significant in terms of conditioning,  especially compared to the change of regime between \textsc{GD}  and \textsc{EG} . Hence, they still essentially match the convergence speed of \textsc{EG} of \Cref{thm: rate of n-extrapolation} or \Cref{thm: global rate eg}.  Therefore, \textsc{EG} can be considered as optimal among the general class of algorithms which uses at most  a fixed number of composed gradient evaluations and only the last iterate. In particular, there is no need to consider algorithms with more extrapolation steps or with different step sizes for each of them as it only yields a constant factor improvement.

\section{Unified global proofs of convergence}\label{section: global cv rate}
We have shown in the previous section that a spectral analysis of \textsc{EG} yields tight and unified convergence guarantees. We now demonstrate how, combining the strong monotonicity assumption and Tseng's error bound, global convergence guarantees with the same unifying properties might be achieved. 

\subsection{Global Assumptions}

\citet{tsengLinearConvergenceIterative1995} proved linear convergence results for \textsc{EG} by using the projection-type error bound \citet[Eq.~5]{tsengLinearConvergenceIterative1995} which, in the unconstrained case, i.e. for $v(\omega^*) = 0$, can be written as,
\begin{equation}\label{eq: condition tseng}
\gamma \|\omega- \omega^*\|_2 \leq \|v(\omega)\|_2\quad \forall \omega \in \R^d.
\end{equation}
The author then shows that this condition holds for the bilinear game of \cref{ex: bilinear game} and that it induces a convergence rate of $1 - c\sigma_{min}(A)^2/\sigma_{max}(A)^2$ for some constant $c > 0$. He also shows that this condition is implied by strong monotonicity with $\gamma = \mu$.
Our analysis builds on the results from \citet{tsengLinearConvergenceIterative1995} and extends them to cover the whole range of games and recover the optimal rates.%

To be able to interpret Tseng's error bound \cref{eq: condition tseng}, as a property of the Jacobian $\nabla v$, we slightly relax it to,
\begin{equation}\label{eq: weak strong monotonicity}
\gamma \|\omega - \omega'\|_2 \leq \|v(\omega) - v(\omega')\|_2,\quad \forall \omega,\omega' \in \R^d\,.
\end{equation}
This condition can indeed be related to the properties of $\nabla v$ as follows:
\begin{lemma}[{restate=[name=]lemmaWeakStrongMonotonicity}]\label{lemma: weak strong monotonicity}
Let $v$ be continuously differentiable and $\gamma > 0$ : \cref{eq: weak strong monotonicity} holds if and only if $\sigma_{min}(\nabla v) \geq \gamma$.
\end{lemma}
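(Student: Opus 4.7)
I would handle the two implications separately. The ``only if'' direction is easy: assuming \eqref{eq: weak strong monotonicity}, fix $\omega \in \R^d$ and a unit vector $u \in \R^d$; applying the inequality to the pair $(\omega+tu,\omega)$ for $t>0$ and dividing by $t$ gives $\|v(\omega+tu)-v(\omega)\|/t \ge \gamma$. Letting $t\to 0^+$ and using that $v$ is continuously differentiable yields $\|\nabla v(\omega)\,u\|\ge \gamma$, which, ranging over all unit vectors $u$, is exactly $\sigma_{\min}(\nabla v(\omega))\ge \gamma$.

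For the converse (``if'') direction, my plan is to first upgrade the pointwise singular-value lower bound to a global invertibility statement on $v$, then conclude by the mean-value inequality applied to $v^{-1}$. Assuming $\sigma_{\min}(\nabla v(\omega)) \ge \gamma$ everywhere, $\nabla v(\omega)$ is invertible at every point with $\|\nabla v(\omega)^{-1}\|\le 1/\gamma$ uniformly in $\omega$. Because $\R^d$ is simply connected and this bound on $\|\nabla v^{-1}\|$ is uniform (so that $\int_0^\infty \gamma\,dr=\infty$), the Hadamard--L\'evy global inversion theorem implies that $v:\R^d\to\R^d$ is a $C^1$-diffeomorphism of $\R^d$ onto itself.

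Differentiating $v^{-1}\circ v = \mathrm{Id}$ then gives $\nabla(v^{-1})(y)=[\nabla v(v^{-1}(y))]^{-1}$, so $\|\nabla(v^{-1})(y)\|\le 1/\gamma$ for every $y$. Integrating $\nabla(v^{-1})$ along the straight segment from $y'$ to $y$ and using the triangle inequality shows that $v^{-1}$ is $(1/\gamma)$-Lipschitz, i.e.\ $\|v^{-1}(y)-v^{-1}(y')\|\le (1/\gamma)\|y-y'\|$ for all $y,y'\in\R^d$. Specializing to $y=v(\omega)$ and $y'=v(\omega')$ yields $\gamma\|\omega-\omega'\|\le \|v(\omega)-v(\omega')\|$, which is \eqref{eq: weak strong monotonicity}.

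\textbf{Main obstacle.} The delicate step is the globalization. A naive attempt via the fundamental theorem of calculus writes $v(\omega)-v(\omega') = \bigl(\int_0^1 \nabla v(\omega'+t(\omega-\omega'))\,dt\bigr)(\omega-\omega')$, but the smallest singular value of an average of matrices is in general not bounded below by $\gamma$ (directional derivatives along the segment can partially cancel), so a purely local argument cannot conclude. Hadamard--L\'evy supplies the missing ingredient: it uses the simple connectedness of $\R^d$ together with the uniform bound on $\|\nabla v^{-1}\|$ to turn pointwise invertibility of $\nabla v$ into a global diffeomorphism, after which the Lipschitz bound on $v^{-1}$ yields the desired inequality at once.
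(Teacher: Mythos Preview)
Your proof is correct and follows essentially the same route as the paper: the easy direction by differentiating the inequality along a ray, and the harder direction via the Hadamard--L\'evy global inversion theorem to obtain a $C^1$-diffeomorphism, then a mean-value bound on $v^{-1}$. Your added paragraph explaining why the naive integral-of-$\nabla v$ argument fails is a nice touch that the paper omits, but the core argument is the same.
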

Hence, $\gamma$ corresponds to a lower bound on the singular values of $\nabla v$. This can be seen as a weaker ``strong monotonicity" as it is implied by strong monotonicity, with $\gamma = \mu$, but it also holds for a square non-singular bilinear example of \Cref{ex: bilinear game} with $\gamma = \sigma_{min}(A)$. 

As announced, we will combine this assumption with the strong monotonicity to derive unified global convergence guarantees. Before that, note that this quantities can be related to the spectrum of $\Sp \nabla v(\omega^*)$ as follows -- see \cref{lemma: relation global quantities to spectrum} in \cref{subsection: alternative charact},
\begin{equation}
        \mu \leq \Re(\lambda),\quad \gamma \leq |\lambda| \leq L\,, \quad \forall \lambda \in \Sp \nabla v(\omega^*)\,.
\end{equation}
Hence, theses global quantities are less precise than the spectral ones used in \cref{thm: rate of n-extrapolation}, so the following global results will be less precise than the previous ones.
\subsection{Global analysis \textsc{EG} and \textsc{OG}}
We can now state our global convergence result for \textsc{EG}:
\varvspace{-4mm}
\begin{thm}[{restate=[name=]thmglobalconvergence}]
\label{thm: global rate eg}
Let $v : \R^d \rightarrow \R^d$ be continuously differentiable and \begin{enumerate*}[series = tobecont, itemjoin = \quad, label=(\roman*)] \item $\mu$-strongly monotone for some $\mu \geq 0$, \item $L$-Lipschitz, \item such that $\sigma_{min}(\nabla v) \geq \gamma$ for some $\gamma > 0$.\end{enumerate*} Then, for $\eta \leq (4L)^{-1}$, the iterates $(\omega_t)_t$ of~\eqref{eq: extrapolation method}  converge linearly to $\omega^*$ as, for all $t \geq 0$,
\varvspace{-1mm}
\begin{equation*}
\|\omega_t - \omega^*\|_2^2 \leq \left( 1 - \eta\mu - \tfrac{7}{16}\eta^2\gamma^2\right)^t \|\omega_0 - \omega^*\|_2^2\,.
\varvspace{-2mm}
\end{equation*}
\ifcompress
\else
For $\eta = (4L)^{-1}$, this can be simplified as: $$\|\omega_t - \omega^*\|_2^2 \leq \left( 1 - \tfrac{1}{4}\left(\tfrac{\mu}{L} + \tfrac{1}{16}\tfrac{\gamma^2}{L^2}\right)\right)^t \|\omega_0 - \omega^*\|_2^2\,.$$ %
\fi
\end{thm}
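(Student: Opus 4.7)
Denote the intermediate extrapolation step by $\bar\omega_t := \omega_t - \eta v(\omega_t)$, so that the update reads $\omega_{t+1} = \omega_t - \eta v(\bar\omega_t)$. My plan is to produce a one-step contraction inequality
\[
\|\omega_{t+1} - \omega^*\|^2 \leq (1 - \eta\mu)\|\omega_t - \omega^*\|^2 - \tfrac{7}{16}\eta^2\|v(\omega_t)\|^2,
\]
and then close the recursion by invoking the error bound of \Cref{lemma: weak strong monotonicity} to convert $\|v(\omega_t)\|^2$ into $\gamma^2\|\omega_t - \omega^*\|^2$, using $v(\omega^*)=0$.

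\textbf{Step 1: Expand the squared distance.} Start from
\[
\|\omega_{t+1} - \omega^*\|^2 = \|\omega_t - \omega^*\|^2 - 2\eta\langle v(\bar\omega_t), \omega_t - \omega^*\rangle + \eta^2\|v(\bar\omega_t)\|^2,
\]
and split $\omega_t - \omega^* = \eta v(\omega_t) + (\bar\omega_t - \omega^*)$ in the inner product. Strong monotonicity of $v$ combined with $v(\omega^*)=0$ yields $\langle v(\bar\omega_t), \bar\omega_t - \omega^*\rangle \geq \mu\|\bar\omega_t - \omega^*\|^2$, which produces the $\mu$ contribution but attached to $\bar\omega_t$ rather than $\omega_t$.

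\textbf{Step 2: Handle the cross term $\langle v(\bar\omega_t), v(\omega_t)\rangle$.} Using the polarization identity $-2\langle a,b\rangle = \|a-b\|^2 - \|a\|^2 - \|b\|^2$ on $a=v(\bar\omega_t)$ and $b=v(\omega_t)$, together with $L$-Lipschitzness applied to $\|v(\bar\omega_t)-v(\omega_t)\|^2 \leq L^2\eta^2\|v(\omega_t)\|^2$, I expect the $\|v(\bar\omega_t)\|^2$ terms to cancel and to obtain
\[
\|\omega_{t+1} - \omega^*\|^2 \leq \|\omega_t - \omega^*\|^2 - 2\eta\mu\|\bar\omega_t - \omega^*\|^2 - \eta^2(1 - \eta^2 L^2)\|v(\omega_t)\|^2.
\]

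\textbf{Step 3: Transfer the $\mu$ term back to $\omega_t$.} Applying $\|a-b\|^2 \geq \tfrac{1}{2}\|a\|^2 - \|b\|^2$ to $\bar\omega_t - \omega^* = (\omega_t - \omega^*) - \eta v(\omega_t)$ gives $\|\bar\omega_t - \omega^*\|^2 \geq \tfrac{1}{2}\|\omega_t - \omega^*\|^2 - \eta^2\|v(\omega_t)\|^2$, leading to
\[
\|\omega_{t+1} - \omega^*\|^2 \leq (1 - \eta\mu)\|\omega_t - \omega^*\|^2 - \eta^2(1 - 2\eta\mu - \eta^2 L^2)\|v(\omega_t)\|^2.
\]
With $\eta \leq 1/(4L)$ and $\mu \leq L$, I have $2\eta\mu + \eta^2 L^2 \leq 1/2 + 1/16 = 9/16$, so the bracketed constant is at least $7/16$. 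Combining with the error bound $\|v(\omega_t)\|^2 \geq \gamma^2\|\omega_t - \omega^*\|^2$ yields the claimed per-step contraction factor $1 - \eta\mu - \tfrac{7}{16}\eta^2\gamma^2$, which iterates immediately.

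\textbf{Main obstacle.} The delicate point is Step 3: the strong-monotonicity argument naturally produces $\|\bar\omega_t - \omega^*\|^2$ but the error-bound argument naturally uses $\|\omega_t - \omega^*\|^2$, so one needs a lower bound on $\|\bar\omega_t - \omega^*\|^2$ that leaks a controlled amount of $\|v(\omega_t)\|^2$ back into the ``negative'' budget. Choosing the Young-type constant $1/2$ (rather than anything tighter) is what keeps the extra $2\eta^3\mu\|v(\omega_t)\|^2$ absorbable by the $\eta^2\|v(\omega_t)\|^2$ margin and ultimately fixes the constant $7/16$ under the step-size condition $\eta \leq 1/(4L)$.
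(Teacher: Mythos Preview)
Your proposal is correct and takes essentially the same approach as the paper: both proofs apply strong monotonicity at the extrapolated point $\bar\omega_t$, control the cross term via Lipschitzness, use the Young-type inequality $\|\bar\omega_t-\omega^*\|^2 \geq \tfrac12\|\omega_t-\omega^*\|^2 - \eta^2\|v(\omega_t)\|^2$ to transfer the $\mu$ term, and close with the error bound $\|v(\omega_t)\|\geq\gamma\|\omega_t-\omega^*\|$, arriving at the identical intermediate inequality $\|\omega_{t+1}-\omega^*\|^2 \leq (1-\eta\mu)\|\omega_t-\omega^*\|^2 - \eta^2(1-2\eta\mu-\eta^2L^2)\|v(\omega_t)\|^2$. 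The only cosmetic difference is that the paper reaches this inequality via two applications of the three-point identity (their Lemma~\ref{lemma: cvx analysis}) summed together, whereas you expand the square directly and use the polarization identity to cancel $\|v(\bar\omega_t)\|^2$; the algebra and constants match exactly.
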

As for \Cref{thm: rate of n-extrapolation}, this result not only recovers both the bilinear and the strongly monotone case, but shows that \textsc{EG} actually gets the best of both world when in between. %
Furthermore this rate is surprisingly similar to the result of \Cref{thm: rate of n-extrapolation} though less precise, as discussed.

Combining our new proof technique and the analysis provided by~\citet{gidelVariationalInequalityPerspective2018a}, we can derive a similar convergence rate for the optimistic gradient method. 
\begin{thm}[{restate=[name=]thmglobalconvergenceoptimistic}]
\label{thm: global rate og}
Under the same assumptions as in Thm.~\ref{thm: global rate eg}, for $\eta \leq (4L)^{-1}$, the iterates $(\omega_t)_t$ of~\eqref{equation: optimistic method} converge linearly to $\omega^*$ as, for all $t\geq 0$,
\varvspace{-1mm}
\begin{equation*}
\|\omega_t - \omega^*\|_2^2 \leq 2\left( 1 - \eta\mu - \tfrac{1}{8}\eta^2\gamma^2\right)^{t+1} \|\omega_0 - \omega^*\|_2^2\,. 
\varvspace{-2mm}
\end{equation*}
\ifcompress
\else
For $\eta = (4L)^{-1}$, this can be simplified as: $$\|\omega_t - \omega^*\|_2^2 \leq 2\left( 1 - \tfrac{1}{4}\left(\tfrac{\mu}{L} + \tfrac{1}{32}\tfrac{\gamma^2}{L^2}\right)\right)^{t+1} \|\omega_0 - \omega^*\|_2^2\,.$$ %
\fi
\end{thm}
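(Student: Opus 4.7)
The strategy is to mirror the proof of Thm.~\ref{thm: global rate eg} while leveraging the ``extrapolation from the past'' reformulation of \textsc{OG} due to \citet{gidelVariationalInequalityPerspective2018a}. As a first step, I would rewrite the \textsc{OG} iteration by introducing an auxiliary sequence $z_t$ satisfying $z_{t+1} = z_t - \eta v(w_t)$ and $w_t = z_t - \eta v(w_{t-1})$. A direct check (by expanding $w_{t+1} = z_{t+1} - \eta v(w_t) = z_t - 2\eta v(w_t) = w_t + \eta v(w_{t-1}) - 2\eta v(w_t)$) shows that the resulting $(w_t)_t$ indeed satisfies the one-step \textsc{OG} update~\eqref{equation: optimistic method}. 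In this form $z_t$ plays the role of the main iterate and $w_t$ plays the role of the extrapolated midpoint $w_{t+1/2}$ in \textsc{EG}, except that the extrapolation uses the past gradient $v(w_{t-1})$ in place of $v(z_t)$.

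Next I would expand $\|z_{t+1}-\omega^*\|_2^2 = \|z_t-\omega^*\|_2^2 - 2\eta\langle v(w_t), z_t-\omega^*\rangle + \eta^2\|v(w_t)\|_2^2$ and use the decomposition $z_t - \omega^* = (w_t - \omega^*) + \eta v(w_{t-1})$ to split the inner product. The first piece, combined with $v(\omega^*)=0$ and $\mu$-strong monotonicity, produces the $-2\eta\mu\|w_t-\omega^*\|_2^2$ contraction. The cross-term $-2\eta^2\langle v(w_t), v(w_{t-1})\rangle$ together with $\eta^2\|v(w_t)\|_2^2$ is rewritten, via the identity $-2\langle a,b\rangle+\|a\|^2 = \|a-b\|^2-\|b\|^2$, as $\eta^2\|v(w_t)-v(w_{t-1})\|_2^2 - \eta^2\|v(w_{t-1})\|_2^2$. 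The first piece is controlled through Lipschitz-ness by $\eta^2 L^2\|w_t-w_{t-1}\|_2^2$, which under the step-size condition $\eta\leq(4L)^{-1}$ can be absorbed by adding a Lyapunov term of the form $C\eta^2\|v(w_{t-1})\|_2^2$ so that the Lipschitz noise telescopes between consecutive iterations.

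The new ingredient compared to Gidel et al.'s proof is to also invoke the weak monotonicity $\|v(w_t)\|_2^2 \geq \gamma^2 \|w_t-\omega^*\|_2^2$ provided by Lem.~\ref{lemma: weak strong monotonicity}. A carefully chosen fraction of the remaining $-\eta^2\|v(w_t)\|_2^2$ is converted into $-c\eta^2\gamma^2\|w_t-\omega^*\|_2^2$, yielding the extra $\tfrac{1}{8}\eta^2\gamma^2$ contraction in addition to the $\eta\mu$ one. After iterating the resulting one-step inequality on the Lyapunov quantity for $t+1$ steps and converting back from $\|z_t-\omega^*\|_2$ to $\|w_t-\omega^*\|_2$ using $\|w_t-\omega^*\|_2^2 \leq 2\|z_t-\omega^*\|_2^2 + 2\eta^2\|v(w_{t-1})\|_2^2$, one obtains the stated bound with the factor $2$ in front.

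The main obstacle will be balancing the constants: one needs to pick the Lyapunov weight $C$ and to split the $\eta^2\|v(w_t)\|_2^2$ term in the right ratio so that the recursion simultaneously (i)~absorbs the Lipschitz error $\eta^2L^2\|w_t-w_{t-1}\|_2^2$ using the step-size bound, (ii)~yields both a $-\eta\mu$ and a $-\tfrac{1}{8}\eta^2\gamma^2$ contraction on $\|w_t-\omega^*\|_2^2$, and (iii)~survives the final conversion back to $\|w_t-\omega^*\|_2^2$ up to a factor~$2$. The slight loss from $\tfrac{7}{16}$ in the \textsc{EG} bound to $\tfrac{1}{8}$ here reflects exactly the cost of having to carry the extra $v(w_{t-1})$ history in the Lyapunov function.
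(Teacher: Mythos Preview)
Your high-level strategy matches the paper's proof exactly: reformulate \textsc{OG} via the two-sequence form from \citet{gidelVariationalInequalityPerspective2018a} (their $\omega_t,\omega'_t$ are your $z_t,w_t$), combine strong monotonicity with the error-bound $\|v(\cdot)\|\geq\gamma\|\cdot-\omega^*\|$ from \cref{lemma: weak strong monotonicity}, control the history term by Lipschitzness under $\eta\leq(4L)^{-1}$, unroll a Lyapunov recursion, and finish by Young's inequality to pass from $z_t$ back to $w_t$ with the factor~$2$. The paper does not re-derive the basic one-step inequality but quotes \citet[Eq.~88]{gidelVariationalInequalityPerspective2018a} directly; the genuinely new step (using $\|\omega'_t-\omega_t\|^2=\eta^2\|v(\omega'_{t-1})\|^2\geq\eta^2\gamma^2\|\omega'_{t-1}-\omega^*\|^2$ and then relating $\|\omega'_{t-1}-\omega^*\|^2$ back to $\|\omega_t-\omega^*\|^2$) is exactly what you describe.

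Two small slips in your sketch, which would surface when you write out the details. First, after your identity you are left with $-\eta^2\|v(w_{t-1})\|^2$, not $-\eta^2\|v(w_t)\|^2$; so the $\gamma$-bound yields $-c\eta^2\gamma^2\|w_{t-1}-\omega^*\|^2$, and you then need one more Young step to turn this into a contraction on $\|z_t-\omega^*\|^2$ (this is precisely the paper's extra step, which costs an additional $\eta^2L^2\|w_{t-1}-w_{t-2}\|^2$). Second, the Lyapunov correction that actually telescopes is $\eta^2L^2\|w_{t-1}-w_{t-2}\|^2$, not $C\eta^2\|v(w_{t-1})\|^2$: your Lipschitz error is $\eta^2L^2\|w_t-w_{t-1}\|^2$, and $w_t-w_{t-1}$ involves both $v(w_{t-1})$ and $v(w_{t-2})$, so a Lyapunov in $\|v(w_{t-1})\|^2$ alone does not close the recursion. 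With these two adjustments your argument goes through and recovers the paper's bound verbatim.
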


\textbf{Interpretation of the condition numbers.}
As in the previous section, this rate of convergence for \textsc{EG} is similar to the rate of the proximal point method for a small enough step size, as shown by \Cref{prop: global rate proximal} in \S\ref{app:proof}. Moreover, the proof of the latter gives insight into the two quantities appearing in the rate of \Cref{thm: global rate eg}. Indeed, the convergence result for the proximal point method is obtained by bounding the singular values of $\nabla P_\eta$, and so we compute,\footnote{We dropped the dependence on $\omega$ for compactness.}
\begin{equation*}%
(\nabla P_\eta)^T\nabla P_\eta = \left(I_d + \eta\mathcal{H}(\nabla v) + \eta^2\nabla v\nabla v^T\right)^{-1}
\end{equation*}
where $\mathcal{H}(\nabla v) := \frac{\nabla v + \nabla v^T}{2}\,.$ This explains the quantities ${L}/{\mu}$ and ${L^2}/{\gamma^2}$ appear in the convergence rate, as the first corresponds to the condition number of $\mathcal{H}(\nabla v)$ and the second to the condition number of $\nabla v\nabla v^T$. Thus, the proximal point method uses information from both matrices to converge, and so does \textsc{EG}, explaining why it takes advantage of the best conditioning. 
\subsection{Global analysis of consensus optimization}\label{subsection: consensus}
In this section, we give a unified proof of \textsc{CO}. A global convergence rate for this method was proven by \citet{abernethyLastiterateConvergenceRates2019a}. However it used a perturbation analysis of \textsc{HGD}. The drawbacks are that it required that the CO update be sufficiently close to the one of \textsc{HGD} and could not take advantage of strong monotonicity.
\ifcompress
Here, we combine the monotonicity $\mu$ with the lower bound on the singular value $\gamma$.
\else
By combining the monotonicity $\mu$ with the lower bound on the singular values $\gamma$, we get, with a simple proof, the following guarantee.
\fi

As this scheme uses second-order\footnote{W.r.t.~the losses.}
information, we need to replace the Lipschitz hypothesis with one that also controls the variations of the Jacobian of $v$: we use $L_H^2$, the Lispchitz smoothness of $H$.
See \citet{abernethyLastiterateConvergenceRates2019a} for how it might be instantiated.

\begin{thm}\label{thm: consensus}
Let $v : \R^d \rightarrow \R^d$ be continuously differentiable such that 
\begin{enumerate*}[series = tobecont, itemjoin = \quad, label=(\roman*)]
\item $v$ is $\mu$- strongly monotone for some $\mu \geq 0$,
\item $\sigma_{min}(\nabla v) \geq \gamma$ for some $\gamma > 0$
\item  $H$ is $L_H^2$ Lipschitz-smooth.
\end{enumerate*} Then, for
$\alpha = (\mu + \sqrt{\mu^2 + 2\gamma^2})/(4 L_H^2)$, $\beta = (2L_H^2)^{-1}$ the iterates of CO defined by \cref{eq: co iterates} satisfy, for all $t\geq 0$,
\varvspace{-1mm}
\begin{equation}
H(\omega_t) \leq \left(1 - \tfrac{\mu^2}{2L_H^2} - \left(1 + \tfrac{\mu}{\gamma}\right)\tfrac{\gamma^2}{2L_H^2}\right)^t H(\omega_0)\,.\notag
\varvspace{-1mm}
\end{equation}
\end{thm}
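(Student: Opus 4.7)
The plan is to apply the $L_H^2$-smoothness of $H$ as a standard descent lemma and then extract a contraction by exploiting both the strong monotonicity constant $\mu$ and the singular value lower bound $\gamma$. Writing the \textsc{CO} update as $\omega_{t+1} = \omega_t - g_t$ with $g_t := \alpha v(\omega_t) + \beta \nabla H(\omega_t)$, smoothness gives
\begin{equation*}
H(\omega_{t+1}) \leq H(\omega_t) - \langle \nabla H(\omega_t), g_t\rangle + \tfrac{L_H^2}{2}\|g_t\|^2.
\end{equation*}
Since $\nabla H = (\nabla v)^T v$, the scalar $\langle \nabla H, v\rangle$ equals $v^T \nabla v\, v$, so both $\langle \nabla H, g_t\rangle$ and $\|g_t\|^2$ expand as explicit combinations of the three quantities $\|v\|^2$, $v^T \nabla v\, v$, and $\|\nabla H\|^2$.

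The next step is to turn this into a recursion $H(\omega_{t+1}) \leq (1 - r)\,H(\omega_t)$ using three structural lower bounds, all written with respect to $H$ via $\|v\|^2 = 2H$: strong monotonicity gives $v^T \nabla v\, v \geq \mu\|v\|^2 = 2\mu H$; the singular value bound (\Cref{lemma: weak strong monotonicity}) gives $\|\nabla H\|^2 = \|(\nabla v)^T v\|^2 \geq \gamma^2 \|v\|^2 = 2\gamma^2 H$; and Cauchy--Schwarz together with strong monotonicity yields $\|\nabla H\|\cdot\|v\| \geq v^T\nabla v\,v \geq \mu\|v\|^2$, i.e.\ $\|\nabla H\| \geq \mu\|v\|$, which is what will ultimately generate the cross term $\mu\gamma/(2L_H^2)$ in the final rate.

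The choice $\beta = (2L_H^2)^{-1}$ is dictated by the coefficient in front of $\|\nabla H\|^2$ after smoothness is applied: it keeps that coefficient strictly negative and equal to $-\beta(1 - L_H^2 \beta/2) = -3/(8L_H^2)$. Once $\beta$ is fixed, the right-hand side of the descent inequality is a quadratic in $\alpha$ of the form $H(\omega_t)\bigl(1 + c_2 \alpha^2 - c_1 \alpha - c_0\bigr)$, whose optimal choice is determined by a first-order condition. This condition is itself a quadratic equation in $\alpha$ (because two different linear-in-$\alpha$ contributions, one from $v^T \nabla v\, v$ and one from the Cauchy--Schwarz-based estimate for the cross-term, must be balanced), and its positive root is precisely $\alpha = (\mu + \sqrt{\mu^2 + 2\gamma^2})/(4L_H^2)$. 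Substituting back then yields the claimed contraction factor $1 - \mu^2/(2L_H^2) - (1 + \mu/\gamma)\gamma^2/(2L_H^2)$.

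The main obstacle is that $\|g_t\|^2$ contains the cross term $2\alpha\beta\,v^T \nabla v\, v$ with a \emph{positive} coefficient, whereas our structural assumptions on $v$ only provide \emph{lower} bounds on $v^T \nabla v\, v$ and $\|\nabla H\|^2$. Controlling this cross term therefore requires an upper estimate, for which Cauchy--Schwarz $v^T \nabla v\, v \leq \|v\|\,\|\nabla H\|$ is the natural tool; the two resulting $\|v\|^2$- and $\|\nabla H\|^2$-contributions must then be merged with the genuine $\mu$- and $\gamma$-bounds in a way that is tight enough to produce the $\mu\gamma/(2L_H^2)$ improvement. Any looser bound (e.g.\ discarding the cross term via $2ab \leq a^2+b^2$) would degrade the constant and obscure the interplay between the strong monotonicity and the singular-value conditioning that the statement captures.
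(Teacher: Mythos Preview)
Your overall structure is right and matches the paper: apply the $L_H^2$-smoothness descent lemma to the \textsc{CO} step, then convert everything to a multiple of $H$ via $v^T\nabla v\,v\ge \mu\|v\|^2$ and $\|\nabla H\|^2=\|(\nabla v)^Tv\|^2\ge\gamma^2\|v\|^2$. Two parts of your plan, however, are misdirected.

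First, the ``obstacle'' you identify is not there. You observe that $\|g_t\|^2$ contains $+2\alpha\beta\,v^T\nabla v\,v$ and conclude you need an \emph{upper} bound on $v^T\nabla v\,v$. But the inner-product term $-\langle\nabla H,g_t\rangle$ contributes $-\alpha\,v^T\nabla v\,v$, so the net coefficient of $v^T\nabla v\,v$ is $-\alpha(1-L_H^2\beta)$, which is negative for $\beta=(2L_H^2)^{-1}$. A lower bound therefore suffices, and the Cauchy--Schwarz detour (and the auxiliary bound $\|\nabla H\|\ge\mu\|v\|$) is not needed. The paper in fact goes the other way and \emph{throws away} the cross term via Young's inequality, $\|g_t\|^2\le 2\alpha^2\|v\|^2+2\beta^2\|\nabla H\|^2$, obtaining
\[
H(\omega_{t+1})-H(\omega_t)\;\le\;-\alpha\mu\|v\|^2-\tfrac{\beta}{2}\|\nabla H\|^2+L_H^2\alpha^2\|v\|^2\;\le\;-2\Bigl(\alpha\mu+\tfrac{\beta\gamma^2}{2}-L_H^2\alpha^2\Bigr)H(\omega_t).
\]

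Second, the specific $\alpha$ in the statement is not the solution of a first-order optimality condition (the first-order condition of a quadratic in $\alpha$ is linear, not quadratic). In the paper, after the display above, one \emph{imposes} the constraint $L_H^2\alpha^2\le\frac{1}{2}\bigl(\alpha\mu+\tfrac{\beta\gamma^2}{2}\bigr)$ so that the quadratic loss eats at most half of the linear gain; taking $\beta=(2L_H^2)^{-1}$ and saturating this constraint gives the quadratic $L_H^2\alpha^2-\tfrac{\mu}{2}\alpha-\tfrac{\gamma^2}{8L_H^2}=0$, whose positive root is exactly $\alpha=(\mu+\sqrt{\mu^2+2\gamma^2})/(4L_H^2)$. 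Under this constraint the contraction factor is $1-\alpha\mu-\tfrac{1}{2}\beta\gamma^2$, and the $\mu\gamma/(2L_H^2)$ piece you were trying to manufacture with Cauchy--Schwarz comes instead from expanding $\mu\sqrt{\mu^2+2\gamma^2}$ (via $\sqrt{2}\sqrt{a+b}\ge\sqrt{a}+\sqrt{b}$). So the cross term in the final rate is an artifact of the chosen $\alpha$, not of a cross term in the descent inequality.
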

This result shows that CO has the same unifying properties as \textsc{EG}, though the dependence on $\mu$ is worse.

This result also encompasses the rate of \textsc{HGD}  \citep[Lem.~4.7]{abernethyLastiterateConvergenceRates2019a}. The dependance in $\mu$ is on par with the standard rate for the gradient method (see \citet[Eq.~2.12]{nesterovSolvingStronglyMonotone2006} for instance). However, this can be improved using a sharper assumption, as discussed in \cref{rmk: co-coercivity} in \cref{subsection: app proof consensus}, and so our result is not optimal in this regard.

\section{Conclusion}
In this paper, we studied the dynamics of \textsc{EG}, both locally and globally and extended our global guarantees to other promising methods such as \textsc{OG} and \textsc{CO}. Our analysis is tight for \textsc{EG} and unified as they cover the whole spectrum of games from bilinear to purely cooperative settings. They show that in between, these methods enjoy the best of both worlds.  We confirm that, unlike in convex minimization, the behaviors of \textsc{EG} and \textsc{GD} differ significantly. The other lower bounds show that \textsc{EG} can be considered as optimal among first-order methods that use only the last iterate. %

Finally, 
as mentioned in \S\ref{subsection: spectral analysis of eg}, the rate of alternating gradient descent with negative momentum from \cite{gidelNegativeMomentumImproved2018b} on the bilinear example essentially matches the rate of \textsc{EG} in \Cref{cor: rate of eg for bilinear game}. Thus the question of an acceleration for adversarial games similar to the one in the convex case using Polyak~\citep{polyakMethodsSpeedingConvergence1964} or Nesterov's~\citep{nesterovIntroductoryLecturesConvex2004} momentum remains open.

\subsubsection*{Acknowledgments} %
This research was partially supported by the Canada CIFAR AI Chair Program, the Canada Excellence Research Chair in ``Data Science for Realtime Decision-making", by the NSERC Discovery Grants RGPIN-2017-06936 and RGPIN-2019-06512, the FRQNT new researcher program 2019-NC-257943, by a Borealis AI fellowship, a Google Focused Research award and a startup grant by IVADO. Simon Lacoste-Julien is a CIFAR Associate Fellow in the Learning in Machines \& Brains program. The authors would like to thank Adam Ibrahim for helpful discussions, especially on lower bounds.

\bibliographystyle{plainnat}
\bibliography{references}
\newpage
\appendix
\onecolumn
\section{Notation}\label{section: notations}
We denote by $\Sp(A)$ the spectrum of a matrix $A$. Its spectral radius is defined by $\rho(A) = \max \{|\lambda|\ |\ \lambda \in \Sp(A)\}$. We write $\sigma_{min}(A)$ for the smallest singular value of $A$, and $\sigma_{max}(A)$ for the largest. $\Re$ and $\Im$ denote respectively the real part and the imaginary part of a complex number. We write $A \preccurlyeq B$ for two symmetric real matrices if and only if $B - A$ is positive semi-definite. For a vector $X \in \C^d$, denote its transpose by $X^T$ and its conjugate transpose by $X^H$. $\|.\|$ denotes an arbitrary norm on $\R^d$ unless specified. We sometimes denote $\min(a, b)$ by $a \wedge b$ and $\max(a, b)$ by $a \vee b$. For $f: \R^d \rightarrow \R^d$, we denote by $f^k$ the composition of $f$ with itself $k$ times, i.e. $f^k(\omega) = \underbrace{f\circ f\circ \dots \circ f}_{k\ \text{times}}(\omega)$.

\section{Interpretation of spectral quantities in a two-player zero-sum game}\label{section: interpretation spectral quantities}
In this appendix section, we are interested in interpreting spectral bounds in terms of the usual strong convexity and Lipschitz continuity constants in a two-player zero-sum game:
\begin{equation}
\min_{x \in \R^m}\max_{y \in \R^p} f(x, y)
\end{equation}
with $f$ is two times continuously differentiable.

Assume,
\begin{align}
&\mu_1I_m  \preccurlyeq  \nabla_x^2 f \preccurlyeq L_1I_m\\
&\mu_2I_p  \preccurlyeq  -\nabla_y^2 f \preccurlyeq L_2I_p\\
&\mu_{12}^2 I_p \preccurlyeq  (\nabla_x \nabla_y f)^T(\nabla_x \nabla_y f) \preccurlyeq L_{12}^2 I_p
\end{align}
where $\mu_1,\mu_2$ and $\mu_{12}$ are non-negative constants.
Let $\omega^* = (x^*, y^*)$ be a stationary point. To ease the presentation, let,
\begin{equation}\nabla v(\omega^*) = 
\begin{pmatrix}
\nabla_x^2 f(\omega^*) & (\nabla_x \nabla_y f(\omega^*))^T \\
-(\nabla_x \nabla_y f(\omega^*)) & \nabla_y^2 f(\omega^*)\\
\end{pmatrix}
=
\begin{pmatrix}
S_1 & A \\
-A^T & S_2 \\
\end{pmatrix}\,.
\end{equation}
Now, more precisely, we are interested in lower bounding $\Re(\lambda)$ and $|\lambda|$ and upper bounding $|\lambda|$ for $\lambda\in\Sp \nabla v(\omega^*)$.

\subsection{Commutative and square case}\label{subsection: commutative and square case}
In this subsection we focus on the square and commutative case as formalized by the following assumptions:
\begin{asm}[Square and commutative case]\label{asm: square and commutative case}
The following holds:
\begin{enumerate*}[label=(\roman*), font=\itshape]
    \item $p=m=\frac{d}{2}$;
    \item $S_2$ and $A^T$ commute;
    \item $S_1$, $S_2$ and $AA^T$ commute.
\end{enumerate*}
\end{asm}
\Cref{asm: square and commutative case} holds if, for instance, the objective is separable, i.e.
$f(x, y) = \sum_{i = 1}^m f_i(x_i,y_i)$.
Then, using a well-known linear algebra theorem, \cref{asm: square and commutative case} implies that there exists $U \in \R^{d \times d}$ unitary such that $S_1 = U\diag(\alpha_1,\dots,\alpha_m)U^T$, $S_2 = U\diag(\beta_1,\dots,\beta_m)V^T$ and $AA^T = U\diag(\sigma_1^2,\dots,\sigma_{d}^2)U^T$ where $\alpha_1,\dots,\alpha_m$ are the eigenvalues of $S_1$, $\beta_1,\dots,\beta_m$ are the eigenvalues of $S_2$ and $\sigma_1,\dots,\sigma_{p}$ are the singular values of $A$. See \citet[p.~74]{laxLinearAlgebraIts2007} for instance.

Define,
$$ \mu = \begin{pmatrix}
\mu_1 & \mu_{12}\\
-\mu_{12} & \mu_2\\
\end{pmatrix}
$$
$$ L = \begin{pmatrix}
L_1 & L_{12}\\
-L_{12} & L_2\\
\end{pmatrix}\,.
$$
Denote by $|\mu|$ and $|L|$ the determinants of theses matrices, and by $\Tr \mu$ and $\Tr L$ their traces.

In this case we get an exact characterization of the spectrum $\nabla v(\omega^*)$, which we denote by $\sigma^* = \Sp \nabla v(\omega^*)$:
\begin{lemma}\label{lemma: characterization of the spectrum}
Under \cref{asm: square and commutative case}, 
$\lambda \in sigma^*$ if and only if there exists some $i \leq d$ such that $\lambda$ is a root of \begin{align*}P_i = X^2 - (\alpha_i + \beta_i)X + \alpha_i\beta_i + \sigma^2_i \end{align*}
\end{lemma}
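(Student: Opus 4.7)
The strategy is to reduce the computation of the characteristic polynomial of $\nabla v(\omega^*)$ to a product of the quadratics $P_i$, by exploiting the commutation structure of Assumption~\ref{asm: square and commutative case}. Concretely, for $\lambda \in \C$, I would write
\begin{equation*}
\det(\lambda I_d - \nabla v(\omega^*)) = \det\begin{pmatrix} \lambda I_m - S_1 & -A \\ A^T & \lambda I_m - S_2 \end{pmatrix},
\end{equation*}
and apply the Schur complement formula with respect to the bottom-right block whenever $\lambda I_m - S_2$ is invertible:
\begin{equation*}
\det(\lambda I_d - \nabla v(\omega^*)) = \det(\lambda I_m - S_2)\,\det\bigl(\lambda I_m - S_1 + A(\lambda I_m - S_2)^{-1} A^T\bigr).
\end{equation*}

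Next, I would use Assumption~\ref{asm: square and commutative case} to simplify. The condition that $S_2$ and $A^T$ commute implies (by transposition) that $A$ commutes with $S_2$, hence with every polynomial in $S_2$, including $(\lambda I_m - S_2)^{-1}$. Therefore $A(\lambda I_m - S_2)^{-1} A^T = (\lambda I_m - S_2)^{-1} A A^T$. Now since $S_1$, $S_2$ and $AA^T$ are symmetric and commute pairwise, they are simultaneously diagonalized by some orthogonal $U$, with eigenvalues $\alpha_i$, $\beta_i$ and $\sigma_i^2$ respectively (so that $\sigma_i$'s are indeed the singular values of $A$). In this common basis, $\lambda I_m - S_1$, $\lambda I_m - S_2$ and $AA^T$ are all diagonal, so the Schur complement factor becomes a diagonal matrix and its determinant factors as
\begin{equation*}
\det(\lambda I_d - \nabla v(\omega^*)) = \prod_{i=1}^{m} (\lambda - \beta_i)\left(\lambda - \alpha_i + \frac{\sigma_i^2}{\lambda - \beta_i}\right) = \prod_{i=1}^{m}\bigl[\lambda^2 - (\alpha_i+\beta_i)\lambda + \alpha_i\beta_i + \sigma_i^2\bigr] = \prod_{i=1}^{m} P_i(\lambda).
\end{equation*}
Both sides are polynomials in $\lambda$ that coincide on the complement of the finite set $\Sp(S_2)$, so the identity extends to all $\lambda \in \C$ by continuity. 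The equivalence $\lambda \in \sigma^*$ iff $P_i(\lambda) = 0$ for some $i$ then follows immediately.

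\textbf{Main obstacle.} The only genuine subtlety is the justification of the Schur factorization when $\lambda$ lies in $\Sp(S_2)$; this is handled by the density argument above. A minor point is that the statement as written ranges the index up to $d$, but the factorization naturally produces $m = d/2$ polynomials, each of degree two, accounting for all $d$ eigenvalues counted with multiplicity. No nontrivial singular-value decomposition of $A$ is required; the commutation of $A$ with $S_2$ is exactly what makes the Schur complement collapse into a scalar expression in each common eigendirection of $S_1$, $S_2$, $AA^T$.
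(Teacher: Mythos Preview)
Your proposal is correct and follows essentially the same approach as the paper: both compute the characteristic polynomial via the block structure, use the commutation of $A^T$ and $S_2$ to collapse the $2\times 2$ block determinant to $\det\bigl((XI-S_1)(XI-S_2)+AA^T\bigr)$, and then read off the factorization into the $P_i$'s from the simultaneous diagonalization of $S_1$, $S_2$, $AA^T$. The only minor difference is that the paper invokes directly the block determinant identity $\det\begin{psmallmatrix}P&Q\\R&S\end{psmallmatrix}=\det(PS-QR)$ valid whenever $R$ and $S$ commute (no invertibility needed), which spares the Schur-complement/continuity detour you take for $\lambda\in\Sp(S_2)$.
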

\begin{proof}
We compute the characteristic polynomial of $\nabla v(\omega^*)$ using that $S_2$ and $A^T$ commute, using the formula for the determinant of a block matrix, which can be found in \citet[Section 0.3]{zhangSchurComplementIts2005} for instance.
\begin{align*}
\begin{vmatrix}
XI - S_1 & -A\\
A^T & XI - S_2\\
\end{vmatrix}
 &= |(XI - S_1)(XI - S_2) + AA^T|\\
 &= |X^2I - X(S_1 + S_2) + S_1S_2 + AA^T|\\
&= \prod_i \left(X^2 - (\alpha_i + \beta_i)X + \alpha_i\beta_i + \sigma^2_i \right)\\
\end{align*}
\end{proof}

\begin{thm}\label{thm: interpretation of spectral quantities commutative case}
Under \cref{asm: square and commutative case}, we have the following results on the eigenvalues of $\nabla v(\omega^*)$.
\begin{enumerate}[label=(\alph*)]
    \item  For $i \leq m$, if $(\alpha_i - \beta_i)^2 < 4\sigma^2_i$, the roots of $P_i$ satisfy:
    \begin{equation}
        \frac{\Tr \mu}{2} \leq \Re(\lambda),\quad \det \mu \leq |\lambda|^2 \leq \det L\,,\quad \forall \lambda \in \C\ \text{s.t.}\ P_i(\lambda) = 0\,.
    \end{equation}
    \item For $i \leq m$, if $(\alpha_i - \beta_i)^2 \geq 4\sigma^2_i$, the roots of $P_i$ are real non-negative and satisfy :
    \begin{equation}
        \max\left(\mu_1 \wedge \mu_2, \frac{\det \mu}{\Tr L}\right) \leq \lambda \leq L_1 \vee L_2\,,\quad \forall \lambda \in \C\ \text{s.t.}\ P_i(\lambda) = 0\,.
    \end{equation}
    \item Hence, in general, 
    \begin{equation}
          \mu_1\wedge\mu_2 \leq \Re \lambda,\quad  |\lambda|^2 \leq 2L_{max}^2\,              \quad \forall \lambda \in \sigma^*\,,
    \end{equation}
    where $L_{max} = \max(L_1, L_2, L_{12})$.
\end{enumerate}
\end{thm}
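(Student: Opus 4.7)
The plan is to reduce everything to analyzing the roots of the quadratic $P_i(X) = X^2 - (\alpha_i + \beta_i)X + \alpha_i\beta_i + \sigma_i^2$ provided by \Cref{lemma: characterization of the spectrum}, and then apply Vieta's formulas together with the a priori bounds $\mu_1 \leq \alpha_i \leq L_1$, $\mu_2 \leq \beta_i \leq L_2$, and $\mu_{12} \leq \sigma_i \leq L_{12}$. The discriminant of $P_i$ is $\Delta_i = (\alpha_i + \beta_i)^2 - 4(\alpha_i\beta_i + \sigma_i^2) = (\alpha_i - \beta_i)^2 - 4\sigma_i^2$, whose sign determines which of cases (a) or (b) applies. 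Throughout, I will use the identities $\Tr\mu = \mu_1 + \mu_2$, $\Tr L = L_1 + L_2$, $\det\mu = \mu_1\mu_2 + \mu_{12}^2$, and $\det L = L_1L_2 + L_{12}^2$.

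For part (a), $\Delta_i < 0$ means the two roots are complex conjugates $\lambda, \bar\lambda$. By Vieta, $\lambda + \bar\lambda = \alpha_i + \beta_i$ and $\lambda\bar\lambda = \alpha_i\beta_i + \sigma_i^2$, so $\Re\lambda = (\alpha_i+\beta_i)/2 \geq \Tr\mu/2$ and $|\lambda|^2 = \alpha_i\beta_i + \sigma_i^2$, which is squeezed between $\det\mu$ and $\det L$ using the bounds on $\alpha_i, \beta_i, \sigma_i$.

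For part (b), $\Delta_i \geq 0$ yields two real roots $\lambda_\pm = \tfrac{\alpha_i+\beta_i \pm \sqrt{\Delta_i}}{2}$. Since $\alpha_i + \beta_i \geq 0$ and $\alpha_i\beta_i + \sigma_i^2 \geq 0$, both roots are non-negative. The inequality $\sqrt{\Delta_i} \leq |\alpha_i - \beta_i|$ gives immediately $\lambda_+ \leq \max(\alpha_i,\beta_i) \leq L_1 \vee L_2$ and $\lambda_- \geq \min(\alpha_i,\beta_i) \geq \mu_1 \wedge \mu_2$. The slightly less obvious auxiliary bound $\lambda_- \geq \det\mu/\Tr L$ comes from combining Vieta's product formula $\lambda_+\lambda_- = \alpha_i\beta_i + \sigma_i^2 \geq \det\mu$ with $\lambda_+ \leq \alpha_i + \beta_i \leq \Tr L$; this is the one genuine bookkeeping step, and is the closest thing to an obstacle in the proof.

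For part (c), I merge the two cases: when $\Delta_i < 0$, (a) gives $\Re\lambda \geq \Tr\mu/2 \geq \mu_1 \wedge \mu_2$ and $|\lambda|^2 \leq L_1L_2 + L_{12}^2 \leq 2L_{\max}^2$; when $\Delta_i \geq 0$, (b) gives a real $\lambda \geq \mu_1 \wedge \mu_2$ with $\lambda^2 \leq (L_1\vee L_2)^2 \leq L_{\max}^2 \leq 2L_{\max}^2$. Since by \Cref{lemma: characterization of the spectrum} every $\lambda \in \sigma^*$ is a root of some $P_i$, this completes the argument. No additional ingredient beyond Vieta and elementary inequalities is needed.
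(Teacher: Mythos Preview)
Your proof is correct and follows the same overall structure as the paper's: split by the sign of the discriminant $\Delta_i=(\alpha_i-\beta_i)^2-4\sigma_i^2$, read off $\Re\lambda$ and $|\lambda|^2$ via Vieta in the complex case, and in the real case use $\sqrt{\Delta_i}\leq|\alpha_i-\beta_i|$ to sandwich the roots between $\min(\alpha_i,\beta_i)$ and $\max(\alpha_i,\beta_i)$.

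The one place you diverge is the derivation of $\lambda_-\geq\det\mu/\Tr L$ in part (b). The paper bounds $\lambda_-$ below by $\min_x\tfrac{1}{2}(x-\sqrt{x^2-4\det\mu})$ over $x\in[\Tr\mu,\Tr L]$, observes this function is decreasing, and then rationalizes $\Tr L-\sqrt{(\Tr L)^2-4\det\mu}$ to extract $2\det\mu/\Tr L$. Your route---combine $\lambda_+\lambda_-=\alpha_i\beta_i+\sigma_i^2\geq\det\mu$ with $\lambda_+\leq\alpha_i+\beta_i\leq\Tr L$ and divide---is shorter and avoids the auxiliary minimization entirely; it gives the same bound with one line instead of several. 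Both arguments rely only on Vieta and the a priori constraints on $\alpha_i,\beta_i,\sigma_i$, so there is no real loss or gain in generality, just economy.
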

\begin{proof}
\begin{enumerate}[label=(\alph*)]
    \item Assume that $(\alpha_i - \beta_i)^2 < 4\sigma^2_i$, i.e.~the discriminant of the polynomial $P_i$ of \Cref{lemma: characterization of the spectrum} is negative.
    Consider $\lambda$ a root of $P_i$. Then $\Re \lambda = \frac{\alpha_i + \beta_i}{2}$ and $|\lambda|^2 = \alpha_i\beta_i + \sigma^2_i$.
    Hence $\Re \lambda \geq \frac{1}{2}\Tr \mu$ and $\det \mu \leq |\lambda|^2 \leq \det L$.
    \item Assume that $(\alpha_i - \beta_i)^2 \geq  4\sigma^2_i$,  i.e.~the discriminant of the polynomial $P_i$ of \Cref{lemma: characterization of the spectrum} is non-negative. This implies that $\Delta =\left(\Tr L \right)^2 - 4\det\mu \geq 0$.

Denote by $\lambda_+$ and $\lambda_-$ the two real roots of $P_i$. Then $$\lambda_\pm = \frac{\alpha_i + \beta_i \pm \sqrt{(\alpha_i + \beta_i)^2 - 4(\alpha_i\beta_i + \sigma_i^2)}}{2}$$
Hence $$ \lambda_+ \geq \lambda_- \geq \min_{\max(\Tr \mu,\ 4\det \mu) \leq x \leq \Tr L} \frac{x - \sqrt{x^2 - 4\det\mu}}{2}$$
As $x \mapsto x - \sqrt{x^2 - 4\det\mu}$ is decreasing on its domain, the minimum is reached at $\Tr L$ and is $\frac{\Tr L - \sqrt{\Delta}}{2} \geq 0$. However this lower bound is quite loose when $A = 0$. So note that 
\begin{align}
\lambda_- &= \frac{\alpha_i + \beta_i - \sqrt{(\alpha_i + \beta_i)^2 - 4(\alpha_i\beta_i + \sigma_i^2)}}{2}\\ &\geq \frac{\alpha_i + \beta_i - \sqrt{(\alpha_i - \beta_i)^2}}{2} = \alpha_i \wedge \beta_i\\ &\geq \mu_1 \wedge \mu_2\\
\end{align}
Similarly,
\begin{equation}
\lambda_+ \leq \frac{\alpha_i + \beta_i + \sqrt{\left(\alpha_i - \beta_i \right)^2}}{2} = \alpha_i \vee \beta_i \leq L_1 \vee L_2 
\end{equation}

Finally:
\begin{equation}
L_1 \vee L_2 \geq \lambda_+ \geq \lambda_- \geq \max\left(\frac{\Tr L - \sqrt{\Delta}}{2}, \mu_1 \wedge \mu_2\right)\,.
\end{equation}
Moreover, \begin{align}
  \Tr L - \sqrt{\Delta} &=  \frac{\left(  \Tr L - \sqrt{\Delta}\right)\left(  \Tr L + \sqrt{\Delta}\right)}{  \Tr L + \sqrt{\Delta}}\\
  &= \frac{4\det\mu}{\Tr L + \sqrt{\Delta}}\\
  &\geq \frac{2\det\mu}{\Tr L}\,,
\end{align}
which yields the result.
\item These assertions are immediate corollaries of the two previous ones.
\end{enumerate}
\end{proof}

We need the following lemma to be able to interpret \Cref{thm: global rate eg} in the context of \cref{ex: highly adversarial game}, whose assumptions imply \cref{asm: square and commutative case}.
\begin{lemma}\label{lemma: singular values highly adv}
Under \cref{asm: square and commutative case}, the singular values of $\nabla v(\omega^*)$ can be lower bounded as:
\begin{equation}
\mu_{12}(\mu_{12} - \max(L_1 - \mu_2, L_2 - \mu_1)) \leq \sigma_{min}(\nabla v(\omega^*))^2\,.
\end{equation}
In particular, if $\mu_{12} > 2\max(L_1-\mu_2, L_2-\mu_1)$, this becomes 
\begin{equation}
\half\mu_{12}^2\leq \sigma_{min}(\nabla v(\omega^*))^2\,.
\end{equation}
\end{lemma}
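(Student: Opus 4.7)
My plan is to reduce $\sigma_{\min}(\nabla v(\omega^*))^2$ to the square of the eigenvalue of smallest absolute value of an explicit symmetric matrix whose spectrum we can compute. Consider
\[
N := \begin{pmatrix} S_1 & A \\ A^T & -S_2 \end{pmatrix}.
\]
A direct block multiplication, combined with the commutativity $S_2 A^T = A^T S_2$ to simplify the two cross terms $S_1 A - A S_2$ and $A^T S_1 - S_2 A^T$, shows that $N^T = N$ and $N^2 = (\nabla v(\omega^*))^T \nabla v(\omega^*)$. Consequently $\sigma_{\min}(\nabla v(\omega^*)) = \min_{\lambda \in \Sp N}|\lambda|$, so it is enough to lower bound the eigenvalue of $N$ closest to zero.

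To compute $\Sp N$, I would simultaneously diagonalize $S_1, S_2, AA^T$: since they pairwise commute and are symmetric, some orthogonal $U$ brings them to $\diag(\alpha_i), \diag(\beta_i), \diag(\sigma_i^2)$ with $\alpha_i \in [\mu_1, L_1]$, $\beta_i \in [\mu_2, L_2]$, $\sigma_i \in [\mu_{12}, L_{12}]$. Mirroring the proof of \Cref{lemma: characterization of the spectrum}, the block-determinant formula applies to $XI - N$ (its off-diagonal blocks $-A^T$ and $XI + S_2$ commute by Assumption~(ii)) and yields
\[
\det(XI - N) = \det\bigl(X^2 I + X(S_2 - S_1) - S_1 S_2 - AA^T\bigr) = \prod_{i=1}^m Q_i(X),
\]
where $Q_i(X) = X^2 - (\alpha_i - \beta_i)X - (\alpha_i \beta_i + \sigma_i^2)$. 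The discriminant of each $Q_i$ rearranges as $(\alpha_i + \beta_i)^2 + 4\sigma_i^2 \geq 0$, and the product of its roots, $-(\alpha_i \beta_i + \sigma_i^2)$, is $\leq 0$ because $\alpha_i, \beta_i \geq 0$; hence $Q_i$ has two real roots of opposite signs.

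The standard quadratic formula then gives
\[
\min\bigl\{|\lambda_+^{(i)}|,\, |\lambda_-^{(i)}|\bigr\} = \tfrac{1}{2}\bigl(\sqrt{(\alpha_i + \beta_i)^2 + 4\sigma_i^2} - |\alpha_i - \beta_i|\bigr).
\]
I would combine $\sqrt{(\alpha_i + \beta_i)^2 + 4\sigma_i^2} \geq 2\sigma_i \geq 2\mu_{12}$ with $|\alpha_i - \beta_i| \leq \max(L_1 - \mu_2, L_2 - \mu_1) =: c$: when $2\mu_{12} \geq c$ this yields $\min|\lambda^{(i)}| \geq \mu_{12} - c/2$, and squaring gives
\[
\min|\lambda^{(i)}|^2 \geq \mu_{12}^2 - \mu_{12}\,c + c^2/4 \geq \mu_{12}(\mu_{12} - c);
\]
when $2\mu_{12} < c$, the target right-hand side is already non-positive and the inequality is trivial. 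Taking the minimum over $i$ proves the first bound, and the case $\mu_{12} > 2c$ then gives $\mu_{12}(\mu_{12} - c) > \mu_{12}^2/2$, establishing the second. The only mildly delicate step is verifying the identity $N^2 = (\nabla v)^T \nabla v$ and the companion block-determinant reduction; both rely essentially on $A^T S_2 = S_2 A^T$ from \Cref{asm: square and commutative case}, and everything else is elementary quadratic algebra.
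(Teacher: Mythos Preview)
Your proof is correct and takes a genuinely different route from the paper. The paper works directly with $(\nabla v(\omega^*))^T\nabla v(\omega^*)$, applies the block-determinant formula to its characteristic polynomial, and obtains factors $Q_i(X)=(X-\alpha_i^2-\sigma_i^2)(X-\beta_i^2-\sigma_i^2)-(\alpha_i-\beta_i)^2\sigma_i^2$ whose smaller root it then lower-bounds using $\sqrt{a+b}\le\sqrt a+\sqrt b$. You instead notice the symmetric ``square root'' $N$ with $N^2=(\nabla v)^T\nabla v$, reduce to the spectrum of $N$, and land on the much simpler quadratic $X^2-(\alpha_i-\beta_i)X-(\alpha_i\beta_i+\sigma_i^2)$. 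The smaller-magnitude root of your $Q_i$ is exactly the square root of the paper's $\lambda_{i-}$, so the two computations are consistent; your path just avoids the heavier algebra and makes the key quantity $\tfrac12\bigl(\sqrt{(\alpha_i+\beta_i)^2+4\sigma_i^2}-|\alpha_i-\beta_i|\bigr)$ appear directly rather than after extracting a square root.

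One small remark: the identity $N^2=(\nabla v)^T\nabla v$ actually holds \emph{without} any commutativity hypothesis, since both products yield the same four blocks $S_1^2+AA^T$, $S_1A-AS_2$, $A^TS_1-S_2A^T$, $A^TA+S_2^2$ by direct computation. The commutativity $S_2A^T=A^TS_2$ is only needed where you use it next, namely for the block-determinant reduction of $\det(XI-N)$, and the full commutativity of $S_1,S_2,AA^T$ for the simultaneous diagonalization. This does not affect correctness.
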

\begin{proof}
To prove this we compute the eigenvalues of $(\nabla v(\omega^*))^T\nabla v(\omega^*)$.
We have that,
\begin{equation}
(\nabla v(\omega^*))^T\nabla v(\omega^*) = \begin{pmatrix}
S_1^2 + AA^T & S_1A-AS_2\\
A^TS_1 - S_2A^T & A^TA + S_2^2\\
\end{pmatrix}\,.
\end{equation}
As in the proof of \Cref{lemma: characterization of the spectrum}, as \cref{asm: square and commutative case} implies that $A^TS_1 - S_2A^T$ and $A^TA + S_2^2$ commute,
\begin{align}
|XI-(\nabla v(\omega^*))^T\nabla v(\omega^*)| &= \left|(XI - S_1^2 - AA^T)(XI - S_2^2 - A^TA)-(S_1 - S_2)^2AA^T\right|\\
&= \prod_i\left((XI - \alpha_i^2 - \sigma_i^2)(XI - \beta_i^2 - \sigma_i^2) - (\alpha_i - \beta_i)^2\sigma_i^2\right)\,.
\end{align}
Let $Q_i(X) = (XI - \alpha_i^2 - \sigma_i^2)(XI - \beta_i^2 - \sigma_i^2) - (\alpha_i - \beta_i)^2\sigma_i^2$. Its discriminant is \begin{align}
\Delta_i' &= (\alpha_i^2 + \beta_i^2 + 2\sigma_i^2)^2 - 4\left((\alpha_i^2 + \sigma_i^2)(\beta_i^2 + \sigma_i^2) - (\alpha_i - \beta_i)^2)\sigma_i^2\right)\\
&= (\alpha_i - \beta_i)^2((\alpha_i+\beta_i)^2 + 4\sigma_i^2) \geq 0\,.
\end{align}
Hence the roots of $Q_i$ are:
\begin{equation}
\lambda_{i\pm} = \half\left(\alpha_i^2 + \beta_i^2 + 2\sigma_i^2 \pm \sqrt{ (\alpha_i - \beta_i)^2((\alpha_i+\beta_i)^2 + 4\sigma_i^2) }\right)\,.
\end{equation}
The smallest is $\lambda_{i-}$ which can be lower bounded by
\begin{align}
\lambda_{i-} &= \half\left(\alpha_i^2 + \beta_i^2 + 2\sigma_i^2 - \sqrt{ (\alpha_i + \beta_i)^2((\alpha_i+\beta_i)^2 + 4\sigma_i^2)}\right)\\
&\geq \half\left(\alpha_i^2 + \beta_i^2 - |\alpha_i^2 - \beta_i^2| + 2\sigma_i(\sigma_i - |\alpha_i - \beta_i|)\right)\\
&\geq \sigma_i(\sigma_i - |\alpha_i - \beta_i|)\\
&\geq \mu_{12}(\mu_{12} - \max(L_1 - \mu_2, L_2 - \mu_1))\,.
\end{align}
\end{proof}
 
\section{Complement for \cref{section: game optimization}}\label{section: complement background}
The convergence result of \Cref{thm: local convergence} can be strengthened if the Jacobian is constant as shown below.
 A proof of this classical result in linear algebra can be found in \cite{arjevaniLowerUpperBounds2016} for instance.

\begin{thm}\label{thm: global convergence for constant jacobian}
Let $F: \R^d \longrightarrow \R^d$ be a linear operator. If $\rho(\nabla F) < 1$, then for all $\omega_0 \in \R^d$, the iterates $(\omega_t)_t$ defined as above
converge linearly to $\omega^*$ at a rate of $\bigO((\rho(\nabla F))^t)$.
\end{thm}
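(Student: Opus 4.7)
The plan is to reduce the statement to the spectral analysis of the matrix $A := \nabla F$ (which is constant because $F$ is linear) and then to invoke a standard Jordan-form argument.

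First, I would note that since $F$ is linear and $\rho(A) < 1$, the value $1$ is not in $\Sp(A)$, so $I - A$ is invertible and the unique fixed point is $\omega^* = 0$. The iterates therefore satisfy $\omega_t - \omega^* = A^t (\omega_0 - \omega^*)$, and the statement reduces to proving $\|A^t\| = \bigO(\rho(A)^t)$ in the sense of the asymptotic linear rate (i.e.\ matching the convergence notion used in \Cref{thm: local convergence}).

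Next, I would put $A$ in Jordan normal form: write $A = P J P^{-1}$ where $J$ is block-diagonal with blocks of the form $J_k(\lambda) = \lambda I_k + N_k$, $N_k$ nilpotent of index $k$. Using the binomial identity one gets the explicit formula
\begin{equation*}
J_k(\lambda)^t \;=\; \sum_{j=0}^{k-1} \binom{t}{j}\, \lambda^{t-j} N_k^{j},
\end{equation*}
so $\|J_k(\lambda)^t\|$ is bounded by a polynomial in $t$ of degree at most $k-1$ times $|\lambda|^t$. Since $|\lambda| \leq \rho(A)$ for every eigenvalue and there are finitely many blocks of total size $d$, one obtains a bound of the form $\|A^t\| \leq C(1 + t^{d-1})\, \rho(A)^t$ with $C$ depending only on $A$. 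Combined with $\omega_t = A^t \omega_0$, this gives the announced linear convergence, since for any $\epsilon > 0$ the factor $1 + t^{d-1}$ is absorbed into $(\rho(A) + \epsilon)^t$.

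The only genuine obstacle is the non-diagonalizable case, where the polynomial prefactor from defective Jordan blocks prevents a literal bound $C \rho(A)^t$ with $C$ independent of $t$. This is the standard reason why "linear rate $\bigO(\rho(A)^t)$" is interpreted in Gelfand's sense $\limsup_t \|A^t\|^{1/t} = \rho(A)$, or equivalently via the existence, for every $\epsilon>0$, of an adapted norm $\|\cdot\|_\epsilon$ on $\R^d$ such that $\|A\|_\epsilon \leq \rho(A) + \epsilon$ (constructed by rescaling the Jordan basis). Either formulation closes the argument and matches the statement as used throughout the paper.
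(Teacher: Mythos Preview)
The paper does not prove this statement itself; it simply points to \citet{arjevaniLowerUpperBounds2016} as a reference. Your Jordan-form argument is the standard route and is correct: once you write $\omega_t - \omega^* = A^t(\omega_0 - \omega^*)$ with $A = \nabla F$, the whole question is the growth of $\|A^t\|$, and the block-by-block binomial expansion you give is exactly how one controls it.

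You are also right to flag the non-diagonalizable case: with a defective Jordan block one only obtains $\|A^t\| \leq C\, t^{d-1}\rho(A)^t$, so a literal $\bigO(\rho(A)^t)$ bound (without the $+\epsilon$ present in \Cref{thm: local convergence}) fails unless $A$ is diagonalizable. The theorem as stated is therefore slightly imprecise on this point, and your Gelfand / adapted-norm reading is the honest way to close it. One minor remark: in the paper's usage ``linear'' really means ``affine with constant Jacobian'' (e.g.\ extragradient on the bilinear game of \Cref{ex: bilinear game} with $b,c \neq 0$), so $\omega^*$ need not be $0$---but your key identity $\omega_t - \omega^* = A^t(\omega_0 - \omega^*)$ holds for affine $F$ as well, so nothing in your argument changes.
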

\section{Convergence results of \S\ref{section: revisiting}}

 Let us restate \Cref{thm: spectral rate for gradient} for clarity.

\thmSpectralRateGradient*
In this subsection, we quickly show how to obtain $(i)$ of \Cref{thm: spectral rate for gradient} from Theorem 2 of \citet{gidelNegativeMomentumImproved2018b}, whose part which interests us now is the following:
\begin{thm*}[{\citet[part of Theorem 2]{gidelNegativeMomentumImproved2018b}}]
If the eigenvalues of $\nabla v(\omega^*)$ all have positive real parts, then for $\eta=\Re(1/\lambda_1)$ one has
\begin{equation}
\rho(\nabla F_\eta(\omega^*))^2 \leq 1 -  \Re(1/\lambda_1) \delta
\end{equation}
where $\delta = \min_{1 \leq j \leq m} |\lambda_j|^2(2 \Re(1/\lambda_j) - \Re(1/\lambda_1))$ and $\Sp \nabla v(\omega^*) = \{\lambda_1,\dots,\lambda_m\}$ sorted such that $0 < \Re(1/\lambda_1) \leq \Re(1/\lambda_2) \leq \dots \leq \Re(1/\lambda_m)$.
\end{thm*}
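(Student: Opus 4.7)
The plan is a direct spectral computation. Since $F_\eta(\omega) = \omega - \eta v(\omega)$, one has $\nabla F_\eta(\omega^*) = I - \eta \nabla v(\omega^*)$, so the eigenvalues of $\nabla F_\eta(\omega^*)$ are exactly $\{1-\eta\lambda_j\}_{j=1}^{m}$ and
\[
\rho(\nabla F_\eta(\omega^*))^2 \;=\; \max_{1\le j\le m}\, |1-\eta\lambda_j|^2.
\]
So the whole theorem reduces to bounding this maximum by $1-\Re(1/\lambda_1)\delta$ with the specific choice $\eta=\Re(1/\lambda_1)$.

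First I would expand each squared modulus into real and imaginary parts,
\[
|1-\eta\lambda_j|^2 \;=\; 1 - 2\eta\,\Re(\lambda_j) + \eta^2\,|\lambda_j|^2.
\]
The key algebraic step is the identity $\Re(\lambda_j) = |\lambda_j|^2\,\Re(1/\lambda_j)$, which follows from $1/\lambda_j = \bar{\lambda}_j/|\lambda_j|^2$. Substituting this in and factoring out $\eta |\lambda_j|^2$ turns the expression into exactly the shape of the bound we want to prove:
\[
|1-\eta\lambda_j|^2 \;=\; 1 - \eta\,|\lambda_j|^2\bigl(2\Re(1/\lambda_j) - \eta\bigr).
\]

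To turn the $\max$ over $j$ into a $\min$ preceded by a minus sign, I need the inner quantities $|\lambda_j|^2\bigl(2\Re(1/\lambda_j)-\eta\bigr)$ to be non-negative. Plugging in $\eta = \Re(1/\lambda_1)$ and using the sorting hypothesis $\Re(1/\lambda_1) \leq \Re(1/\lambda_j)$, we have $2\Re(1/\lambda_j)-\eta \ge \Re(1/\lambda_j) > 0$ for every $j$ (the strict positivity uses the assumption that all eigenvalues of $\nabla v(\omega^*)$ have positive real parts, which makes $\Re(1/\lambda_j) = \Re(\lambda_j)/|\lambda_j|^2 > 0$). Taking the maximum over $j$ then yields
\[
\rho(\nabla F_\eta(\omega^*))^2 \;=\; 1 - \eta \min_{1\le j\le m} |\lambda_j|^2\bigl(2\Re(1/\lambda_j) - \eta\bigr) \;=\; 1 - \Re(1/\lambda_1)\,\delta,
\]
which is the stated bound (in fact with equality).

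There is no genuine obstacle in this argument: everything follows from the single rewriting $2\Re(\lambda_j)=2|\lambda_j|^2\Re(1/\lambda_j)$, which is precisely what makes the quantity $\Re(1/\lambda_j)$ appearing on the right-hand side of the bound emerge naturally. The positive-real-part hypothesis enters only to ensure that $\eta=\Re(1/\lambda_1)$ is a well-defined positive step size and that the sign check above produces a bound strictly less than one.
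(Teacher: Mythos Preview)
Your proof is correct. Note, however, that the paper does not give its own proof of this statement: it is quoted verbatim from \citet{gidelNegativeMomentumImproved2018b} and used as a black box to derive part~(i) of \Cref{thm: spectral rate for gradient}. Your direct spectral computation is exactly the natural argument one would expect in the original source: expand $|1-\eta\lambda_j|^2$, use the identity $\Re(\lambda_j)=|\lambda_j|^2\Re(1/\lambda_j)$, and exploit the sorting hypothesis to ensure every term $2\Re(1/\lambda_j)-\eta$ is nonnegative so that the $\max$ converts to a $\min$. You in fact obtain equality, which is slightly sharper than the inequality as stated.
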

\begin{proof}[Proof of $(i)$ of \Cref{thm: spectral rate for gradient}]
By definition of the order on the eigenvalues,
\begin{align}
\delta &= \min_{1 \leq j \leq m} |\lambda_j|^2(\Re(1/\lambda_j) + \Re(1/\lambda_j) - \Re(1/\lambda_1))\\
&\geq \min_{1 \leq j \leq m} |\lambda_j|^2(\Re(1/\lambda_j))\\
&= \min_{1 \leq j \leq m} \Re(\lambda_j)
\end{align}
\end{proof}
To prove the second part of \Cref{thm: spectral rate for gradient}, we rely on a different part of \citet[Theorem 2]{gidelNegativeMomentumImproved2018b} which we recall below:
 \begin{thm*}[{\citet[part of Theorem 2]{gidelNegativeMomentumImproved2018b}}]
 The best step-size $\eta^*$, that is to say the solution of the optimization problem
 \begin{equation}
    \min_{\eta}\rho(\nabla F_\eta(\omega^*))^2\,, 
 \end{equation}
 satisfy:
 \begin{equation}
\min_{\lambda \in \sigma^*} \Re(1/\lambda) \leq \eta^* \leq 2\min_{\lambda \in \sigma^*} \Re(1/\lambda)\,.
 \end{equation}
 \end{thm*}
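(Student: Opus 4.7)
Since $F_\eta(\omega) = \omega - \eta v(\omega)$, the Jacobian at $\omega^*$ is $\nabla F_\eta(\omega^*) = I - \eta \nabla v(\omega^*)$, whose spectrum is $\{1-\eta\lambda : \lambda \in \sigma^*\}$. Writing $|1-\eta\lambda|^2 = 1 - 2\eta\Re\lambda + \eta^2|\lambda|^2$ and taking the max over $\lambda$, one obtains
\begin{equation*}
\rho(\nabla F_\eta(\omega^*))^2 = 1 - \min_{\lambda\in\sigma^*} g_\lambda(\eta), \qquad g_\lambda(\eta) := 2\eta\Re\lambda - \eta^2|\lambda|^2 .
\end{equation*}
Since $\Re\lambda>0$ for every $\lambda\in\sigma^*$, each $g_\lambda$ is a concave parabola vanishing at $\eta=0$ and $\eta=2\Re(1/\lambda)$, with maximum at $\eta=\Re(1/\lambda)$. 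Set $\eta_{\min} := \min_{\lambda\in\sigma^*}\Re(1/\lambda) > 0$; because $\sigma^*$ is finite, all minima and argmins are attained.

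For the lower bound $\eta^*\geq\eta_{\min}$, I would show that $\eta \mapsto \min_\lambda g_\lambda(\eta)$ is strictly increasing on $[0,\eta_{\min}]$, which would make $\rho^2$ strictly decreasing there and rule out any optimizer below $\eta_{\min}$. For $0\leq \eta_1 < \eta_2 \leq \eta_{\min}$, pick $\lambda_2 \in \argmin_{\lambda\in\sigma^*} g_\lambda(\eta_2)$. Since $\eta_2\leq \eta_{\min}\leq \Re(1/\lambda_2)$, the parabola $g_{\lambda_2}$ is strictly increasing on $[0,\eta_2]$, so
\begin{equation*}
\min_\lambda g_\lambda(\eta_2) = g_{\lambda_2}(\eta_2) > g_{\lambda_2}(\eta_1) \geq \min_\lambda g_\lambda(\eta_1),
\end{equation*}
which is the desired strict monotonicity.

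For the upper bound $\eta^*\leq 2\eta_{\min}$, let $\lambda_*\in\sigma^*$ attain $\Re(1/\lambda_*)=\eta_{\min}$, so that $2\eta_{\min}$ is exactly the larger root of $g_{\lambda_*}$. Then for any $\eta>2\eta_{\min}$, $g_{\lambda_*}(\eta)<0$, so $\rho^2(\eta)>1$. On the other hand, at $\eta=\eta_{\min}$, every $g_\lambda(\eta_{\min})$ lies strictly between its roots $0$ and $2\Re(1/\lambda)$, hence is strictly positive, giving $\rho^2(\eta_{\min})<1$. Combining this with the continuity of $\rho^2$ and $\rho^2(\eta)\to +\infty$ as $\eta\to +\infty$ (which gives coercivity), the minimum of $\rho^2$ over $\eta>0$ is attained at some $\eta^*$, and by the two observations above it cannot lie in $(2\eta_{\min},+\infty)$.

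The only delicate point is the strict-monotonicity step: one cannot pointwise minimize a family of strictly increasing functions and immediately conclude strict monotonicity of the minimum, so the argument must compare the two values of $\eta$ using the \emph{same} witness $\lambda_2$ chosen at the larger point. Everything else reduces to elementary properties of the univariate concave parabolas $g_\lambda$ and the fact that $\sigma^*$ is finite.
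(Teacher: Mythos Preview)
Your proof is correct, but note that the paper does not actually prove this statement: it is quoted from Gidel et al.\ (2019) as a black box and immediately used to derive part $(ii)$ of Theorem~\ref{thm: spectral rate for gradient}. So there is no in-paper proof to compare against, and your argument is genuinely additional content.

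On its own merits the argument is clean. The reformulation $\rho(\nabla F_\eta(\omega^*))^2 = 1 - \min_{\lambda\in\sigma^*} g_\lambda(\eta)$ with $g_\lambda(\eta) = 2\eta\Re\lambda - \eta^2|\lambda|^2$ is the natural one, and the identification of the roots $0$, $2\Re(1/\lambda)$ and vertex $\Re(1/\lambda)$ of each concave parabola (via $\Re\lambda/|\lambda|^2 = \Re(1/\lambda)$) is correct. For the lower bound, picking the witness $\lambda_2$ at the \emph{larger} point $\eta_2$ and using that $g_{\lambda_2}$ is increasing on $[0,\Re(1/\lambda_2)]\supseteq[0,\eta_2]$ is exactly the right way to handle the subtlety you flag about pointwise minima. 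For the upper bound, the comparison $\rho^2(\eta_{\min})<1$ versus $\rho^2(\eta)\geq 1$ for $\eta\geq 2\eta_{\min}$ is sound. One small completeness remark: since the theorem is stated as $\min_\eta$ without restricting to $\eta>0$, you should also note that $g_\lambda(\eta)\leq 0$ for every $\eta\leq 0$ (both terms are nonpositive), so $\rho^2(\eta)\geq 1$ there as well; then continuity on the compact interval $[0,2\eta_{\min}]$ gives existence of the global minimizer and the claimed bracket.
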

$(ii)$ of \cref{thm: spectral rate for gradient} is now immediate.
\begin{proof}[Proof of $(ii)$ of \cref{thm: spectral rate for gradient}]
By definition of the spectral radius, 
\begin{align}
\rho(\nabla F_{\eta^*}(\omega^*))^2 &= \max_{\lambda \in \Sp(\nabla v(\omega^*)} |1 - \eta^* \lambda|^2\\
&= 1 - \min_{\lambda \in \Sp(\nabla v(\omega^*)} 2\eta^* \Re \lambda - |\eta^* \lambda|^2\\
&\geq  1 - \min_{\lambda \in \Sp(\nabla v(\omega^*)} 2\eta^* \Re \lambda\\
&\geq 1 - 4\min_{\lambda \in \Sp(\nabla v(\omega^*)} \Re \lambda \min_{\lambda \in \sigma^*} \Re(1/\lambda)
\end{align}
\end{proof}
\corInterpretedRateOfGradient*
\begin{proof}
Note that the hypotheses stated in \S\ref{subsection: comparison} correspond to the assumptions of \S\ref{subsection: commutative and square case}. Moreover, with the notations of this subsection, one has that $4 \sigma_i^2 \geq 4\mu_{12}^2$ and $\max(L_1, L_2)^2 \geq (\alpha_i - \beta_i)^2$. Hence the condition $2\mu_{12} \geq \max(L_1, L_2)$ implies that all the eigenvalues of $\nabla v(\omega^*)$ satisfy the case $(a)$ of \Cref{thm: interpretation of spectral quantities commutative case}. Then, using \Cref{thm: spectral rate for gradient}, 
\begin{align}
\rho(\nabla F_\eta(\omega^*))^2 &\leq 1 -  \min_{\lambda \in \sigma^*} \Re(1/\lambda) \min_{\lambda \in \sigma^*} \Re(\lambda)\\
&\leq  1 -  \left(\frac{\min_{\lambda \in \sigma^*} \Re(\lambda)}{\max_{\lambda \in \sigma^*} |\lambda|}\right)^2\\
&\leq  1 - \frac{1}{4}\frac{(\mu_1 + \mu_2)^2}{L_{12}^2 + L_1L_2} \,.
\end{align}
\end{proof}
\section{Spectral analysis of \S\ref{section: convergence analysis}}\label{subsection: app spectral analysis of eg} %
We prove \cref{lemma: spectra operators}.
\begin{lemma}
Assuming that the eigenvalues of $\nabla v(\omega^*)$ all have non-negative real parts, the proximal point operator $P_\eta$ is continuously differentiable in a neighborhood of $\omega^*$ . Moreover, the spectra of the $k$-extrapolation operator and the proximal point operator are given by:
\begin{align}
&\Sp \nabla F_{\eta, k}(\omega^*) = \big\{ \textstyle\sum_{j = 0}^k (-\eta \lambda)^j\ |\ \lambda \in \sigma^*\big\} \\
& \text{and}\ \ 
\Sp \nabla P_\eta(\omega^*) = \left\{ (1 + \eta \lambda)^{-1}\ |\ \lambda \in \sigma^*\right\}\,.
\end{align}
Hence, for all $\eta > 0$, the spectral radius of the operator of the proximal point method is equal to: \begin{equation}
\rho(\nabla P_{\eta}(\omega^*))^2 = 1 - \min_{\lambda \in \sigma^*} \tfrac{2\eta \Re \lambda + \eta^2|\lambda|^2}{|1+\eta \lambda|^2}\,.
\end{equation}
\end{lemma}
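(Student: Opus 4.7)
The plan is to handle the three assertions in turn, since each reduces to a short calculation plus a spectral mapping argument. I will assume throughout that $J := \nabla v(\omega^*)$ and use that $v(\omega^*) = 0$, so $\omega^*$ is also a fixed point of $F_{\eta,k}$ and of $P_\eta$.

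For the $k$-extrapolation operator, the idea is to unroll the recursion $\varphi_{\eta,\omega}$ and compute its derivative at $\omega = \omega^*$ by induction on $k$. Setting $z_0 = \omega$ and $z_{j+1} = \omega - \eta v(z_j)$, we have $F_{\eta,k}(\omega) = z_k$, so if $M_j = \nabla_\omega z_j$ evaluated at $\omega = \omega^*$ (where all $z_j = \omega^*$), the chain rule gives the linear recurrence $M_{j+1} = I - \eta J M_j$ with $M_0 = I$. A trivial induction yields $M_k = \sum_{j=0}^{k}(-\eta J)^j$, and by the spectral mapping theorem applied to this polynomial in $J$, $\Sp \nabla F_{\eta,k}(\omega^*) = \{\sum_{j=0}^{k}(-\eta\lambda)^j : \lambda \in \sigma^*\}$. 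The only subtlety is checking the base case $k=1$ and the indexing, which is a quick sanity check against the definitions of GD and EG.

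For the proximal point operator, I would first establish the local differentiability by the inverse function theorem. Since $\Id + \eta v$ is continuously differentiable with Jacobian $I + \eta \nabla v$, and at $\omega^*$ the matrix $I + \eta J$ has eigenvalues $1 + \eta \lambda$ whose real parts satisfy $\Re(1+\eta\lambda) \geq 1 > 0$ under the assumption that $\Re \lambda \geq 0$, the map $\Id + \eta v$ is a local diffeomorphism near $\omega^*$. Hence $P_\eta = (\Id + \eta v)^{-1}$ is $C^1$ nearby, and $\nabla P_\eta(\omega^*) = (I + \eta J)^{-1}$. Spectral mapping applied to the rational function $z \mapsto (1+\eta z)^{-1}$ then yields $\Sp \nabla P_\eta(\omega^*) = \{(1+\eta\lambda)^{-1} : \lambda \in \sigma^*\}$.

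The final formula for $\rho(\nabla P_\eta(\omega^*))^2$ is then pure algebra: by definition of the spectral radius,
\begin{equation*}
\rho(\nabla P_\eta(\omega^*))^2 = \max_{\lambda \in \sigma^*} \frac{1}{|1+\eta\lambda|^2} = 1 - \min_{\lambda \in \sigma^*}\left(1 - \frac{1}{|1+\eta\lambda|^2}\right),
\end{equation*}
and expanding $|1+\eta\lambda|^2 = 1 + 2\eta \Re\lambda + \eta^2|\lambda|^2$ lets one rewrite $1 - |1+\eta\lambda|^{-2}$ as $(2\eta \Re\lambda + \eta^2 |\lambda|^2)/|1+\eta\lambda|^2$, giving the stated expression. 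There is no real obstacle here; the only point requiring care is ensuring that the hypothesis $\Re \lambda \geq 0$ is used both to make $I + \eta J$ invertible (so $P_\eta$ is well-defined and differentiable) and to guarantee that the minimized quantity is non-negative so the identity $\max = 1 - \min$ is consistent with $\rho \geq 0$.
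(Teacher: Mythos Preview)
Your proposal is correct and follows essentially the same route as the paper: spectral mapping for $\nabla F_{\eta,k}(\omega^*)$, the inverse function theorem plus spectral mapping for $\nabla P_\eta(\omega^*)$, and the same algebraic identity for the spectral radius. The only cosmetic difference is that the paper first proves a separate lemma computing $\nabla_z \varphi_\omega^k$ and $\nabla_\omega \varphi_\omega^k$ at arbitrary points (reused later) and then combines them at $\omega^*$, whereas you run the induction directly on the total derivative via the recurrence $M_{j+1} = I - \eta J M_j$; both yield $\nabla F_{\eta,k}(\omega^*) = \sum_{j=0}^k(-\eta J)^j$ and the rest coincides.
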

To prove the result about the $k$-extrapolation operator, we first show the following lemma, which will be used again later.

Recall that we defined $\varphi_{\eta, \omega}: z \mapsto \omega - \eta v(z)$. We drop the dependence on $\eta$ in $\varphi_{\eta, \omega}$ for compactness.
\begin{lemma}\label{lemma: jacobians of extrapolation operators}
The Jacobians of $\varphi_\omega^k(z)$ with respect to $z$ and $\omega$ can be written as 
\begin{align}
    \nabla_z \varphi_\omega^k(z) &= (-\eta)^k \nabla v(\varphi_\omega^{k-1}(z))\nabla v(\varphi_\omega^{k-2}(z))\dots\nabla v(\varphi_\omega^{0}(z))\\
    \nabla_\omega \varphi_\omega^k(z) &= \sum_{j=0}^{k-1} (-\eta)^{j} \nabla v(\varphi_\omega^{k-1}(z))\nabla v(\varphi_\omega^{k-2}(z))\dots\nabla v(\varphi_\omega^{k-j}(z))\,.
\end{align}
\end{lemma}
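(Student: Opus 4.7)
The two identities are purely a consequence of the chain rule applied to the recursion $\varphi_\omega^{k+1}(z) = \varphi_\omega(\varphi_\omega^k(z)) = \omega - \eta\, v(\varphi_\omega^k(z))$, so I would prove both formulas simultaneously by induction on $k \geq 0$, with the convention that an empty product of Jacobians equals the identity $I$ and an empty sum equals $0$.

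For the base cases, when $k=0$ we have $\varphi_\omega^0(z) = z$, so $\nabla_z \varphi_\omega^0(z) = I$ (empty product, matching $(-\eta)^0$ times nothing) and $\nabla_\omega \varphi_\omega^0(z) = 0$ (empty sum). When $k=1$, direct differentiation of $\omega - \eta v(z)$ gives $\nabla_z \varphi_\omega^1(z) = -\eta\, \nabla v(\varphi_\omega^0(z))$ and $\nabla_\omega \varphi_\omega^1(z) = I$, which agree with the $j=0$ term in the second sum.

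For the inductive step, assume both formulas hold at level $k$. Differentiating $\varphi_\omega^{k+1}(z) = \omega - \eta\, v(\varphi_\omega^k(z))$ in $z$ and using the induction hypothesis,
\begin{equation*}
\nabla_z \varphi_\omega^{k+1}(z) = -\eta\, \nabla v(\varphi_\omega^k(z))\, \nabla_z \varphi_\omega^k(z) = (-\eta)^{k+1}\prod_{i=0}^{k} \nabla v(\varphi_\omega^{k-i}(z)),
\end{equation*}
with the product read from left to right, which is exactly the first formula at level $k+1$. Similarly, differentiating in $\omega$,
\begin{equation*}
\nabla_\omega \varphi_\omega^{k+1}(z) = I - \eta\, \nabla v(\varphi_\omega^k(z))\, \nabla_\omega \varphi_\omega^k(z) = I + \sum_{j=0}^{k-1} (-\eta)^{j+1} \nabla v(\varphi_\omega^k(z)) \nabla v(\varphi_\omega^{k-1}(z))\cdots \nabla v(\varphi_\omega^{k-j}(z)),
\end{equation*}
and after reindexing $j' = j+1$ and absorbing the leading $I$ as the $j'=0$ empty-product term, this becomes $\sum_{j'=0}^{k}(-\eta)^{j'} \nabla v(\varphi_\omega^k(z))\cdots \nabla v(\varphi_\omega^{(k+1)-j'}(z))$, matching the second formula at level $k+1$.

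The only subtlety is bookkeeping with the edge cases (empty product and empty sum) and with the ordering of the Jacobian factors, since they do not commute in general; writing them explicitly from left to right as above makes the induction transparent. There is no genuine obstacle beyond careful reindexing.
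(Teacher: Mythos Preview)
Your proof is correct and follows essentially the same approach as the paper: induction on $k$ using the chain rule on the recursion $\varphi_\omega^{k+1}(z)=\omega-\eta\,v(\varphi_\omega^k(z))$, with the same reindexing to absorb the $I$ as the $j'=0$ term. The only cosmetic difference is that you include the $k=0$ base case with the empty-product/empty-sum convention, whereas the paper starts directly at $k=1$.
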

\begin{proof}
We prove the result by induction:
\begin{itemize}
    \item For $k = 1$, $\varphi_\omega(z) = \omega - \eta v(z)$ and the result holds.
    \item Assume this result holds for $k \geq 0$. Then, 
    \begin{align}
        \nabla_z \varphi_\omega^{k+1}(z) &= \nabla_z \varphi_\omega(\varphi_\omega^k(z))\nabla_z \varphi_\omega^{k}(z)\\
        &=-\eta \nabla v(\varphi_\omega^k(z)) (-\eta)^k \nabla v(\varphi_\omega^{k-1}(z))\dots\nabla v(\varphi_\omega^{0}(z))\\
        &=(-\eta)^{k+1} \nabla v(\varphi_\omega^{k}(z))\nabla v(\varphi_\omega^{k-1}(z))\dots\nabla v(\varphi_\omega^{0}(z))\,.
    \end{align}
    For the derivative with respect to $\omega$, we use the chain rule:
    \begin{align}
    \nabla_\omega\varphi_\omega^{k+1}(z) &= \nabla_\omega \varphi_\omega(\varphi_\omega^k(z)) + \nabla_z \varphi_\omega(\varphi_\omega^k(z))\nabla_\omega\varphi_\omega^{k}(z)\\
    &= I_d -\eta v(\varphi_\omega^k(z))\sum_{j=0}^{k-1} (-\eta)^{j} \nabla v(\varphi_\omega^{k-1}(z))\dots\nabla v(\varphi_\omega^{k-j}(z))\\
    &= I_d + \sum_{j=0}^{k-1} (-\eta)^{j+1} \nabla v(\varphi_\omega^{k}(z))\nabla v(\varphi_\omega^{k-1}(z))\dots\nabla v(\varphi_\omega^{k-j}(z))\\
    &= I_d + \sum_{j=1}^{k} (-\eta)^{j} \nabla v(\varphi_\omega^{k}(z))\nabla v(\varphi_\omega^{k-1}(z))\dots\nabla v(\varphi_\omega^{k+1-j}(z))\\
    &=\sum_{j=0}^{k} (-\eta)^{j} \nabla v(\varphi_\omega^{k}(z))\nabla v(\varphi_\omega^{k-1}(z))\dots\nabla v(\varphi_\omega^{k+1-j}(z))
    \end{align}
\end{itemize}
\end{proof}
In the proof of \Cref{lemma: spectra operators} and later we will use the spectral mapping theorem, which we state below for reference:
\begin{thm}[Spectral Mapping Theorem]\label{thm: spectral mapping theorem}
Let $A \in \mathbb{C}^{d\times d}$ be a square matrix, and $P$ be a polynomial. Then,
\begin{equation}
\Sp P(A) = \{P(\lambda)\ |\ \lambda \in \Sp A\}\,.    
\end{equation}
\end{thm}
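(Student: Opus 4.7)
The plan is to establish the two inclusions separately, using nothing beyond the fundamental theorem of algebra and the fact that a product of matrices is singular if and only if one of the factors is. First I will dispose of the trivial case when $P$ is a constant polynomial $P \equiv c$: then $P(A) = cI$, whose only eigenvalue is $c = P(\lambda)$ for any $\lambda \in \Sp A$, so both sides of the claimed equality equal $\{c\}$ (using that $\Sp A \neq \emptyset$ since we are over $\mathbb{C}$). From now on assume $\deg P = n \geq 1$.

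\textbf{Easy inclusion $\{P(\lambda) : \lambda \in \Sp A\} \subseteq \Sp P(A)$.} Let $\lambda \in \Sp A$ and pick an eigenvector $x \in \mathbb{C}^d \setminus \{0\}$ with $Ax = \lambda x$. A straightforward induction gives $A^k x = \lambda^k x$ for every $k \geq 0$, so linearity in the coefficients of $P$ yields $P(A) x = P(\lambda) x$. Hence $P(\lambda)$ is an eigenvalue of $P(A)$.

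\textbf{Hard inclusion $\Sp P(A) \subseteq \{P(\lambda) : \lambda \in \Sp A\}$.} Fix $\mu \in \Sp P(A)$. Applying the fundamental theorem of algebra to the degree-$n$ polynomial $P(X) - \mu$, write
\begin{equation*}
P(X) - \mu = c \prod_{i=1}^n (X - \nu_i)
\end{equation*}
where $c \neq 0$ is the leading coefficient of $P$ and $\nu_1, \dots, \nu_n \in \mathbb{C}$ are the roots. Since the factors $X - \nu_i$ pairwise commute, substituting the matrix $A$ gives the corresponding matrix identity
\begin{equation*}
P(A) - \mu I = c \prod_{i=1}^n (A - \nu_i I).
\end{equation*}
By assumption $P(A) - \mu I$ is singular, and a product of commuting square matrices is singular if and only if at least one factor is; therefore there exists $i_0$ with $A - \nu_{i_0} I$ singular, i.e.\ $\nu_{i_0} \in \Sp A$. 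By construction $P(\nu_{i_0}) - \mu = 0$, so $\mu = P(\nu_{i_0})$ lies in the right-hand set.

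\textbf{Main obstacle.} There is no real obstacle: the argument is entirely algebraic and the only subtlety is making sure to treat the degree-zero case separately, and to observe that commutativity of the factors $A - \nu_i I$ legitimizes passing from the scalar factorization of $P(X) - \mu$ to the matrix factorization of $P(A) - \mu I$.
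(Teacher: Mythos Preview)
Your proof is correct and is the standard argument for the spectral mapping theorem. Note that the paper does not supply its own proof of this statement; it simply cites \emph{Lax, Linear Algebra and Its Applications}, Theorem~4, p.~66, whose proof is essentially the one you wrote: factor $P(X)-\mu$ over $\mathbb{C}$, substitute $A$, and use multiplicativity of the determinant to conclude that some $A-\nu_{i_0}I$ is singular. Two minor stylistic remarks: commutativity of the factors $A-\nu_i I$ is not what is needed to pass from the scalar factorization to the matrix one---what matters is that $X\mapsto A$ is a ring homomorphism from $\mathbb{C}[X]$ into the matrix algebra---and commutativity is likewise unnecessary for the singularity argument, since $\det\!\big(\prod_i M_i\big)=\prod_i\det M_i$ holds for arbitrary square matrices.
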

See for instance \citet[Theorem 4, p.~66 ]{laxLinearAlgebraIts2007} for a proof.
\begin{proof}[Proof of \Cref{lemma: spectra operators}]
First we compute $\nabla F_{\eta, k}(\omega^*)$. As $\omega^*$ is a stationary point, it is a fixed point of the extrapolation operators, i.e.~$\varphi_{\omega^*}^j(\omega^*) = \omega^*$ for all $j \geq 0$. Then, by the chain rule, 
\begin{align}
\nabla F_{\eta, k}(\omega^*) &= \nabla_z \varphi_{\omega^*}^k(\omega^*) + \nabla_\omega \varphi_{\omega^*}^k(\omega^*)\\
&= (-\eta \nabla v(\omega^*))^k +  \sum_{j=0}^{k-1} (-\eta \nabla v(\omega^*))^{j}\\
&=  \sum_{j=0}^{k} (-\eta \nabla v(\omega^*))^{j}\,.
\end{align}
Hence $\nabla F_{\eta, k}(\omega^*)$ is a polynomial in $\nabla v(\omega^*)$.
Using the spectral mapping theorem (\Cref{thm: spectral mapping theorem}), one gets that 
\begin{equation}
\Sp \nabla F_{\eta, k}(\omega^*) = \left\{ \sum_{j = 0}^k (-\eta)^j \lambda^j\ |\ \lambda \in \Sp \nabla v(\omega^*)\right\}
\end{equation}
For the proximal point operator, first let us prove that it is differentiable in a neighborhood of $\omega^*$. First notice that,
\begin{equation}
\Sp(I_d + \eta \nabla v(\omega^*)) = \{1 + \eta \lambda\ |\ \lambda \in \Sp \nabla v(\omega^*)\}\,.
\end{equation}
If the eigenvalues of $\nabla v(\omega^*)$ all have non-negative real parts, this spectrum does not contain zero. Hence $\omega \mapsto \omega + \eta v(\omega)$ is continuously differentiable and has a non-singular differential at $\omega^*$. By the inverse function theorem (see for instance \citet{rudinPrinciplesMathematicalAnalysis1976}), $\omega \mapsto \omega + \eta v(\omega)$ is invertible in a neighborhood of $\omega^*$ and its inverse, which is $P_\eta$, is continuously differentiable there. Moreover, 
\begin{equation}\label{eq: inverse proximal}
\nabla P_\eta(\omega^*) = (I_d + \eta \nabla v(\omega^*))^{-1}\,.
\end{equation}
Recall that the eigenvalues of a non-singular matrix are exactly the inverses of the eigenvalues of its inverse. Hence,
\begin{equation}
    \Sp \nabla P_\eta(\omega^*) = \{\lambda^{-1}\ |\ \lambda \in \Sp(I_d + \eta \nabla v(\omega^*)) \} = \left\{ (1 + \eta \lambda)^{-1}\ |\ \lambda \in \Sp \nabla v(\omega^*)\right\}\,,
\end{equation}
where the last equality follows from the spectral mapping theorem applied to $I_d + \eta \nabla v(\omega^*)$. Now, the bound on the spectral radius of the proximal point operator is immediate.
Indeed, its spectral radius is:
\begin{align}
\rho(\nabla P_\eta(\omega^*))^2 &= \max_{\lambda \in \sigma^*} \frac{1}{|1+\eta\lambda|^2}\\
&=   1 - \min_{\lambda \in \sigma^*}\left(\frac{2\eta \Re \lambda + \eta^2|\lambda|^2}{|1+\eta \lambda|^2}\right)
\end{align}
which yields the result.
\end{proof}
\thmSpectralRateExtragradient*
\begin{proof}
Let $L = \max_{\lambda \in \sigma^*} |\lambda|$ and $\eta = \frac{\tau}{L}$ for some $\tau > 0$.
For $\lambda \in \sigma^*$, 
\begin{align}
\left|\sum_{j = 0}^k (-\eta)^j \lambda^j\right|^2 &= \frac{|1 - (-\eta)^{k+1}\lambda^{k+1}|^2}{|1+\eta \lambda|^2}\\
&=\frac{1+2(-1)^k\eta^{k+1}\Re(\lambda^{k+1}) + \eta^{2(k+1)}|\lambda|^{2(k+1)}}{|1+\eta \lambda|^2}\\
&=1 - \frac{2\eta \Re \lambda+\eta^2 |\lambda|^2 - 2(-1)^k\eta^{k+1}\Re(\lambda^{k+1}) - \eta^{2(k+1)}|\lambda|^{2(k+1)}}{|1+\eta \lambda|^2}\\
&= 1 - \frac{2\eta \Re \lambda+\eta^2 |\lambda|^2\left(1 - 2(-1)^k\eta^{k-1}\frac{\Re(\lambda^{k+1})}{|\lambda|^2} - \eta^{2(k-1)}|\lambda|^{2(k-1)}\right)}{|1+\eta \lambda|^2}
\end{align}
Now we focus on lower bounding the terms in between the parentheses. By definition of $\eta$, we have
$\eta^{k-1}\frac{|\Re(\lambda^{k+1})|}{|\lambda|^2} \leq \tau^{k-1}$ and $\eta^{2(k-1)}|\lambda|^{2(k-1)} \leq \tau^{2(k-1)}$. Hence
\begin{align}
1 + 2(-1)^k\eta^{k-1}\frac{\Re(\lambda^{k+1})}{|\lambda|^2} + \eta^{2(k-1)}|\lambda|^{2(k-1)})&\geq 1 - 2\eta^{k-1}\frac{|\Re(\lambda^{k+1})|}{|\lambda|^2} - \eta^{2(k-1)}|\lambda|^{2(k-1)}\\
&\geq 1 - 2\tau^{k-1} - \tau^{2(k-1)}\\
\end{align}
Notice that if $k=1$, i.e. for the gradient method, we cannot control this quantity. However, for $k\geq 2$, if $\tau \leq (\frac{1}{4})^{\frac{1}{k-1}}$, one gets that \begin{align}
1 - 2\tau^{k-1} - \tau^{2(k-1)} \geq 1 - \frac{1}{2} - \frac{1}{16} = \frac{7}{16}
\end{align}
which yields the first assertion of the theorem.
For the second one, take $\eta = \frac{1}{4L}$, i.e. the maximum step-size authorized for extragradient, and one gets that
\begin{align}
|1+\eta \lambda|^2 &= 1 + 2\eta \Re \lambda +  \eta^2 |\lambda|^2\\
&\leq 1 + 2\frac{1}{4} + \frac{1}{16} = \frac{25}{16}\,.
\end{align}
Then,
\begin{align}
 \frac{2\eta \Re \lambda+ \frac{7}{16}\eta^2 |\lambda|^2}{|1+\eta \lambda|^2} &\geq \frac{1}{4}\left(2\frac{16}{25} \frac{\Re \lambda}{L} + \frac{7}{100}\frac{|\lambda|^2}{L^2}\right)\\
 &\geq \frac{1}{4}\left(\frac{\Re \lambda}{L} + \frac{7}{112}\frac{|\lambda|^2}{L^2}\right)\\
 &\geq \frac{1}{4}\left(\frac{\Re \lambda}{L} + \frac{1}{16}\frac{|\lambda|^2}{L^2}\right)\,,
\end{align}
which yields the desired result.
\end{proof}

\corRateEGBilinearGame*
First we need to compute the eigenvalues of $\nabla v$.
\begin{lemma}\label{lemma: spectrum bilinear game}
Let $A \in \R^{m \times m}$ and 
\begin{equation}
    M = \begin{pmatrix}
    0_m & A\\
    -A^T & 0_m\\
    \end{pmatrix}\,.
\end{equation}
Then,
\begin{equation}
\Sp M = \{ \pm i\sigma\ |\ \sigma^2 \in \Sp AA^T\}\,.
\end{equation}
\end{lemma}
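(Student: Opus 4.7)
The plan is to compute the characteristic polynomial of $M$ directly via a block-determinant argument and then read off the spectrum.

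First, I would treat the generic case $\lambda \neq 0$ using the Schur complement formula
$$\det\begin{pmatrix} X & Y \\ Z & W \end{pmatrix} = \det(X)\,\det(W - ZX^{-1}Y)$$
with $X = \lambda I_m$, $W = \lambda I_m$, $Y = -A$, $Z = A^T$. A direct computation then gives
$$\det(\lambda I_{2m} - M) = \det(\lambda I_m)\,\det\!\left(\lambda I_m - A^T\tfrac{1}{\lambda}(-A)\right) = \det(\lambda^2 I_m + A^T A).$$
Since both sides are polynomials in $\lambda$, this identity extends to all $\lambda \in \C$ by continuity, so I do not need to worry separately about $\lambda = 0$.

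From this formula, $\lambda \in \Sp M$ if and only if $-\lambda^2 \in \Sp(A^T A)$, i.e.\ $\lambda^2 = -\sigma^2$ for some $\sigma^2 \in \Sp(A^T A)$, which gives $\lambda = \pm i\sigma$. To match the statement of the lemma, I would then invoke the standard fact that for a square matrix $A$ the two products $A^T A$ and $AA^T$ have the same characteristic polynomial (they are similar whenever $A$ is invertible, and one concludes in general by a density argument or by the Sylvester identity $\det(\mu I - A^TA) = \det(\mu I - AA^T)$). Hence $\Sp(A^T A) = \Sp(AA^T)$, which finishes the identification.

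Finally, I would check that both signs $\pm i\sigma$ really appear with the right multiplicities: this is automatic, since $M$ is a real matrix, so $\Sp M$ is stable under complex conjugation, and $i\sigma$ and $-i\sigma$ are conjugates of each other; equivalently, one can observe that $M$ is skew-symmetric, which immediately forces the spectrum to be purely imaginary and symmetric about the origin. The only subtle point is making sure the multiplicity of $0$ comes out right (in case $A$ is singular), and this is handled by the factored characteristic polynomial $\det(\lambda^2 I_m + A^T A)$, whose order of vanishing at $\lambda = 0$ is exactly $2\dim\Ker(A^T A) = 2\dim\Ker A$, matching $\dim\Ker M$. No serious obstacle is expected; the main care is simply in justifying the block-determinant formula at $\lambda = 0$, which I resolve by the polynomial/continuity argument above.
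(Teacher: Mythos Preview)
Your argument is correct and follows essentially the same strategy as the paper: compute the characteristic polynomial of $M$ via a block-determinant identity and read off the roots. The paper does this by invoking its more general \Cref{lemma: characterization of the spectrum}, which relies on the commuting-block formula $\det\!\begin{pmatrix}P & Q\\ R & S\end{pmatrix} = \det(PS-QR)$ (valid when $R$ and $S$ commute, here $R=A^T$ and $S=XI$), giving $\det(X^2 I_m + AA^T)$ directly with no invertibility issue at $X=0$. You instead use the Schur-complement form, which forces the detour through $\lambda\neq 0$ plus continuity and lands you on $A^TA$ rather than $AA^T$, so you need the extra (standard) observation $\Sp(A^TA)=\Sp(AA^T)$. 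Your route is slightly longer but more self-contained, since it does not rely on the commutativity machinery set up earlier in the paper; the paper's route is cleaner here precisely because that machinery is already in place.
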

\begin{proof}
\Cref{asm: square and commutative case} of \Cref{subsection: commutative and square case} holds so we can apply \Cref{lemma: characterization of the spectrum} which yields the result.

\end{proof}
\begin{proof}[Proof of \Cref{cor: rate of eg for bilinear game}]
The Jacobian is constant here and has following the form:
\begin{equation}
    \nabla v = \begin{pmatrix}
    0_m & A\\
    -A^T & 0_m\\
    \end{pmatrix}\,.
\end{equation}
Applying \Cref{lemma: spectrum bilinear game} yields
\begin{equation}
\Sp \nabla v = \{ \pm i\sigma\ |\ \sigma^2 \in \Sp AA^T\}\,.
\end{equation}
Hence $\min_{\lambda \in \Sp \nabla v} |\lambda|^2 = \sigma_{min}(A)^2 $ and $\max_{\lambda \in \Sp \nabla v} |\lambda|^2 = \sigma_{max}(A)^2$. Using \Cref{thm: rate of n-extrapolation}, we have that,
\begin{equation}
    \rho(\nabla F_{\eta,k}(\omega^*))^2 \leq \left(1 - \frac{1}{64}\frac{\sigma_{min}(A)^2}{{\sigma_{max}(A)^2}}\right)\,.
\end{equation}
Finally, \Cref{thm: global convergence for constant jacobian} implies that the iterates of the $k$-extrapolation converge globally at the desired rate.
\end{proof}

\begin{cor}\label{cor: interpreted rate of extrgradient}
Under the assumptions of \Cref{cor: interpreted rate of gradient}, the spectral radius of the n-extrapolation method operator is bounded by 
\begin{equation}\label{eq: interpreted rate eg}
\rho(\nabla F_{\eta, k}(\omega^*))^2 \leq 1 - \frac{1}{4}\Big(\frac{1}{2}\frac{\mu_1 + \mu_2}{\sqrt{L_{12}^2 + L_1L_2}}+\frac{1}{16}\frac{\mu_{12}^2 + \mu_1\mu_2}{L_{12}^2 + L_1L_2}\Big)\,.
\end{equation}
\end{cor}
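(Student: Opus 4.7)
The plan is to combine the spectral rate of Theorem \ref{thm: rate of n-extrapolation} (specifically the simplified form in \cref{eq: second rate eg}) with the eigenvalue estimates of Theorem \ref{thm: interpretation of spectral quantities commutative case} obtained under the commutative/square setting that Ex.~\ref{ex: highly adversarial game} imposes. Concretely, the goal is to lower bound the two spectral quantities $\min_{\lambda \in \sigma^*} \Re \lambda / \max_{\lambda \in \sigma^*} |\lambda|$ and $\min_{\lambda \in \sigma^*} |\lambda|^2 / \max_{\lambda \in \sigma^*} |\lambda|^2$ appearing in \cref{eq: second rate eg} in terms of the problem constants $\mu_1,\mu_2,\mu_{12},L_1,L_2,L_{12}$.

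First I would verify that the assumptions of Ex.~\ref{ex: highly adversarial game} place us squarely in case $(a)$ of Theorem \ref{thm: interpretation of spectral quantities commutative case} for every eigenvalue, so that the characterization in terms of $\det\mu$, $\det L$, $\Tr\mu$ is available. Using the notation of \S\ref{subsection: commutative and square case}, we have $\alpha_i \in [\mu_1,L_1]$, $\beta_i \in [\mu_2, L_2]$, $\sigma_i \geq \mu_{12}$, so $|\alpha_i - \beta_i| \leq \max(L_1 - \mu_2,\, L_2 - \mu_1)$ and the hypothesis $\mu_{12} > 2\max(L_1-\mu_2, L_2-\mu_1)$ then yields $(\alpha_i - \beta_i)^2 < 4\sigma_i^2$ for every $i$, which is the defining inequality of case $(a)$.

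Next I would apply case $(a)$ to each $\lambda \in \sigma^*$ to obtain
\begin{equation*}
\tfrac{\mu_1+\mu_2}{2} \leq \Re \lambda, \qquad \mu_{12}^2 + \mu_1\mu_2 \leq |\lambda|^2 \leq L_{12}^2 + L_1 L_2,
\end{equation*}
whence $\max_{\lambda \in \sigma^*}|\lambda| \leq \sqrt{L_{12}^2 + L_1 L_2}$. Substituting these bounds into \cref{eq: second rate eg} with the prescribed step size $\eta = (4 \max_{\lambda \in \sigma^*}|\lambda|)^{-1}$ gives precisely
\begin{equation*}
\tfrac{\min_{\lambda} \Re\lambda}{\max_\lambda |\lambda|} \geq \tfrac{1}{2}\tfrac{\mu_1+\mu_2}{\sqrt{L_{12}^2+L_1L_2}}, \qquad \tfrac{\min_\lambda |\lambda|^2}{\max_\lambda |\lambda|^2} \geq \tfrac{\mu_{12}^2+\mu_1\mu_2}{L_{12}^2+L_1 L_2},
\end{equation*}
which plugged into \cref{eq: second rate eg} yields \cref{eq: interpreted rate eg}.

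The only delicate point, and essentially the sole thing to watch, is the verification that the commutativity and size hypotheses of Ex.~\ref{ex: highly adversarial game} are strong enough to invoke the exact block-triangularization used in the proof of Theorem \ref{thm: interpretation of spectral quantities commutative case}: once this is checked, the rest is a direct substitution. No new spectral computation is needed beyond what is already packaged in Theorems \ref{thm: rate of n-extrapolation} and \ref{thm: interpretation of spectral quantities commutative case}.
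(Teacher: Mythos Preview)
Your proposal is correct and follows exactly the same route as the paper's proof: verify that the hypotheses of Ex.~\ref{ex: highly adversarial game} force every eigenvalue into case~$(a)$ of Theorem~\ref{thm: interpretation of spectral quantities commutative case}, extract the bounds $\tfrac{1}{2}\Tr\mu \le \Re\lambda$ and $\det\mu \le |\lambda|^2 \le \det L$, and substitute into \cref{eq: second rate eg}. The paper's own proof is a two-line reference to Theorems~\ref{thm: rate of n-extrapolation} and~\ref{thm: interpretation of spectral quantities commutative case} together with the case-$(a)$ verification already done in the proof of \Cref{cor: interpreted rate of gradient}; you have simply written out those details explicitly.
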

\begin{proof}
This is a direct consequence of \Cref{thm: rate of n-extrapolation} and \Cref{thm: interpretation of spectral quantities commutative case}, as the latter gives that for any $\lambda \in \Sp \nabla v(\omega^*)$, \begin{equation}
        \frac{\Tr \mu}{2} \leq \Re \lambda,\quad |\mu| \leq |\lambda|^2 \leq |L|\,,
    \end{equation}
    as discussed in the proof of \Cref{cor: interpreted rate of gradient}.
\end{proof}
\section{Global convergence proofs}
In this section, $\|.\|$ denotes the Euclidean norm.
\subsection{Alternative characterizations and properties of the assumptions}\label{subsection: alternative charact}
\lemmaWeakStrongMonotonicity*
Let us recall \cref{eq: weak strong monotonicity} here for simplicity:
\begin{equation}\label{eq: lemma weak strong monotonicity}
\| \omega - \omega'\| \leq \gamma^{-1} \|v(\omega) - v(\omega')\|\quad  \forall \omega,\omega' \in \R^d\,.\tag{\ref{eq: weak strong monotonicity}}  
\end{equation}
The proof of this lemma is an immediate consequence of a  global inverse theorem from \citet{jacqueshadamardTransformationsPonctuelles1906,m.p.levyFonctionsLignesImplicites1920}. Let us recall its statement here:
\begin{thm}[\citet{jacqueshadamardTransformationsPonctuelles1906,m.p.levyFonctionsLignesImplicites1920}]\label{thm: Hadamard}
Let $f:\R^d \rightarrow \R^d$ be a continuously differentiable map. Assume that, for all $\omega \in \R^d$, $\nabla f$ is non-singular and $\sigma_{min}(\nabla f) \geq \gamma > 0$. Then $f$ is a $C^1$-diffeomorphism, i.e. a one-to-one map whose inverse is also continuously differentiable.
\end{thm}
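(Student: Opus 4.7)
My plan is to prove the two directions of the equivalence separately; the nontrivial one will rely crucially on \cref{thm: Hadamard}.

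For the easy direction (\eqref{eq: weak strong monotonicity} implies $\sigma_{min}(\nabla v) \geq \gamma$), I will fix any $\omega \in \R^d$ and any unit vector $u$, then apply \eqref{eq: weak strong monotonicity} to the pair $(\omega, \omega + tu)$ for $t > 0$ to obtain $\gamma t \leq \|v(\omega + tu) - v(\omega)\|$. Dividing by $t$ and letting $t \to 0^+$, continuous differentiability of $v$ gives $\gamma \leq \|\nabla v(\omega) u\|$. Since $u$ is arbitrary, $\sigma_{min}(\nabla v(\omega)) \geq \gamma$, and since $\omega$ is arbitrary, this holds uniformly.

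The converse is where the subtlety lies, and it is the main obstacle. I would first flag why the naive mean-value approach fails: writing $v(\omega') - v(\omega) = \bigl(\int_0^1 \nabla v((1-t)\omega + t\omega')\,dt\bigr)(\omega' - \omega)$ and trying to lower-bound $\sigma_{min}$ of the averaged Jacobian by $\gamma$ does not work, because pointwise smallest singular values are not preserved under averaging: Jacobians at different base points can rotate into incompatible orientations and combine into a singular, or nearly singular, average.

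The resolution is to route the bound through the inverse. I will apply \cref{thm: Hadamard}: the hypothesis $\sigma_{min}(\nabla v(\omega)) \geq \gamma > 0$ for every $\omega$ is precisely what that theorem requires, so $v$ is a $C^1$-diffeomorphism of $\R^d$. The inverse function theorem then gives $\nabla v^{-1}(y) = \nabla v(v^{-1}(y))^{-1}$ for all $y \in \R^d$, and therefore $\|\nabla v^{-1}(y)\|_{op} = 1/\sigma_{min}(\nabla v(v^{-1}(y))) \leq 1/\gamma$. Unlike smallest singular values, operator norms \emph{do} survive averaging (by the triangle inequality), so $v^{-1}$ is globally $1/\gamma$-Lipschitz:
$$\|v^{-1}(y') - v^{-1}(y)\| = \left\|\int_0^1 \nabla v^{-1}((1-t)y + ty')(y' - y)\,dt\right\| \leq \tfrac{1}{\gamma}\,\|y' - y\|.$$
Setting $y = v(\omega)$ and $y' = v(\omega')$ and rearranging recovers \eqref{eq: weak strong monotonicity}, completing the equivalence.
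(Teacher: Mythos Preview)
Your proposal does not prove the stated theorem. \Cref{thm: Hadamard} is the Hadamard--L\'evy global inverse function theorem: under a uniform lower bound on $\sigma_{min}(\nabla f)$, the map $f$ is a $C^1$-diffeomorphism of $\R^d$ onto $\R^d$. What you have written is instead a proof of \Cref{lemma: weak strong monotonicity}, the equivalence between \eqref{eq: weak strong monotonicity} and the bound $\sigma_{min}(\nabla v) \geq \gamma$. You even invoke \cref{thm: Hadamard} as a black box in your argument, which makes the mismatch explicit: you are \emph{using} the theorem you were asked to prove, not proving it.

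To prove \Cref{thm: Hadamard} itself you would need to establish (i) injectivity of $f$, (ii) surjectivity of $f$ onto all of $\R^d$, and (iii) continuous differentiability of $f^{-1}$. Point (iii) is local and follows from the classical inverse function theorem once (i) and (ii) are in hand. The real work is the global surjectivity: one standard route is a path-lifting or continuation argument showing that the image of $f$ is both open (from local invertibility) and closed (this is where the uniform bound $\sigma_{min}(\nabla f) \geq \gamma$ enters, preventing the inverse from blowing up along a path). The paper does not supply this argument either; it simply cites \citet{rheinboldtLocalMappingRelations1969} for a proof. As a proof of \Cref{lemma: weak strong monotonicity}, your write-up is correct and matches the paper's own proof of that lemma essentially line for line; but as a proof of \Cref{thm: Hadamard} it is circular.
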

A proof of this theorem can be found in \citet[Theorem 3.11]{rheinboldtLocalMappingRelations1969}.
We now proceed to prove the lemma.
\begin{proof}[Proof of \Cref{lemma: weak strong monotonicity}]
First we prove the direct implication. By the theorem stated above, $v$ is a bijection from $\R^d$ to $\R^d$, its inverse is continuously differentiable on $\R^d$ and so we have, for all $\omega \in \R^d$:
\begin{equation}
\nabla v^{-1}(v(\omega)) = (\nabla v(\omega))^{-1}\, .
\end{equation}
Hence $\|\nabla v^{-1}(v(\omega))\| = (\sigma_{min}(\nabla v(\omega)))^{-1} \leq \gamma^{-1}$.

Consider $\omega, \omega' \in \R^d$ and let $u = v(\omega)$ and $u' = v(\omega')$. Then
\begin{align}
\| \omega - \omega' \| &= \| v^{-1}(u) - v^{-1}(u') \|\\
&= \left\Vert \int_{0}^1 \nabla v^{-1}(tu + (1-t)u') (u - u') \right\Vert \\
&\leq \gamma^{-1} \|u - u'\|\\
&= \gamma^{-1} \|v(\omega) - v(\omega')\|
\end{align}
which proves the result.

Conversely, if \cref{eq: lemma weak strong monotonicity} holds, fix $u \in \R^d$ with $\|u\| = 1$. Taking $\omega' = \omega + tu$ in \cref{eq: lemma weak strong monotonicity} with $t \neq 0$ and rearranging yields:
$$\gamma \leq \left\Vert\frac{v(\omega+tu) - v(\omega)}{t}\right\Vert\,.$$
Taking the limit when $t$ goes to $0$ gives that $\gamma \leq \|\nabla v(\omega)u\|$. As it holds for all $u$ such that $\|u\|=1$ this implies that $\gamma \leq \sigma_{min}(\nabla v)$.
\end{proof}
With the next lemma, we relate the quantities appearing in \Cref{thm: global rate eg} to the spectrum of $\nabla v $. Note that the first part of the proof is standard --- it can be found in \citet[Prop.~2.3.2]{facchineiFiniteDimensionalVariationalInequalities2003} for instance --- and we include it only for completeness.
 \begin{lemma}\label{lemma: relation global quantities to spectrum}
Let $v : \R^d \rightarrow \R^d$ be continuously differentiable and \begin{enumerate*}[series = tobecont, itemjoin = \quad, label=(\roman*)] \item $\mu$-strongly monotone for some $\mu \geq 0$, \item $L$-Lispchitz, \item such that $\sigma_{min}(\nabla v) \geq \gamma$ for some $\gamma \geq 0$.\end{enumerate*}. Then, for all $\omega \in \R^d$,
 \begin{equation}
\mu\|u\|^2 \leq (\nabla v(\omega) u)^Tu\,,\quad \gamma\|u\| \leq \|\nabla v(\omega) u\| \leq L\|u\|\,,\quad \forall u \in \R^d\,,
 \end{equation}
 and
\begin{equation}
        \mu \leq \Re(\lambda),\quad \gamma \leq |\lambda| \leq L\,, \quad \forall \lambda \in \Sp \nabla v(\omega)\,.
\end{equation}
 \end{lemma}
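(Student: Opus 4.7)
The plan is to prove the three pointwise inequalities first by taking directional derivatives of the three hypothetical global inequalities, and then to deduce the eigenvalue bounds by testing against complex eigenvectors using the real-symmetric decomposition $u = a + ib$.

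\textbf{Step 1: pointwise bounds.} Fix $\omega \in \R^d$ and $u \in \R^d$ with $\|u\|=1$. For $t \neq 0$, apply strong monotonicity to $\omega$ and $\omega' = \omega + tu$, divide by $t^2$, and let $t \to 0$ to obtain $\mu \leq (\nabla v(\omega) u)^T u$; the same scaling argument on $L$-Lipschitzness yields $\|\nabla v(\omega) u\| \leq L$. The lower bound $\gamma \leq \|\nabla v(\omega) u\|$ is precisely what \Cref{lemma: weak strong monotonicity} (just proven above) already delivers, since \eqref{eq: weak strong monotonicity} is implied by strong monotonicity and by the $\sigma_{\min}$ hypothesis through that lemma. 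Homogeneity in $u$ extends each bound to all $u \in \R^d$.

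\textbf{Step 2: passage to complex eigenvectors.} Let $\lambda \in \C$ be an eigenvalue of $\nabla v(\omega)$ and pick a unit eigenvector $u \in \C^d$, writing $u = a + ib$ with $a, b \in \R^d$. Since $\nabla v(\omega)$ is real, $\nabla v(\omega) u = \nabla v(\omega) a + i\, \nabla v(\omega) b$, so
\begin{equation*}
\|\nabla v(\omega) u\|^2 = \|\nabla v(\omega) a\|^2 + \|\nabla v(\omega) b\|^2, \qquad \|u\|^2 = \|a\|^2 + \|b\|^2.
\end{equation*}
Applying Step~1 coordinate-wise to $a$ and $b$ gives $\gamma^2\|u\|^2 \leq \|\nabla v(\omega) u\|^2 \leq L^2 \|u\|^2$. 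Combining this with $\|\nabla v(\omega) u\| = |\lambda|\,\|u\|$ yields $\gamma \leq |\lambda| \leq L$.

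\textbf{Step 3: real part bound.} Taking the Hermitian inner product of $\nabla v(\omega) u = \lambda u$ with $u$, we get $u^H \nabla v(\omega) u = \lambda \|u\|^2$. Expanding $u^H \nabla v(\omega) u$ with $u = a+ib$ and taking real parts (the cross terms $a^T \nabla v b$ and $b^T \nabla v a$ produce only imaginary contributions) gives
\begin{equation*}
\Re(\lambda)\|u\|^2 = a^T \nabla v(\omega) a + b^T \nabla v(\omega) b \geq \mu(\|a\|^2 + \|b\|^2) = \mu \|u\|^2,
\end{equation*}
using the first inequality of Step~1, hence $\Re(\lambda) \geq \mu$.

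The main subtlety, though still routine, is Step~2--3: complexifying the real bounds correctly and checking that the imaginary cross terms cancel in the real part. Everything else reduces to differentiating global inequalities and invoking \Cref{lemma: weak strong monotonicity}.
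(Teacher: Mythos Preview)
Your proof is correct and follows essentially the same approach as the paper: differentiate the global inequalities to get the pointwise bounds, then decompose a complex eigenvector as $u=a+ib$ and sum the real contributions to extract $|\lambda|$ and $\Re(\lambda)$. The only cosmetic difference is that you route the $\gamma$-bound through \Cref{lemma: weak strong monotonicity}, whereas the paper (and hypothesis \textit{(iii)} itself) gives $\|\nabla v(\omega)u\|\geq \gamma\|u\|$ directly from the definition of $\sigma_{\min}$; your Step~3 via the Hermitian inner product is equivalent to the paper's explicit real/imaginary splitting of the eigenvector equation.
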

 \begin{proof}
By definition of $\mu$-strong monotonicity, and $L$-Lispchitz one has that, for any $\omega, \omega' \in \R^d$, 
\begin{align}
 &\mu\|\omega - \omega'\|^2 \leq  (v(\omega) - v(\omega'))^T(\omega - \omega')\\
 &\| v(\omega) - v(\omega')\| \leq L \|\omega - \omega'\|\,.
\end{align}
Fix $\omega \in \R^d$, $u \in \R^d$ such that $\|u\|=1$. Taking $\omega' = \omega + t u$ for $t> 0$ in the previous inequalities and dividing by $t$ yields
\begin{align}
&\mu \leq  \frac{1}{t}(v(\omega) - v(\omega + t u))^Tu\\
&\frac{1}{t}\| v(\omega) - v(\omega+tu)\| \leq L\,.
\end{align}
Letting $t$ goes to 0 gives
\begin{align}
&\mu \leq (\nabla v(\omega) u)^Tu\\
&\|\nabla v(\omega)u\| \leq L\,.
\end{align}

 Furthermore, by the properties of the singular values, 
 \begin{equation}
     \|\nabla v(\omega) u\| \geq \gamma\,.
 \end{equation}
 Hence, by homogeneity, we have that, for all $u \in \R^d$,
 \begin{equation}\label{eq: role mu gamma L}
\mu\|u\|^2 \leq (\nabla v(\omega) u)^Tu\,,\quad \gamma\|u\| \leq \|\nabla v(\omega) u\| \leq L\|u\|\,.
 \end{equation}
 Now, take $\lambda \in \Sp \nabla v(\omega)$ an eigenvalue of $\nabla v(\omega)$ and let $Z \in \C^d\setminus\{0\}$ be one of its associated eigenvectors. Note that $Z$ can be written as $Z = X +iY$ with $X, Y \in \R^d$. By definition of $Z$, we have
 \begin{equation}
     \nabla v(\omega) Z = \lambda Z\,.
 \end{equation}
 Now, taking the real and imaginary part yields:
 \begin{equation}\label{eq: projection eigenvector}
     \begin{cases}
\nabla v(\omega) X &= \Re(\lambda) X - \Im(\lambda) Y\\
\nabla v(\omega) Y &= \Im(\lambda) X + \Re(\lambda) Y
\end{cases}
 \end{equation}
Taking the squared norm and developing the right-hand sides yields
 \begin{equation}
\begin{cases}
\|\nabla v(\omega)X\|^2 &= \Re(\lambda)^2 \|X\|^2 + \Im(\lambda)^2 \|Y\|^2 - 2\Re(\lambda)\Im(\lambda)X^TY\\
\|\nabla v(\omega) Y\|^2 &= \Im(\lambda)^2 \|X\|^2 + \Re(\lambda)^2 \|Y\|^2 + 2\Re(\lambda)\Im(\lambda)X^TY\,.
\end{cases}
 \end{equation}
 Now summing these two equations gives
 \begin{equation}
    \|\nabla v(\omega)X\|^2 + \|\nabla v(\omega) Y\|^2 = |\lambda|^2(\|X\|^2 + \|Y\|^2)\,.
 \end{equation}
 Finally, apply \cref{eq: role mu gamma L} for $u = X$ and $u = Y$:
 \begin{equation}
      \gamma^2(\|X\|^2 + \|Y\|^2)\leq |\lambda|^2(\|X\|^2 + \|Y\|^2)\leq L^2(\|X\|^2 + \|Y\|^2)\,.
 \end{equation}
 As $Z \neq 0$, $\|X\|^2 + \|Y\|^2 > 0$ and this yields $\gamma \leq |\lambda| \leq L$. 
 To get the inequality concerning $\gamma$, multiply on the left the first line of \cref{eq: projection eigenvector} by $X^T$ and the second one by $Y^T$:
  \begin{equation}
     \begin{cases}
X^T(\nabla v(\omega) X) &= \Re(\lambda) \|X\|^2 - \Im(\lambda) X^TY\\
Y^T(\nabla v(\omega) Y) &= \Im(\lambda) Y^TX + \Re(\lambda) \|Y\|^2\,.
\end{cases}
 \end{equation}
 Again, summing these two lines and using \cref{eq: role mu gamma L} yields:
 \begin{equation}
     \mu(\|X\|^2 + \|Y\|^2) \leq \Re(\lambda)(\|X\|^2 + \|Y\|^2)\,.
 \end{equation}
 As $Z \neq 0$, $\|X\|^2 + \|Y\|^2 > 0$ and so $\mu \leq \Re(\lambda)$.
 \end{proof}
\subsection{Proofs of \S\ref{section: global cv rate}: extragradient, optimistic and proximal point methods}\label{app:proof}

We now prove a slightly more detailed version of \cref{thm: global rate eg}.
\begin{thm}\label{thm: app global rate eg}
Let $v : \R^d \rightarrow \R^d$ be continuously differentiable and \begin{enumerate*}[series = tobecont, itemjoin = \quad, label=(\roman*)] \item $\mu$-strongly monotone for some $\mu \geq 0$, \item $L$-Lipschitz, \item such that $\sigma_{min}(\nabla v) \geq \gamma$ for some $\gamma > 0$.\end{enumerate*} Then, for $\eta \leq (4L)^{-1}$, the iterates of the extragradient method $(\omega_t)_t$ converge linearly to $\omega^*$ the unique stationary point of $v$,
\begin{equation}
\|\omega_t - \omega^*\|_2^2 \leq \left( 1 - \left(\eta\mu + \tfrac{7}{16}\eta^2\gamma^2\right)\right)^t \|\omega_0 - \omega^*\|_2^2\,.
\end{equation}
For $\eta = (4L)^{-1}$, this can be simplified as: $\|\omega_t - \omega^*\|_2^2 \leq \left( 1 - \frac{1}{4}\left(\frac{\mu}{L} + \frac{1}{16}\frac{\gamma^2}{L^2}\right)\right)^t \|\omega_0 - \omega^*\|_2^2$. %
\end{thm}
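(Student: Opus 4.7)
Introduce the half-step $\omega_{t+1/2} := \omega_t - \eta v(\omega_t)$, so that the EG update reads $\omega_{t+1} = \omega_t - \eta v(\omega_{t+1/2})$. The strategy is a one-step contraction: bound $\|\omega_{t+1} - \omega^*\|^2$ by $(1-\eta\mu - \tfrac{7}{16}\eta^2\gamma^2)\|\omega_t - \omega^*\|^2$, then iterate. The two contractive factors will come from two different inequalities applied at two different points: the strong monotonicity $\mu$ will be extracted at the midpoint $\omega_{t+1/2}$, whereas the singular-value lower bound $\gamma$ will be extracted at $\omega_t$ via Lemma~\ref{lemma: weak strong monotonicity}, which gives $\|v(\omega_t)\| \geq \gamma\|\omega_t - \omega^*\|$ (using $v(\omega^*)=0$).

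\textbf{Extracting the midpoint terms.} Expand
\begin{equation*}
\|\omega_{t+1} - \omega^*\|^2 = \|\omega_t - \omega^*\|^2 - 2\eta\langle v(\omega_{t+1/2}), \omega_t - \omega^*\rangle + \eta^2 \|v(\omega_{t+1/2})\|^2\,.
\end{equation*}
Decompose $\omega_t - \omega^* = \eta v(\omega_t) + (\omega_{t+1/2} - \omega^*)$. Strong monotonicity and $v(\omega^*)=0$ yield $\langle v(\omega_{t+1/2}), \omega_{t+1/2} - \omega^*\rangle \geq \mu\|\omega_{t+1/2} - \omega^*\|^2$. For the remaining cross term $2\eta^2\langle v(\omega_{t+1/2}), v(\omega_t)\rangle$, apply the polarization identity $2\langle a,b\rangle = \|a\|^2 + \|b\|^2 - \|a-b\|^2$ and the $L$-Lipschitz bound $\|v(\omega_{t+1/2}) - v(\omega_t)\| \leq \eta L\|v(\omega_t)\|$. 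The $\|v(\omega_{t+1/2})\|^2$ contributions cancel exactly with the $\eta^2\|v(\omega_{t+1/2})\|^2$ term from the initial expansion, leaving
\begin{equation*}
\|\omega_{t+1} - \omega^*\|^2 \leq \|\omega_t - \omega^*\|^2 - \eta^2(1 - \eta^2 L^2)\|v(\omega_t)\|^2 - 2\eta\mu\|\omega_{t+1/2} - \omega^*\|^2\,.
\end{equation*}

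\textbf{Converting midpoint quantities back to $\omega_t$.} To obtain a one-step contraction in $\|\omega_t - \omega^*\|^2$, lower-bound the last two terms. Expanding the midpoint distance and using Cauchy--Schwarz with the Lipschitz bound $\langle v(\omega_t), \omega_t - \omega^*\rangle \leq L\|\omega_t - \omega^*\|^2$ gives
\begin{equation*}
\|\omega_{t+1/2} - \omega^*\|^2 \geq (1-2\eta L)\|\omega_t - \omega^*\|^2 + \eta^2\|v(\omega_t)\|^2\,,
\end{equation*}
and for $\eta \leq (4L)^{-1}$ we have $1-2\eta L \geq \tfrac{1}{2}$, hence $2\eta\mu(1-2\eta L) \geq \eta\mu$. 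Combining and then applying $\|v(\omega_t)\|^2 \geq \gamma^2\|\omega_t - \omega^*\|^2$ to the gradient-norm term yields
\begin{equation*}
\|\omega_{t+1} - \omega^*\|^2 \leq \bigl(1 - \eta\mu - \eta^2(1 - \eta^2 L^2 + 2\eta\mu)\gamma^2\bigr)\|\omega_t - \omega^*\|^2\,,
\end{equation*}
and for $\eta \leq (4L)^{-1}$, the parenthesized coefficient of $\eta^2\gamma^2$ is at least $1 - \tfrac{1}{16} = \tfrac{15}{16} \geq \tfrac{7}{16}$. Iterating gives the claim.

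\textbf{Main obstacle.} The subtle part is that the two contraction mechanisms live at different points: $\mu$ naturally appears through $\|\omega_{t+1/2} - \omega^*\|^2$ while $\gamma$ appears through $\|v(\omega_t)\|$. The derivation must preserve \emph{both} simultaneously, rather than dropping one of the two negative terms in the intermediate inequality. The lower bound on $\|\omega_{t+1/2} - \omega^*\|^2$ in terms of $\|\omega_t - \omega^*\|^2$ is what allows merging the two mechanisms additively, and the condition $\eta \leq (4L)^{-1}$ is precisely what is needed so that the Lipschitz-induced loss $\eta^2 L^2$ does not devour the $\gamma^2$ coefficient nor flip the sign of $1-2\eta L$.
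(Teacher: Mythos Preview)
Your argument is correct and arrives at the same key intermediate inequality as the paper,
\[
\|\omega_{t+1}-\omega^*\|^2 \;\le\; \|\omega_t-\omega^*\|^2 - 2\eta\mu\|\omega_{t+1/2}-\omega^*\|^2 - \eta^2(1-\eta^2L^2)\|v(\omega_t)\|^2,
\]
though by a slightly different algebraic route: the paper obtains it via two applications of the three-point identity $\|\omega+u-\omega'\|^2=\|\omega-\omega'\|^2+2u^T(\omega+u-\omega')-\|u\|^2$ followed by Cauchy--Schwarz and Young, whereas you reach it by a direct expansion and the polarization identity. The only substantive difference is in the conversion of $\|\omega_{t+1/2}-\omega^*\|^2$ back to $\|\omega_t-\omega^*\|^2$: the paper uses Young's inequality $2\|\omega_{t+1/2}-\omega^*\|^2\ge \|\omega_t-\omega^*\|^2-2\|\omega_t-\omega_{t+1/2}\|^2$, while you expand exactly and bound only the cross term via $\langle v(\omega_t),\omega_t-\omega^*\rangle\le L\|\omega_t-\omega^*\|^2$, giving $\|\omega_{t+1/2}-\omega^*\|^2\ge (1-2\eta L)\|\omega_t-\omega^*\|^2+\eta^2\|v(\omega_t)\|^2$. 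Your bound is strictly tighter (for $\eta=(4L)^{-1}$ the two right-hand sides differ by $2\eta^2\|v(\omega_t)\|^2$), which is why your coefficient in front of $\eta^2\gamma^2$ comes out as $1-\eta^2L^2+2\eta\mu\ge 15/16$ rather than the paper's $1-\eta^2L^2-2\eta\mu\ge 7/16$; the stated constant $7/16$ is of course still valid.
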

The proof is inspired from the ones of \citet{gidelVariationalInequalityPerspective2018a, tsengLinearConvergenceIterative1995}.

We will use the following well-known identity. It can be found in \citet{gidelVariationalInequalityPerspective2018a} for instance but we state it for reference.
\begin{lemma}\label{lemma: cvx analysis}
Let $\omega, \omega', u \in \R^d$. Then 
\begin{equation}
\|\omega + u - \omega'\|^2 = \|\omega - \omega'\|^2 + 2u^T(\omega+u - \omega') - \|u\|^2
\end{equation}

\end{lemma}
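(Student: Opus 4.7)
The identity is a pure algebraic fact in the inner-product space $\R^d$, so the plan is simply to expand both sides and match terms. First I would rewrite the left-hand side by grouping $\omega-\omega'$ and $u$, using the standard polarization expansion $\|a+b\|^2 = \|a\|^2 + 2a^Tb + \|b\|^2$ with $a=\omega-\omega'$ and $b=u$; this gives
\begin{equation*}
\|\omega + u - \omega'\|^2 = \|\omega-\omega'\|^2 + 2u^T(\omega-\omega') + \|u\|^2.
\end{equation*}

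Next I would massage the right-hand side by distributing the inner product: $2u^T(\omega+u-\omega') = 2u^T(\omega-\omega') + 2\|u\|^2$, so
\begin{equation*}
\|\omega-\omega'\|^2 + 2u^T(\omega+u-\omega') - \|u\|^2 = \|\omega-\omega'\|^2 + 2u^T(\omega-\omega') + 2\|u\|^2 - \|u\|^2,
\end{equation*}
which simplifies to $\|\omega-\omega'\|^2 + 2u^T(\omega-\omega') + \|u\|^2$. Comparing with the expansion of the left-hand side finishes the proof.

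There is no real obstacle here; the only thing to be careful about is the sign in front of $\|u\|^2$ on the right-hand side, which is exactly what is needed to absorb the extra factor of $2$ produced when $u$ is pulled out of the inner product $u^T(\omega+u-\omega')$. The identity is essentially a restatement of the ``three-point'' cosine rule commonly used in convex analysis, viewing $\omega+u$ as a point obtained from $\omega$ by the displacement $u$, and it admits alternative one-line derivations (for instance, by applying the polarization identity to $(\omega+u-\omega') + (-u) = \omega-\omega'$), but the direct expansion above is the cleanest.
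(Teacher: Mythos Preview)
Your proof is correct and follows essentially the same route as the paper: expand $\|(\omega-\omega')+u\|^2$ via the polarization identity and then rewrite $2u^T(\omega-\omega')+\|u\|^2$ as $2u^T(\omega+u-\omega')-\|u\|^2$. The paper does this in a single two-line chain from left to right, while you expand both sides and compare, but the algebra is identical.
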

\begin{proof}
\begin{align}
 \|\omega + u - \omega'\|^2 &= \|\omega - \omega'\|^2 + 2u^T(\omega - \omega') + \|u\|^2 \\
 &= \|\omega - \omega'\|^2 + 2u^T(\omega+u - \omega') - \|u\|^2
\end{align}
\end{proof}

\begin{proof}[Proof \Cref{thm: app global rate eg}]
First note that as $\gamma > 0$, by \cref{thm: Hadamard}, $v$ has a stationary point $\omega^*$ and it is unique.

Fix any $\omega_0 \in  \R^d$, and denote $\omega_1 = \omega_0 - \eta v(\omega_0)$ and $\omega_2 = \omega_0 - \eta v(\omega_1)$.
Applying \Cref{lemma: cvx analysis} for $(\omega, \omega', u) = (\omega_0, \omega^*, -\eta v(\omega_1))$ and $(\omega, \omega', u) = (\omega_0, \omega_2, -\eta v(\omega_0))$ yields:
\begin{align}
\|\omega_2 - \omega^*\|^2 &= \|\omega_0 - \omega^*\|^2 - 2\eta v(\omega_1)^T(\omega_2 - \omega^*) - \|\omega_2 - \omega_0\|^2\\
\|\omega_1 - \omega_2\|^2 &= \|\omega_0 - \omega_2\|^2 - 2\eta v(\omega_0)^T(\omega_1 - \omega_2) - \|\omega_1 - \omega_0\|^2
\end{align}
Summing these two equations gives:
\begin{align}\label{eq: sum eq global cv proof}
&\|\omega_2 - \omega^*\|^2 =\\&\|\omega_0 - \omega^*\|^2 - 2\eta v(\omega_1)^T(\omega_2 - \omega^*) - 2\eta v(\omega_0)^T(\omega_1 - \omega_2) - \|\omega_1 - \omega_0\|^2 - \|\omega_1 - \omega_2\|^2
\end{align}
Then, rearranging and using that $v(\omega^*) = 0$ yields that,
\begin{align}
&2\eta v(\omega_1)^T(\omega_2 - \omega^*) + 2\eta v(\omega_0)^T(\omega_1 - \omega_2)\\
&= 2\eta (v(\omega_1))^T(\omega_1 - \omega^*) + 2\eta (v(\omega_0) - v(\omega_1))^T(\omega_1 - \omega_2)\\
&= 2\eta (v(\omega_1) - v(\omega^*))^T(\omega_1 - \omega^*) + 2\eta (v(\omega_0) - v(\omega_1))^T(\omega_1 - \omega_2)\\
&\geq 2\eta \mu\|\omega_1 - \omega^*\|^2 - 2\eta \|v(\omega_0) - v(\omega_1)\| \|\omega_1 - \omega_2\|
\end{align}
where the first term is lower bounded using strong monotonicity and the second one using Cauchy-Schwarz's inequality. Using in addition the fact that $v$ is Lipschitz continuous we obtain:
\begin{align}
&2\eta v(\omega_1)^T(\omega_2 - \omega^*) + 2\eta v(\omega_0)^T(\omega_1 - \omega_2)\\
&\geq 2\eta \mu\|\omega_1 - \omega^*\|^2 - 2\eta L\|\omega_0 - \omega_1\|\|\omega_1 - \omega_2\|\\
&\geq 2\eta \mu\|\omega_1 - \omega^*\|^2 - (\eta^2 L^2\|\omega_0 - \omega_1\|^2 + \|\omega_1 - \omega_2\|^2)
\end{align}
where the last inequality comes from Young's inequality.
Using this inequality in \cref{eq: sum eq global cv proof} yields:
\begin{align}
\|\omega_2 - \omega^*\|^2 &\leq  \|\omega_0 - \omega^*\|^2  - 2\eta \mu\|\omega_1 - \omega^*\|^2 + (\eta^2 L^2 - 1)\|\omega_0 - \omega_1\|^2\,.
\end{align}
Now we lower bound $\|\omega_1 - \omega^*\|$ using $\|\omega_0 - \omega^*\|$. Indeed, from Young's inequality we obtain
\begin{align}
2\|\omega_1 - \omega^*\|^2   &\geq \|\omega_0 - \omega^*\|^2 - 2\|\omega_0 - \omega_1\|^2\,.
\end{align}
Hence, we have that,
\begin{align}
\|\omega_2 - \omega^*\|^2 &\leq (1 - \eta\mu) \|\omega_0 - \omega^*\|^2 + (\eta^2 L^2 + 2\eta \mu- 1)\|\omega_0 - \omega_1\|^2\,.
\end{align}
Note that if $\eta \leq \frac{1}{4L}$, as $\mu \leq L$, $\eta^2 L^2 + 2\eta \mu- 1 \leq -\frac{7}{16}$. Therefore, with $c = \frac{7}{16}$,
\begin{align}
\|\omega_2 - \omega^*\|^2 &\leq (1 - \eta\mu) \|\omega_0 - \omega^*\|^2 -c\|\omega_0 - \omega_1\|^2\\
&= (1 - \eta\mu) \|\omega_0 - \omega^*\|^2 -c\eta^2\|v(\omega_0)\|^2\,.
\end{align}
Finally, using $(iii)$ and \Cref{lemma: weak strong monotonicity}, we obtain:
\begin{equation}
\|\omega_2 - \omega^*\|^2 \leq (1 - \eta\mu - c\eta^2\gamma^2) \|\omega_0 - \omega^*\|^2
\end{equation}
which yields the result.
\end{proof}

\begin{prop}\label{prop: global rate proximal}
Under the assumptions of \Cref{thm: global rate eg}, the iterates of the proximal point method method $(\omega_t)_t$ with $\eta>0$ converge linearly to $\omega^*$ the unique stationary point of $v$,
\begin{equation}
\|\omega_t - \omega^*\|^2 \leq \left(1 - \frac{2\eta\mu + \eta^2\gamma^2}{1 + 2\eta\mu + \eta^2\gamma^2}\right)^t \|\omega_0 - \omega^*\|^2 \quad \forall t \geq 0\,.
\end{equation}
\end{prop}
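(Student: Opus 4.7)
\textbf{Proof plan for Proposition~\ref{prop: global rate proximal}.}
The strategy is a one-step contraction argument applied directly to the implicit update, followed by trivial iteration. Let $\omega_{t+1} = P_\eta(\omega_t)$, which by definition of the proximal point operator satisfies the implicit relation $\omega_{t+1} + \eta v(\omega_{t+1}) = \omega_t$. Subtracting $\omega^*$ and using $v(\omega^*) = 0$ gives
\begin{equation*}
\omega_t - \omega^* = (\omega_{t+1} - \omega^*) + \eta\bigl(v(\omega_{t+1}) - v(\omega^*)\bigr)\,.
\end{equation*}

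Next, I would take the squared Euclidean norm of this identity and expand:
\begin{equation*}
\|\omega_t - \omega^*\|^2 = \|\omega_{t+1} - \omega^*\|^2 + 2\eta\bigl(v(\omega_{t+1}) - v(\omega^*)\bigr)^T(\omega_{t+1} - \omega^*) + \eta^2 \|v(\omega_{t+1}) - v(\omega^*)\|^2\,.
\end{equation*}
The two cross/square terms are where each of the two conditioning quantities enters. For the cross term, $\mu$-strong monotonicity yields
$\bigl(v(\omega_{t+1}) - v(\omega^*)\bigr)^T(\omega_{t+1} - \omega^*) \geq \mu \|\omega_{t+1} - \omega^*\|^2$. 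For the last term, \Cref{lemma: weak strong monotonicity} turns the Jacobian bound $\sigma_{min}(\nabla v) \geq \gamma$ into
$\|v(\omega_{t+1}) - v(\omega^*)\|^2 \geq \gamma^2\|\omega_{t+1} - \omega^*\|^2$. Combining these gives the per-step bound
\begin{equation*}
\|\omega_t - \omega^*\|^2 \geq (1 + 2\eta\mu + \eta^2\gamma^2)\,\|\omega_{t+1} - \omega^*\|^2\,.
\end{equation*}

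Rearranging and writing $(1+2\eta\mu+\eta^2\gamma^2)^{-1} = 1 - (2\eta\mu+\eta^2\gamma^2)/(1+2\eta\mu+\eta^2\gamma^2)$ yields the claimed one-step contraction, after which a straightforward induction on $t$ gives the stated rate. Existence and uniqueness of $\omega^*$, and well-definedness of $P_\eta$ on all of $\R^d$, follow from $\gamma > 0$ via Hadamard's theorem (\Cref{thm: Hadamard}) applied to both $v$ and $\Id + \eta v$, so the iterates are globally well-defined. There is no real obstacle in this proof; the only subtlety worth flagging is that the two conditioning quantities $\mu$ and $\gamma^2$ appear additively in the rate, which is exactly the ``best-of-both-worlds'' behavior highlighted just after \Cref{thm: global rate eg}, and serves as a sanity check that the extragradient rate of that theorem mimics the proximal one.
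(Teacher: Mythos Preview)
Your argument is correct, but it proceeds by a genuinely different route from the paper. The paper does not work with the implicit equation directly; instead it differentiates the proximal map, writes $\nabla P_\eta(\omega_0) = (I_d + \eta \nabla v(\omega_0))^{-1}$, and then bounds the largest eigenvalue of $(\nabla P_\eta)^T\nabla P_\eta$ pointwise. Concretely, for an eigenvector $X$ one obtains
\[
\lambda^{-1}\|X\|^2 = \|X\|^2 + \eta X^T\!\bigl(\nabla v + (\nabla v)^T\bigr)X + \eta^2\|\nabla v\,X\|^2 \geq (1 + 2\eta\mu + \eta^2\gamma^2)\|X\|^2,
\]
using the first part of \Cref{lemma: relation global quantities to spectrum}; this yields $\sigma_{max}(\nabla P_\eta)^2 \leq (1+2\eta\mu+\eta^2\gamma^2)^{-1}$ everywhere, and the contraction follows by integrating along a segment.

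Your approach is more elementary: it avoids the inverse function theorem and the Jacobian computation entirely, deducing the one-step contraction straight from the resolvent identity and the two global inequalities (monotonicity and \Cref{lemma: weak strong monotonicity}). The paper's detour, on the other hand, is what produces the structural remark quoted just after \Cref{thm: global rate eg}: the appearance of $\mathcal{H}(\nabla v)$ and $\nabla v\,\nabla v^T$ in $(\nabla P_\eta)^T\nabla P_\eta$ is precisely what explains the two condition numbers $L/\mu$ and $L^2/\gamma^2$. So your proof is cleaner for the bare statement, while the paper's proof is chosen to support the interpretive discussion in the main text.
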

\begin{proof}
To proof this convergence result, we upper bound the singular values of the proximal point operator $P_\eta$.
As $v$ is monotone, by \Cref{lemma: relation global quantities to spectrum}, the eigenvalues of $\nabla v$ have all non-negative real parts everywhere. As in the proof of \Cref{lemma: spectra operators}, $\omega \mapsto \omega + \eta v(\omega)$ is continuously differentiable and has a non-singular differential at every $\omega_0 \in \R^d$. By the inverse function theorem, $\omega \mapsto \omega + \eta v(\omega)$ has a continuously differentiable inverse in a neighborhood of $\omega_0$. Its inverse is exactly $P_\eta$ and it also satisfies 
\begin{equation}
\nabla P_\eta(\omega_0) = (I_d + \eta \nabla v(\omega_0))^{-1}\,.
\end{equation}
The singular values $\nabla P_\eta(\omega_0)$ are the eigenvalues of $(\nabla P_\eta(\omega_0))^T(\nabla P_\eta(\omega_0))$. The latter is equal to:
\begin{equation}
(\nabla P_\eta(\omega_0))^T(\nabla P_\eta(\omega_0)) = \left(I_d + \eta\nabla v(\omega_0) + \eta(\nabla v(\omega_0))^T + \eta^2(\nabla v(\omega_0))^T(\nabla v(\omega_0))\right)^{-1}\,.
\end{equation}
Now, let $\lambda \in \R$ be an eigenvalue of $(\nabla P_\eta(\omega_0))^T(\nabla P_\eta(\omega_0))$ and let $X \neq 0$ be one of its associated eigenvectors. As $\nabla P_\eta(\omega_0)$ is non-singular, $\lambda \neq 0$ and applying the previous equation yields:
\begin{equation}
    \lambda^{-1} X = \left(I_d + \eta\nabla v(\omega_0) + \eta(\nabla v(\omega_0))^T + \eta^2(\nabla v(\omega_0))^T(\nabla v(\omega_0))\right)X\,.
\end{equation}
Finally, multiply this equation on the left by $X^T$:
\begin{equation}
    \lambda^{-1} \|X\|^2 =  \|X\|^2 + \eta X^T(\nabla v(\omega_0) + (\nabla v(\omega_0))^T)X + \eta^2\|\nabla v(\omega_0)) X\|^2\,.
\end{equation}
Applying the first part of \Cref{lemma: relation global quantities to spectrum} yields
\begin{equation}
    \lambda^{-1} \|X\|^2 \geq  ( 1 + 2\eta\mu + \eta^2\gamma^2)\|X\|^2\,.
\end{equation}
Hence, as $X\neq 0$, we have proven that,
\begin{equation}
\sigma_{max}(\nabla v(\omega_0)) \leq ( 1 + 2\eta\mu + \eta^2\gamma^2)^{-1}\,.
\end{equation}
This implies that, for all $\omega, \omega' \in \R^d$, 
\begin{align}
    \|P_\eta(\omega) - P_\eta(\omega')\|^2 &= \left\Vert\int_{0}^{1} \nabla v(\omega' + t(\omega - \omega'))(\omega - \omega')\right\Vert^2\\
    &\leq ( 1 + 2\eta\mu + \eta^2\gamma^2)^{-1} \|\omega - \omega'\|^2\,.
\end{align}
Hence, as $P_{\eta}(\omega^*) = \omega^*$, taking $\omega' = \omega^*$ gives the desired global convergence rate.
\end{proof}

Now let us prove the result \cref{thm: global rate og} regarding Optimistic method. 
\thmglobalconvergenceoptimistic*
\begin{proof}
For the beginning of this proof we follow the proof of~\citet[Theorem 1]{gidelVariationalInequalityPerspective2018a} using their notation:
\begin{align}
  \omega_t' = \omega_t - \eta v(\omega'_{t-1})  \\
  \omega_{t+1} =\omega_t - \eta v(\omega_t') 
\end{align}
Note that, with this notation, summing the two upates steps, we recover~\eqref{equation: optimistic method}
\begin{equation}
    \omega_{t+1}' = \omega'_t - 2 \eta v(\omega_t') + \eta v(\omega_{t-1}') \,.
\end{equation}
Let us now recall~\citet[Equation 88]{gidelVariationalInequalityPerspective2018a} for a constant step-size $\eta_t = \eta$,
\begin{align}
  \|\omega_{t+1}-\omega^*\|_2^2
  &\leq  \left( 1- \eta \mu \right) \|\omega_t-\omega^*\|_2^2 \notag
         +\eta^2L^2 ( 4 \eta^2 L^2 \|\omega'_{t-1}-\omega'_{t-2}\|_2^2 - \|\omega'_{t-1}-\omega'_t\|_2^2) \\
  & \quad- (1 - 2 \eta \mu - 4  \eta^2 L^2 )\|\omega'_t-\omega_t\|_2^2 \label{eq:eq from gidel}
\end{align}
we refer the reader to the proof of~\citet[Theorem 1]{gidelVariationalInequalityPerspective2018a} for the details on how to get to this equation. Thus with $\eta \leq (4L)^{-1}$, using the update rule $\omega_t' = \omega_t - \eta v(\omega'_{t-1})$, we get, 
\begin{equation}\label{eq:inter proof optim}
    (1 - 2 \eta_t \mu - 4  \eta_t^2 L^2 )\|\omega'_t-\omega_t\|_2^2 \geq \frac{1}{4} \|\omega'_t-\omega_t\|_2^2 = \frac{\eta^2}{4} \|v(\omega'_{t-1})\|_2^2 \geq \frac{\eta^2 \gamma^2}{4} \|\omega'_{t-1}- \omega^*\|_2^2
\end{equation}
where for the last inequality we used that $\sigma_{\min}(\nabla v) \geq \gamma$ and Lemma~\ref{lemma: weak strong monotonicity}. Using Young's inequality, the update rule and the Lipchitzness of $v$, we get that,
\begin{align}
    2\|\omega'_{t-1}- \omega^*\|_2^2 
    &\geq \|\omega_{t}- \omega^*\|_2^2 - 2\|\omega'_{t-1}- \omega_t\|_2^2\\
    &= \|\omega_{t}- \omega^*\|_2^2 - 2\eta^2\|v(\omega'_{t-1})- v(\omega'_{t-2})\|_2^2\\
    &\geq  \|\omega_{t}- \omega^*\|_2^2 - 2\eta^2L^2\|\omega'_{t-1}- \omega'_{t-2}\|_2^2 \label{eq:link semi-iterates}
\end{align}
Thus combining~\eqref{eq:eq from gidel},~\eqref{eq:inter proof optim} and~\eqref{eq:link semi-iterates}, we get with a constant step-size $\eta \leq (4L)^{-1}$,
\begin{align}
\|\omega_{t+1}-\omega^*\|_2^2
    &\leq  \left( 1- \eta \mu - \frac{\eta^2\gamma^2}{8}\right) \|\omega_t-\omega^*\|_2^2 \notag
         +\eta^2L^2 \big( (4 \eta^2 L^2 + \tfrac{\eta^2\gamma^2}{4}) \|\omega'_{t-1}-\omega'_{t-2}\|_2^2 - \|\omega'_{t-1}-\omega'_t\|_2^2\big)
\end{align}
This leads to, 
\begin{align}
\|\omega_{t+1}-\omega^*\|_2^2 + \eta^2L^2 \|\omega'_{t-1}-\omega'_t\|_2^2
    &\leq  \left( 1- \eta \mu - \frac{\eta^2\gamma^2}{8}\right) \|\omega_t-\omega^*\|_2^2 
         +  \eta^2(4 L^2 + \tfrac{\gamma^2}{4}) \eta^2L^2  \|\omega'_{t-1}-\omega'_{t-2}\|_2^2 
\end{align}
In order to get the theorem statement we need a rate on $\omega_t'$. We first unroll this geometric decrease and notice that 
\begin{align}
    \|\omega'_{t}- \omega^*\|_2^2 
    &\leq 2\|\omega_{t+1}- \omega^*\|_2^2 + 2\|\omega'_{t}- \omega_{t+1}\|_2^2\\
    &= 2\|\omega_{t+1}- \omega^*\|_2^2 + 2\eta^2\|v(\omega'_{t-1})- v(\omega'_{t})\|_2^2\\
    &= 2\|\omega_{t+1}- \omega^*\|_2^2 + 2\eta^2L^2\|\omega'_{t-1}- \omega'_{t}\|_2^2 
\end{align}
to get (using the fact that $\omega'_0 = \omega'_{-1}$),
\begin{align}
    \|\omega'_{t}- \omega^*\|_2^2
     &\leq 
2\|\omega_{t+1}-\omega^*\|_2^2 + 2\eta^2L^2 \|\omega'_{t-1}-\omega'_t\|_2^2 \\
    &\leq  2\max\Big\{ 1- \eta\mu - \frac{\eta^2\gamma^2}{8L^2}, 4\eta^2L^2 + \frac{\eta^2 \gamma^2}{4}\Big\}^{t+1} \|\omega_0-\omega^*\|_2^2 \,.
\end{align}
With $\eta \leq (4L)^{-1}$ we can use the fact that $\max(\mu,\gamma) \leq L$ to get,
\begin{equation}
  1- \eta\mu - \frac{\eta^2\gamma^2}{8} \geq  1- \frac{1}{4} \Big(\frac{\mu}{L} + \frac{\gamma^2}{32L^2}\Big)\geq \frac{3}{4} - \frac{1}{32}\geq \frac{1}{4} + \frac{1}{64} \geq 4\eta^2L^2 + \frac{\eta^2 \gamma^2}{4}  
\end{equation}
Thus, 
\begin{equation}
    \max\Big\{ 1- \eta\mu - \frac{\eta^2\gamma^2}{8}, 4\eta^2L^2 + \frac{\eta^2 \gamma^2}{4}\Big\} = 
   1- \eta\mu - \frac{\eta^2\gamma^2}{8}\,, \qquad \forall \eta \leq (4L)^{-1}
\end{equation}
leading to the statement of the theorem.
Finally, for $\eta = (4L)^{-1}$
that can be simplified into,
\begin{align}
\|\omega'_{t}-\omega^*\|_2^2 
    &\leq  2\max\Big\{ 1- \frac{1}{4} \Big(\frac{\mu}{L} + \frac{\gamma^2}{32L^2}\Big), \frac{1}{4} + \frac{\gamma^2}{64L^2}\Big\}^{t+1} \|\omega_0-\omega^*\|_2^2 \,. \\
    & = 2\Big(1- \frac{1}{4} \Big(\frac{\mu}{L} + \frac{\gamma^2}{32L^2}\Big)\Big)^{t+1} \|\omega_0-\omega^*\|_2^2 
\end{align}
\end{proof}

\subsection{Proof of \cref{subsection: consensus}: consensus optimization}\label{subsection: app proof consensus}
Let us recall \cref{eq: co iterates} here,
\begin{equation}
\omega_{t+1} = \omega_t - (\alpha v(\omega_t) + \beta \nabla H(\omega_t))\,.\tag{\ref{eq: co iterates}}
\end{equation}
where $H$ is the squared norm of $v$.
We prove a more detailed version \cref{thm: consensus}.
\begin{thm}[{restate=[name=]propConsensus}]
Let $v : \R^d \rightarrow \R^d$ be continuously differentiable such that 
\begin{enumerate*}[series = tobecont, itemjoin = \quad, label=(\roman*)]
\item $v$ is $\mu$- strongly monotone for some $\mu \geq 0$,
\item $\sigma_{min}(\nabla v) \geq \gamma$ for some $\gamma > 0$
\item  $H$ is $L^2_H$ Lipschitz-smooth.
\end{enumerate*} Then, for
$$ \alpha^2 \leq \frac{1}{2}\left(\frac{\alpha \mu }{L_H} + \frac{\beta \gamma^2}{2L_H}\right)\,,$$
and $\beta \leq (2L_H)^{-1}$,
the iterates of CO defined by \cref{eq: co iterates} satisfy,
\begin{equation}
H(\omega_t) \leq \left(1 - \alpha \mu - \tfrac{1}{2}\beta \gamma^2\right) H(\omega_{0})\,.
\end{equation}
In particular, for
$$\alpha = \frac{\mu + \sqrt{\mu^2 + 2\gamma^2}}{4 L_H}\,,$$
and $\beta = (2L_H)^{-1}$,
\begin{equation}
H(\omega_t) \leq \left(1 - \frac{\mu^2}{2L_H} - \frac{\gamma^2}{2L_H^2}\left(1 + \frac{\mu}{\gamma}\right)\right) H(\omega_{0})\,.
\end{equation}
\end{thm}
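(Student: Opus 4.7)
The plan is to apply the descent lemma for $L_H^2$-Lipschitz smooth functions to $H$. Since $H(\omega_{t+1}) \leq H(\omega_t) + \nabla H(\omega_t)^T(\omega_{t+1} - \omega_t) + \tfrac{L_H^2}{2}\|\omega_{t+1} - \omega_t\|^2$, substituting the CO update rule $\omega_{t+1} - \omega_t = -(\alpha v(\omega_t) + \beta \nabla H(\omega_t))$ yields
\begin{equation*}
H(\omega_{t+1}) \leq H(\omega_t) - \alpha \nabla H(\omega_t)^T v(\omega_t) - \beta \|\nabla H(\omega_t)\|^2 + \tfrac{L_H^2}{2}\|\alpha v(\omega_t) + \beta\nabla H(\omega_t)\|^2\,.
\end{equation*}
So the whole argument reduces to bounding the three terms on the right in terms of $H(\omega_t)$.

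The key identities and inequalities are the following. First, from the definition $H = \tfrac12\|v\|^2$ one has $\nabla H = (\nabla v)^T v$, so that $\nabla H^T v = v^T \nabla v\,v$. Combining this with the pointwise strong-monotonicity bound $u^T \nabla v(\omega) u \geq \mu \|u\|^2$ proved in \cref{lemma: relation global quantities to spectrum} (applied with $u = v(\omega_t)$) gives $\nabla H(\omega_t)^T v(\omega_t) \geq 2\mu H(\omega_t)$. Second, $\|\nabla H\|^2 = v^T \nabla v (\nabla v)^T v \geq \gamma^2 \|v\|^2 = 2\gamma^2 H$, using the assumption $\sigma_{\min}(\nabla v) \geq \gamma$. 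Third, the standard consequence of $L_H^2$-smoothness applied to the non-negative function $H$ gives $\|\nabla H\|^2 \leq 2 L_H^2 H$; this is the upper bound needed to control the quadratic remainder.

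Plugging these three bounds into the descent inequality (and expanding $\|\alpha v + \beta \nabla H\|^2 = \alpha^2\|v\|^2 + 2\alpha\beta\, v^T\nabla v\, v + \beta^2 \|\nabla H\|^2$, controlling the cross term via Cauchy–Schwarz $v^T\nabla v\, v \leq \|v\|\cdot\|\nabla H\| \leq 2 L_H H$) gives a recursion of the shape
\begin{equation*}
H(\omega_{t+1}) \leq \left(1 - 2\alpha\mu - 2\beta\gamma^2 + \Phi(\alpha,\beta,L_H)\right) H(\omega_t)\,,
\end{equation*}
where $\Phi$ collects all the higher-order step-size terms. The step-size conditions $\alpha^2 \leq \tfrac{1}{2}\bigl(\tfrac{\alpha\mu}{L_H}+\tfrac{\beta\gamma^2}{2L_H}\bigr)$ and $\beta \leq (2L_H)^{-1}$ are exactly tailored so that $\Phi(\alpha,\beta,L_H)$ absorbs at most half of the favorable terms, leaving the contraction factor $1 - \alpha\mu - \tfrac12 \beta\gamma^2$. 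Unrolling this inequality over $t$ steps gives the first statement. Substituting $\beta = (2L_H^2)^{-1}$ and choosing $\alpha$ to be the positive root of $4 L_H^2 \alpha^2 - 2\mu\alpha - \gamma^2 = 0$, namely $\alpha = (\mu + \sqrt{\mu^2 + 2\gamma^2})/(4L_H^2)$, then produces the simplified rate stated in the main text.

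The main obstacle will be handling the cross term $\alpha\beta \nabla H^T v$ in the $L_H^2$-smoothness remainder: bounding it too crudely via Young's inequality loses the coupling of $\mu$ and $\gamma$ and degrades the rate, while the sharper Cauchy–Schwarz bound $\nabla H^T v \leq 2 L_H H$ keeps the two conditioning quantities additive and makes the step-size conditions come out cleanly. The rest is bookkeeping to ensure the higher-order remainders are dominated, as encoded by the hypothesis on $\alpha$.
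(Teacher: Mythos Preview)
Your overall architecture matches the paper's: apply the descent lemma for $L_H^2$-smooth $H$, substitute the CO step, and control the pieces using $\nabla H^T v\ge \mu\|v\|^2$ and $\|\nabla H\|^2\ge\gamma^2\|v\|^2$. Where you diverge is in the treatment of the quadratic remainder $\tfrac{L_H^2}{2}\|\alpha v+\beta\nabla H\|^2$, and that divergence breaks the argument.

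You expand the square exactly and propose to bound the cross term by $v^T\nabla H\le\|v\|\,\|\nabla H\|\le 2L_H H$ (using $\|\nabla H\|^2\le 2L_H^2 H$). This produces a contribution $L_H^2\alpha\beta\cdot 2L_H H=2L_H^3\alpha\beta\,H$ in the recursion. With the prescribed $\beta=(2L_H^2)^{-1}$ this equals $L_H\alpha\,H$, and with $\alpha=(\mu+\sqrt{\mu^2+2\gamma^2})/(4L_H^2)$ it is of order $(\mu+\gamma)/L_H$. The favorable terms $\alpha\mu+\tfrac12\beta\gamma^2$ are of order $(\mu^2+\gamma^2)/L_H^2$, so your extra term dominates whenever $L_H$ is large relative to $\mu,\gamma$ (e.g.\ $\mu=0$, $\gamma\ll L_H$), and the contraction fails. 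In particular the step-size conditions in the statement are \emph{not} tailored to absorb this term; they are tailored to absorb only $2L_H^2\alpha^2$.

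Your diagnosis that Young's inequality ``loses the coupling of $\mu$ and $\gamma$'' is exactly backwards. The paper applies Young's directly to the remainder, $\|\alpha v+\beta\nabla H\|^2\le 2\alpha^2\|v\|^2+2\beta^2\|\nabla H\|^2$, giving $L_H^2\alpha^2\|v\|^2+L_H^2\beta^2\|\nabla H\|^2$. The $\beta^2$ piece merges with $-\beta\|\nabla H\|^2$ (using $\beta L_H^2\le\tfrac12$) to leave $-\tfrac{\beta}{2}\|\nabla H\|^2\le-\tfrac{\beta\gamma^2}{2}\|v\|^2$, and the only surviving error is $2L_H^2\alpha^2$, which the condition on $\alpha$ handles. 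The additive $\mu$--$\gamma$ structure survives intact. Your third inequality $\|\nabla H\|^2\le 2L_H^2 H$ is correct but unnecessary; drop it and the Cauchy--Schwarz step, use Young's on the remainder, and the rest of your outline goes through verbatim.
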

\begin{proof}
As $H$ is $L_H^2$ Lipschitz smooth, we have,
\begin{equation*}
H(\omega_{t+1}) - H(\omega_t) \leq \nabla H(\omega_t)^T(\omega_{t+1} - \omega_t) + \frac{L_H^2}{2}\|\omega_{t+1} - \omega_t\|^2\,.    
\end{equation*}
Then, replacing $\omega_{t+1} - \omega_t$ by its expression and using Young's inequality,
\begin{align*}
H(\omega_{t+1}) - H(\omega_t) &\leq - \alpha \nabla H(\omega_t)^Tv(\omega_t) - \beta \|\nabla H(\omega_t)\|^2 + L_H^2\alpha^2\|v(\omega_{t})\|^2 + L_H^2\beta^2 \| \nabla H(\omega_t)\|^2\,.
\end{align*}
Note that, crucially,  $\nabla H(\omega_t) = \nabla v(\omega_t)^T v(\omega_t)$.
Using the first part  of \cref{lemma: relation global quantities to spectrum} to introduce $\mu$ and assuming $\beta \leq (2L_H^2)^{-1}$,
\begin{align*}
H(\omega_{t+1}) - H(\omega_t) &\leq - \alpha \mu \|v(\omega_t)\|^2 - \frac{\beta}{2} \|\nabla H(\omega_t)\|^2 + L_H^2\alpha^2\|v(\omega_{t})\|^2\,.
\end{align*}
Finally, using \cref{lemma: relation global quantities to spectrum} to introduce $\gamma$,
\begin{align*}
H(\omega_{t+1}) - H(\omega_t) &\leq - \alpha \mu \|v(\omega_t)\|^2 - \frac{\beta\gamma^2}{2} \|v(\omega_t)\|^2 + L_H^2\alpha^2\|v(\omega_{t})\|^2\\
&= -2\left(\alpha\mu + \frac{\beta\gamma^2}{2} - L_H^2\alpha^2\right)H(\omega_t)\,.
\end{align*}
Hence, if 
\begin{equation}\label{eq: proof consensus condition alpha}
\alpha^2 \leq \frac{1}{2}\left(\frac{\alpha \mu }{L_H^2} + \frac{\beta \gamma^2}{2L_H^2}\right)\,,
\end{equation}
then the decrease of $H$ becomes,
\begin{equation*}
H(\omega_t) \leq \left(1 - \alpha \mu - \tfrac{1}{2}\beta \gamma^2\right) H(\omega_{0})\,.
\end{equation*}
Now, note that \cref{eq: proof consensus condition alpha} is a second-order polynomial condition on $\alpha$, so we can compute the biggest $\alpha$ which satisfies this condition. This yields,
\begin{align*}
\alpha &= \frac{\frac{\mu}{2} + \sqrt{\frac{\mu^2}{4} + L_H^2 \beta \gamma^2}}{2L_H^2}\\
&= \frac{\mu + \sqrt{\mu^2 + 2\gamma^2}}{4L_H^2}\,,
\end{align*}
where in the second line we defined $\beta = (2L_H^2)^{-1}$.
Then the rate becomes,
\begin{align*}
\alpha \mu + \frac{1}{2}\beta \gamma^2 &= \frac{\mu^2}{4L_H^2} + \frac{\mu\sqrt{\mu^2 + 2\gamma^2}}{4L_H^2} + \frac{\gamma^2}{4L_H^2}\\
&\geq \frac{\mu^2}{4L_H^2} + \frac{\mu^2}{4\sqrt 2 L_H^2} + \frac{\mu\gamma}{4L_H^2} + \frac{\gamma^2}{4L_H^2}\,,
\end{align*}
where we use Young's inequality: $\sqrt{2}\sqrt{a+b} \geq \sqrt{a} + \sqrt{b}$. Noting that $\frac{1}{2}(1 + \frac{1}{\sqrt{2}}) \geq 1$ yields the result.
\end{proof}

\begin{rmk}\label{rmk: co-coercivity}
A common convergence result for the gradient method for variational inequalities problem -- see \citet{nesterovSolvingStronglyMonotone2006} for instance -- is that the iterates convergence as $O\left(\left(1 - \frac{\mu^2}{L^2}\right)^t\right)$ where $\mu$ is the monotonicty constant of $v$ and $L$ its Lipschitz constant. However, this rate is not optimal, and also not satisfying as it does not recover the convergence rate of the gradient method for strongly convex optimization. One way to remedy this situation is to use the \emph{co-coercivity} or \emph{inverse strong monotonicity} assumption:
$$\ell (v(\omega) - v(\omega'))^T(\omega - \omega') \geq \|v(\omega) - v(\omega')\|^2\quad \forall \omega, \omega'\,.$$
This yields a convergence rate of $O\left(\left(1 - \frac{\mu}{\ell}\right)^t\right)$ which can be significantly better than the former since $\ell$ can take all the values of $[L, L^2/\mu]$~\citep[\S 12.1.1]{facchineiFiniteDimensionalVariationalInequalities2003}. On one hand, if $v$ is the gradient of a convex function, $\ell = L$ and so we recover the standard rate in this case.  On the other, one can consider for example the operator $v(w)= Aw$ with $A = \begin{pmatrix}
a & -b \\ b & a 
\end{pmatrix}
$ with $a>0$ and $b \neq 0$ for which $\mu =a, L = \sqrt{a^2 + b^2}$ anf $\ell = \mu / L^2$.

\end{rmk}

\section{The \textit{p}-SCLI framework for game optimization}\label{section: p-scli game optimization}
The approach we use to prove our lower bounds comes from \citet{arjevaniLowerUpperBounds2016}. Though their whole framework was developed for convex optimization, a careful reading of their proof shows that most of their results carry on to games, at least those in their first three sections. However, we work only in the restricted setting of $1$-SCLI and so we actually rely on a very small subset of their results, more exactly two of them. 

The first one is \Cref{thm: lower bound from spectral radius} and is crucially used in the derivation of our lower bounds. We state it again for clarity.
\thmLowerBoundFromSpectralRadius*
Actually, as $F_\N : \R^d \rightarrow \R^d$ is an affine operator and $\omega^*$ is one of its fixed point, this theorem is only a reformulation of \citet[Lemma 10]{arjevaniLowerUpperBounds2016}, which is a standard result in linear algebra. So we actually do not rely on their most advanced results which were proven only for convex optimization problems.
For completeness, we state this lemma below and show how to derive \Cref{thm: lower bound from spectral radius} from it.
\begin{lemma}[{\citet[Lemma 10]{arjevaniLowerUpperBounds2016}}]
Let $A \in \R^{d\times d}$.
There exists $c > 0$, $d \geq m \geq 1$ integer and $r \in \R^d$, $r \neq 0$ such that for any $u \in \R^d$ such that $u^Tr \neq 0$, for sufficiently large $t\geq 1$ one has:
\begin{equation}
\|A^tu\| \geq ct^{m-1}\rho(A)^t\|u\|\,.
\end{equation}
\end{lemma}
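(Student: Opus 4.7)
The plan is to reduce to the Jordan canonical form of $A$. Write $A = PJP^{-1}$ over $\C$, where $J$ is block diagonal with Jordan blocks $J_{\lambda_i,k_i} = \lambda_i I_{k_i} + N_{k_i}$ and $N_{k_i}$ is the nilpotent shift with ones on the superdiagonal. Among all Jordan blocks whose eigenvalue satisfies $|\lambda_i| = \rho(A)$, pick one of maximal size; call its eigenvalue $\lambda_0$ and its size $m$. This is the ``dominant'' block that will dictate the asymptotic growth of $\|A^t\|$. The key identity is the binomial expansion $J_{\lambda_0,m}^t = \sum_{j=0}^{m-1}\binom{t}{j}\lambda_0^{t-j}N_m^j$, so the entries of this power above the diagonal are $\binom{t}{j}\lambda_0^{t-j}$, with the $(1,m)$ entry $\binom{t}{m-1}\lambda_0^{t-m+1}\sim \frac{t^{m-1}\rho(A)^t}{(m-1)!\,\rho(A)^{m-1}}$ asymptotically dominating every entry of every Jordan block of $J^t$, by the choice of $(\lambda_0,m)$.

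Next, I would construct $r$ to ``pick out'' the coordinate that triggers this dominant growth. Let $e \in \C^d$ be the standard basis vector that, in the Jordan basis, sits at the tail of the chosen Jordan chain (i.e.\ in the last coordinate of the dominant block), and set $r = (P^{-1})^T e$, so that $r^T u = e^T P^{-1} u$ equals the component of $P^{-1} u$ that multiplies the maximally growing column of $J_{\lambda_0,m}^t$. Writing $\tilde u = P^{-1} u$, the first coordinate of the dominant block in $J^t \tilde u$ is $\sum_{j=0}^{m-1} \binom{t}{j}\lambda_0^{t-j}\tilde u_{1+j}$, whose leading term is $\binom{t}{m-1}\lambda_0^{t-m+1}(r^T u)$. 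When $r^T u \neq 0$, this grows strictly faster than every other entry of $J^t \tilde u$ (since all other blocks either have smaller $|\lambda|$ or smaller $m$), so $\|J^t \tilde u\| \geq C\,t^{m-1}\rho(A)^t |r^T u|$ for $t$ large enough. Converting back via $\|A^t u\| \geq \sigma_{\min}(P)\|J^t P^{-1} u\|$ yields the announced bound, with the constant absorbing $\sigma_{\min}(P)$, the factorial, $\rho(A)^{-(m-1)}$, and the ratio $|r^T u|/\|u\|$.

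Two subtleties require care. First, $A$ is real but $\lambda_0$ may be complex; the cleanest fix is to work in the real Jordan form, where a conjugate pair $\lambda_0 = a \pm bi$ yields $2\times 2$ rotation-scaling blocks whose powers still expand as polynomial-times-exponential and whose generalized eigenspaces admit a real basis, giving $r \in \R^d$ directly. Alternatively, one may work in $\C$ and then take whichever of $\Re r, \Im r$ is nonzero. Second, the statement bounds $\|A^t u\|$ by $c\,t^{m-1}\rho(A)^t\|u\|$ with $c$ independent of $u$; this is consistent with the argument provided ``sufficiently large $t$'' is allowed to depend on $u$, which absorbs the ratio $|r^T u|/\|u\|$ into the lower-order terms, matching the downstream $\Omega(\cdot)$ use in \Cref{thm: lower bound from spectral radius} where the implicit constant is permitted to depend on the starting point. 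The main technical obstacle is the real-Jordan-form construction of $r$ when $\rho(A)$ is attained by a complex conjugate pair; the polynomial-times-exponential growth itself is a routine binomial calculation once the dominant block is identified.
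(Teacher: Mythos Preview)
The paper does not actually prove this lemma: it is quoted verbatim from \citet[Lemma 10]{arjevaniLowerUpperBounds2016} and described only as ``a standard result in linear algebra,'' with no argument given. So there is nothing to compare against on the paper's side.

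Your Jordan-form approach is the standard one and is essentially correct: pick a Jordan block of maximal size $m$ among those with $|\lambda|=\rho(A)$, expand $J_{\lambda_0,m}^t$ via the binomial formula, and take $r$ to be the (real part of the) row of $P^{-1}$ corresponding to the last vector of that Jordan chain, so that $r^Tu\neq 0$ forces the dominant $\binom{t}{m-1}\lambda_0^{t-m+1}$ term to appear in $J^t P^{-1}u$. Your two flagged subtleties are the right ones. For the real-$r$ issue, either the real Jordan form or the $\Re r/\Im r$ trick works; note that if $\lambda_0\notin\R$ you may need to take $r$ so that $r^Tu\neq 0$ guarantees a nonzero component on \emph{some} vector of the $2m$-dimensional real generalized eigenspace, not just the last one, since the complex coordinates mix under conjugation. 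For the constant, you are right to be suspicious: as stated with $c$ chosen \emph{before} $u$, the inequality cannot hold with a single $c$ for all admissible $u$, because the leading coefficient scales like $|r^Tu|$ while the right-hand side scales like $\|u\|$, and letting $t$ grow does not fix a ratio of leading coefficients. The statement is only correct if $c$ is allowed to depend on $u$ (or equivalently if one reads it as an $\Omega(\cdot)$ bound with $u$-dependent implicit constant), which is exactly how it is used downstream in \Cref{thm: lower bound from spectral radius}. Your write-up already anticipates this, so the argument is sound for the intended application.
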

\begin{proof}[Proof of \cref{thm: lower bound from spectral radius}]
$F_\N$ is affine so it can be written as $F_N(\omega) = \nabla F_\N \omega + F_\N(0)$. 

Moreover, as $v(\omega^*)= 0$, $F_\N(\omega^*) = \omega^* + \N(\nabla v)v(\omega^*) = \omega^*$. Hence, for all $\omega \in \R^d$,
\begin{equation}
F_\N(\omega) - \omega^* = F_\N(\omega) - F_\N(\omega^*) = \nabla F_\N (\omega - \omega^*)\,.
\end{equation}
Therefore, for $t\geq 0$,
\begin{equation}
\|\omega_t - w^*\| = \| (\nabla F_\N)^t (\omega - \omega^*)\|\,.
\end{equation}
Finally, apply the lemma above to $A = \nabla F_\N$. The condition $(\omega_0 - \omega^*)^Tr \neq 0$ is not satisfied only on an affine subset of dimension 1, which is of measure zero for any measure absolutely continuous w.r.t.~the Lebesgue measure. Hence for almost every $\omega_0 \in \R^d$ w.r.t. to such measure, $(\omega_0 - \omega^*)^Tr \neq 0$ and so one has, for $t\geq 1$ large enough,
\begin{align}
\|\omega_t - w^*\| &\geq ct^{m-1}\rho(\nabla F_\N)^t\|\omega_t - \omega^*\|\\
&\geq c\rho(\nabla F_\N)^t\|\omega_t - \omega^*\|\,,
\end{align}
which is the desired result.
\end{proof}

The other result we use is more anecdotal :  it is their consistency condition, which is a necessary condition for an $p$-SCLI method to converge to a stationary point of the gradient dynamics. Indeed, general 1-SCLI as defined in \citet{arjevaniLowerUpperBounds2016} are given not by one but by two mappings $\mathcal C, \N : \R^{d\times d} \rightarrow \R^{d\times d}$ and the update rule is
\begin{equation}\label{eq: def general pscli}
F_{\mathcal \N}(\omega) = \mathcal C(\nabla v)w + \N(\nabla v)v(0)\quad \forall \omega \in \R^d\,.
\end{equation}
However, they show in \citet[Thm. 5]{arjevaniLowerUpperBounds2016} that, for a method to converge to a stationary point of $v$, at least for convex problems, that is to say symmetric positive semi-definite $\nabla v$, $\mathcal C$ and $\mathcal N$ need to satisfy:
\begin{equation}
I_d - \mathcal C(\nabla v) = -\mathcal N(\nabla v)\nabla v\,.
\end{equation}
If $\mathcal C$ and $\mathcal N$ are polynomials, this equality for all symmetric positive semi-definite $\nabla v$ implies the equality on all matrices. Injecting this result in \cref{eq: def general pscli} yields the definition of 1-SCLI we used.
\section{Proofs of lower bounds}\label{section: app lower bounds}\label{subsection: app lower bounds n-extra}

The class of methods we consider, that is to say the methods whose coefficient mappings $\N$ are any polynomial of degree at most $k - 1$, is very general. It includes: 
\begin{itemize}[label=-]
    \item the $k'$-extrapolation methods $F_{k', \eta}$ for $k' \leq k$ as defined by \cref{eq: k-extrapolation methods}.
    \item extrapolation methods with different step sizes for each extrapolation:
    \begin{equation}
        \omega \longmapsto \varphi_{\eta_1,\omega} \circ \varphi_{\eta_2,\omega} \circ \dots \circ \varphi_{\eta_{k},\omega}(\omega)\,,
    \end{equation}
    \item cyclic Richardson iterations \citep{opferRichardsonIterationNonsymmetric1984}: methods whose update is composed of successive gradient steps with possibly different step sizes for each
    \begin{equation}
         \omega \longmapsto F_{\eta_1} \circ F_{\eta_2} \circ \dots \circ F_{\eta_{k}}(\omega)\,,
    \end{equation}
\end{itemize}
and any combination of these with at most $k$ composed gradient evaluations.

The lemma below shows how $k$-extrapolation algorithms fit into the definition of $1$-SCLI:
\begin{lemma}[{restate=[name=]lemmaInverseMatrixExtrapolationMethod}]\label{lemma: inverse matrix extrapolation method}
For a $k$-extrapolation method, $\N(\nabla v) = -\eta \sum_{j = 0}^{k-1} (- \eta \nabla v)^k$.
\end{lemma}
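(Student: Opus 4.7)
The identity in question relates two descriptions of the same iteration map on affine vector fields: on one hand the $k$-extrapolation update $F_{k,\eta}(\omega)=\varphi^k_{\eta,\omega}(\omega)$, on the other the $1$-SCLI update $F_{\mathcal{N}}(\omega)=\omega+\mathcal{N}(\nabla v)v(\omega)$. Since $v\in\mathcal{V}_d$ has constant Jacobian $A:=\nabla v$, the vector field is affine, so $v(z)-v(\omega)=A(z-\omega)$ for all $\omega,z\in\R^d$. I will use this single algebraic fact to reduce the composed extrapolation to a polynomial in $A$ applied to $v(\omega)$, which immediately reads off $\mathcal{N}(\nabla v)$. (Note: the exponent $k$ in the displayed formula should clearly be the summation index $j$; I prove $\mathcal{N}(\nabla v)=-\eta\sum_{j=0}^{k-1}(-\eta\nabla v)^{j}$.)

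\textbf{Key steps.} First I would establish by induction on $k\geq 0$ the explicit formula
\begin{equation*}
\varphi^k_{\eta,\omega}(\omega)=\omega-\eta\sum_{j=0}^{k-1}(-\eta A)^{j}\,v(\omega).
\end{equation*}
The base cases $k=0,1$ are immediate from the definition $\varphi_{\eta,\omega}(z)=\omega-\eta v(z)$. For the inductive step, assuming the formula at level $k$, I compute
\begin{equation*}
v\bigl(\varphi^k_{\eta,\omega}(\omega)\bigr)=v(\omega)+A\bigl(\varphi^k_{\eta,\omega}(\omega)-\omega\bigr)=v(\omega)-\eta\sum_{j=0}^{k-1}A(-\eta A)^{j}v(\omega)=\sum_{j=0}^{k}(-\eta A)^{j}v(\omega),
\end{equation*}
using only the affinity of $v$. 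Plugging this into $\varphi^{k+1}_{\eta,\omega}(\omega)=\omega-\eta\, v(\varphi^k_{\eta,\omega}(\omega))$ yields the formula at level $k+1$.

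\textbf{Conclusion and obstacle.} Comparing the resulting expression $F_{k,\eta}(\omega)=\omega+\bigl[-\eta\sum_{j=0}^{k-1}(-\eta A)^{j}\bigr]v(\omega)$ with the $1$-SCLI template $F_{\mathcal{N}}(\omega)=\omega+\mathcal{N}(\nabla v)v(\omega)$ identifies $\mathcal{N}(\nabla v)=-\eta\sum_{j=0}^{k-1}(-\eta\nabla v)^{j}$, establishing the lemma. There is essentially no hard step here: the only subtlety worth flagging is that the identity holds precisely because $v$ is affine (so $A$ is a genuine constant matrix rather than a Jacobian that depends on the iterate), which is exactly the setting of the $1$-SCLI framework $\mathcal{V}_d$; outside this class the formula of \Cref{lemma: jacobians of extrapolation operators} would involve products of distinct Jacobians that do not collapse to powers of a single matrix.
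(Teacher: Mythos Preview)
Your proof is correct and takes a genuinely more elementary route than the paper. You argue by a direct induction on $k$ using only the affinity identity $v(z)-v(\omega)=A(z-\omega)$, which immediately yields $F_{k,\eta}(\omega)=\omega-\eta\sum_{j=0}^{k-1}(-\eta A)^j v(\omega)$ for every $\omega$. The paper instead proceeds indirectly: it first invokes \Cref{lemma: jacobians of extrapolation operators} to compute $\nabla F_{k,\eta}=\sum_{j=0}^k(-\eta\nabla v)^j$, then writes $F_{k,\eta}(\omega)=\nabla F_{k,\eta}\,\omega+b$ and identifies the affine part $b$ by evaluating at a stationary point $\omega^*$ (using $(\nabla v)\omega^*=-v(0)$); finally, because this step assumes a stationary point exists, it extends the identity to general affine $v$ by density of non-singular Jacobians and continuity in $\nabla v$. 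Your approach avoids both the auxiliary lemma and the density argument, at the cost of being specific to affine $v$ from the outset---which is exactly the $1$-SCLI setting, so nothing is lost. Your remark about the typo (exponent $k$ versus the summation index $j$) is also correct.
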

 \begin{proof}
This result is a direct consequence of \Cref{lemma: jacobians of extrapolation operators}. For $\omega \in \R^d$, one gets, by the chain rule, 
\begin{align}
\nabla F_{\eta, k}(\omega) &= \nabla_z \varphi_{\omega}^k(\omega) + \nabla_\omega \varphi_{\omega}^k(\omega)\\
&= (-\eta \nabla v)^k +  \sum_{j=0}^{k-1} (-\eta \nabla v)^{j}\\
&= \sum_{j=0}^{k} (-\eta \nabla v)^{j}\,.
\end{align}
as $\nabla v$ is constant. Hence, as expected, $F_{\eta, k}$ is linear so write that, for all $\omega \in \R^d$,
\begin{equation}
F_{\eta, k}(\omega) = \nabla F_{\eta, k}\omega + b\,.
\end{equation}
If $v$ has a stationary point $\omega^*$, evaluating at $\omega^*$ yields
\begin{equation}
\omega^* =  \sum_{j=0}^{k} (-\eta \nabla v)^{j}\omega^* + b\,.
\end{equation}
Using that $v(\omega^*) = 0$ and so $(\nabla v) \omega^* = -v(0)$, one gets that
\begin{equation}
b = -\eta\sum_{j=1}^{k} (-\eta \nabla v)^{j-1}v(0)\,,
\end{equation}
and so 
\begin{equation}
F_{\eta, k}(\omega) = \omega -\eta\sum_{j=1}^{k} (-\eta \nabla v)^{j-1}v(\omega)\,,
\end{equation}
which yields the result for affine vector fields with a stationary point. In particular it holds for vector fields such that $\nabla v$ is non-singular. As the previous equality is continuous in $\nabla v$, by density of non-singular matrices, the result holds for all affine vector fields.

 \end{proof}
\thmLowerBoundConvex*
To ease the presentation of the proof of the theorem, we rely on several lemmas. We first prove $(i)$ and $(ii)$ will follow as a consequence.

In the following, we denote by $\R_{k-1}[X]$ the set of real polynomials of degree at most $k-1$.
\begin{lemma}\label{lemma: spectral radius k extrapolation}
For, $v \in \mathcal{V}_d$, 
\begin{align}\label{eq: spectral radius k extrapolation}
\min_{N \in \R_{k-1}[X]} \half\rho(F_{\N})^2 &= \min_{a_0,\dots,a_{k-1} \in \R} \max_{\lambda \in \Sp \nabla v} \half|1+\sum_{l=0}^{k-1} a_l\lambda^{l+1}|^2\,.
\end{align}
\end{lemma}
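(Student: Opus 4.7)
The plan is to unwind the definition of $F_{\N}$ for an affine vector field, expand $\N$ in its monomial basis, and then apply the spectral mapping theorem (Thm.~\ref{thm: spectral mapping theorem}) to reduce a matrix spectral radius to a scalar polynomial maximization over $\Sp \nabla v$.

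First I would fix $v \in \mathcal{V}_d$ and recall that its Jacobian $\nabla v$ is a constant matrix, so $v(\omega) = \nabla v\, \omega + v(0)$. The $1$-SCLI update rule \cref{eq: 1-SCLI rule} then reads $F_{\N}(\omega) = \omega + \N(\nabla v)(\nabla v\, \omega + v(0))$, which is affine in $\omega$ with linear part
\begin{equation*}
\nabla F_{\N} = I_d + \N(\nabla v)\,\nabla v\,.
\end{equation*}
Here $\rho(F_{\N})$ denotes (as elsewhere in the paper, e.g.\ \Cref{thm: lower bound from spectral radius}) the spectral radius of this linear part.

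Next I would parametrize: every $\N \in \R_{k-1}[X]$ can be written uniquely as $\N(X) = \sum_{l=0}^{k-1} a_l X^l$ with $a_0,\dots,a_{k-1}\in\R$, so that $\N(\nabla v)\nabla v = \sum_{l=0}^{k-1} a_l (\nabla v)^{l+1}$ and hence
\begin{equation*}
\nabla F_{\N} \;=\; I_d + \sum_{l=0}^{k-1} a_l (\nabla v)^{l+1} \;=\; Q(\nabla v)\,,
\end{equation*}
where $Q(X) := 1 + \sum_{l=0}^{k-1} a_l X^{l+1}$ is a real polynomial. Since $\nabla F_{\N}$ is thus a polynomial in $\nabla v$, the spectral mapping theorem (\Cref{thm: spectral mapping theorem}) gives $\Sp(\nabla F_{\N}) = \{Q(\lambda) : \lambda \in \Sp \nabla v\}$, and therefore
\begin{equation*}
\rho(F_{\N})^2 \;=\; \max_{\lambda \in \Sp \nabla v}\Bigl|1 + \textstyle\sum_{l=0}^{k-1} a_l \lambda^{l+1}\Bigr|^2\,.
\end{equation*}
Minimizing both sides over $\N \in \R_{k-1}[X]$, equivalently over the free coefficients $(a_0,\dots,a_{k-1}) \in \R^k$, and multiplying by $\tfrac12$ yields exactly \cref{eq: spectral radius k extrapolation}.

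There is no real obstacle here: the only subtle points are the notational convention that $\rho(F_{\N})$ refers to the spectral radius of the linear part of the affine map $F_{\N}$, and the bijection between polynomials in $\R_{k-1}[X]$ and their coefficient tuples; both are transparent. The lemma is essentially a bookkeeping step that isolates the scalar minimax problem $\min_{a}\max_{\lambda\in\Sp\nabla v}|1+\sum_l a_l\lambda^{l+1}|^2$ to which Chebyshev-type lower bounds can subsequently be applied in the proof of \Cref{thm: lower bound convex}.
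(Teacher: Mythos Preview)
Your proof is correct and follows essentially the same route as the paper: compute $\nabla F_{\N} = I_d + \N(\nabla v)\nabla v$, apply the spectral mapping theorem to obtain $\Sp \nabla F_{\N} = \{1 + \N(\lambda)\lambda : \lambda \in \Sp \nabla v\}$, and then minimize over $\N$. You are simply a bit more explicit about the coefficient parametrization and the affine/linear-part convention, but the argument is the same.
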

\begin{proof}
Recall the definition of $F_{\N}$, which is affine by assumption, 
\begin{equation}
\forall \omega \in \R^d,\ F_{\N}(\omega) = w + \N(\nabla v)v(\omega)\,.
\end{equation}
Then $\nabla F_{\N} = I_d + \N(\nabla v)\nabla v$. As $\N$is a polynomial, by the spectral mapping theorem (\Cref{thm: spectral mapping theorem}), 
\begin{equation}
\Sp\nabla F_{\N} = \{1 +    \N(\lambda)\lambda\ |\ \lambda \in \Sp \nabla v\}\,,
\end{equation}
which yields the result.
\end{proof}
\begin{lemma}\label{lemma: dual problem}
Assume that $\Sp \nabla v = \{\lambda_1,\dots,\lambda_m\} \subset \R$. Then \cref{eq: spectral radius k extrapolation} can be lower bounded by the value of the following problem:
\begin{equation}
\begin{aligned}
\max &\sum_{j=1}^m \nu_j(\xi_j - \frac{1}{2}\xi_j^2)\\
\text{s.t.}\ \  &\nu_j \geq 0,\ \ \xi_j \in \R,\ \forall 1\leq j \leq m\\
&\sum_{j=1}^m \nu_j \xi_j \lambda_j^l = 0,\ \forall 1\leq l\leq k\\ 
&\sum_{j=1}^m \nu_j = 1\\
\end{aligned}
\end{equation}
\end{lemma}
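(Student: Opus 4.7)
The strategy is to apply weak duality twice, with the min over $a$ followed by the max over $\lambda_j$ on one side, and two concave conjugates on the other. First I would rewrite the quantity to be bounded by turning the max over the spectrum into a max over the simplex: since $\max_j f_j = \max_{\nu \in \Delta} \sum_j \nu_j f_j$ where $\Delta = \{\nu : \nu_j \ge 0, \sum_j \nu_j = 1\}$, Lemma~\ref{lemma: spectral radius k extrapolation} gives
\begin{equation*}
\min_{\N \in \R_{k-1}[X]} \tfrac{1}{2}\rho(F_\N)^2 \;=\; \min_{a \in \R^k} \max_{\nu \in \Delta} \sum_{j=1}^m \nu_j\, \tfrac{1}{2} p_a(\lambda_j)^2,
\end{equation*}
where $p_a(\lambda) := 1 + \sum_{l=0}^{k-1} a_l \lambda^{l+1}$. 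By the min-max inequality (weak duality), this is at least $\max_{\nu \in \Delta} \min_a \sum_j \nu_j \tfrac{1}{2} p_a(\lambda_j)^2$.

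Next I would linearise the inner quadratic via the Legendre transform $\tfrac{1}{2}t^2 = \max_{\xi \in \R}(\xi t - \tfrac{1}{2}\xi^2)$, so that
\begin{equation*}
\sum_{j} \nu_j \tfrac{1}{2} p_a(\lambda_j)^2 \;=\; \max_{\xi \in \R^m} \sum_{j} \nu_j\bigl(\xi_j p_a(\lambda_j) - \tfrac{1}{2}\xi_j^2\bigr).
\end{equation*}
Applying weak duality once more yields
\begin{equation*}
\min_a \max_\xi \sum_j \nu_j\bigl(\xi_j p_a(\lambda_j) - \tfrac{1}{2}\xi_j^2\bigr) \;\ge\; \max_\xi \min_a \sum_j \nu_j\bigl(\xi_j p_a(\lambda_j) - \tfrac{1}{2}\xi_j^2\bigr).
\end{equation*}

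Then I would carry out the inner minimisation over $a$ explicitly. Expanding $p_a(\lambda_j) = 1 + \sum_{l=0}^{k-1} a_l \lambda_j^{l+1}$, the inner sum splits as
\begin{equation*}
\sum_j \nu_j\bigl(\xi_j - \tfrac{1}{2}\xi_j^2\bigr) + \sum_{l=0}^{k-1} a_l \Bigl(\sum_j \nu_j \xi_j \lambda_j^{l+1}\Bigr),
\end{equation*}
which is affine in $a$. Hence the infimum over $a \in \R^k$ equals $\sum_j \nu_j(\xi_j - \tfrac{1}{2}\xi_j^2)$ when $\sum_j \nu_j \xi_j \lambda_j^{l} = 0$ for every $1 \le l \le k$, and $-\infty$ otherwise. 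Chaining all three inequalities with the constraint $\sum_j \nu_j = 1$ and $\nu_j \ge 0$ gives exactly the value of the dual program stated in the lemma, finishing the proof.

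The main subtlety to be careful about is that we are only applying weak duality (not strong), which is enough to produce a lower bound; no minimax theorem or compactness argument is needed. The only thing to verify cleanly is that the reduction of the inner minimisation over an unbounded $a \in \R^k$ correctly produces the linear constraints $\sum_j \nu_j \xi_j \lambda_j^l = 0$ for $l = 1, \ldots, k$, which comes straight from the affinity of the objective in $a$.
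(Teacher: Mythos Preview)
Your argument is correct. It reaches the same dual program as the paper, but via a somewhat different presentation of the same weak-duality principle. The paper proceeds by explicitly introducing slack variables $t$ and $z_j$, writing the constrained problem
\[
\min\; t \quad\text{s.t.}\quad t\ge \tfrac12 z_j^2,\; z_j = 1+\textstyle\sum_l a_l\lambda_j^{l+1},
\]
forming the Lagrangian with multipliers $\nu_j$ and $\chi_j$, setting the partial derivatives in $(t,a,z)$ to zero, and then substituting $\chi_j=\nu_j\xi_j$ to arrive at the stated dual. You instead avoid the Lagrangian machinery entirely: the simplex identity $\max_j f_j=\max_{\nu\in\Delta}\sum_j\nu_j f_j$ plays the role of the multiplier $\nu$ on the inequality constraints, and the Legendre identity $\tfrac12 t^2=\max_\xi(\xi t-\tfrac12\xi^2)$ replaces the $z_j$/$\chi_j$ pair. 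Two applications of the min--max inequality then suffice. The trade-off is that the paper's route is the textbook Lagrangian computation (transparent to anyone comfortable with KKT conditions), while yours is more self-contained and sidesteps the first-order calculus; both use only weak duality, so neither needs any regularity or Slater-type condition.
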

\begin{proof}
The right-hand side of \cref{eq: spectral radius k extrapolation} can be written as a constrained optimization problem as follows:
\begin{equation}
\begin{aligned}
&\min_{t, a_0,\dots,a_{k-1},z_1,\dots,z_m \in \R} t\\
&\text{s.t.}\ \ t \geq \frac{1}{2} z_j^2,\ \forall\ 1\leq j \leq m\\
&z_j = 1+\sum_{l=0}^{k-1} a_l\lambda_j^{l+1},\ \forall\ 1\leq j \leq m\,.\\
\end{aligned}    
\end{equation}
By weak duality, see \citet{boydConvexOptimization2004a} for instance, we can lower bound the value of this problem by the value of its dual. So let us write the Lagrangian of this problem:
\begin{equation}
\begin{aligned}
&\mathcal{L}(t, a_0,\dots,a_{k-1},z_1,\dots,z_m, \nu_1,\dots,\nu_m, \chi_1,\dots,\chi_m)\\
&=t + \sum_{j=0}^m \nu_j(\half z_j^2 - t) + \chi_j(1+\sum_{l=0}^{k-1} a_l\lambda_j^{l+1} - z_j)\,.\\
\end{aligned}
\end{equation}
The Lagrangian is convex and quadratic so its minimum with respect to $t, a_0,\dots,a_{k-1},z_1,\dots,z_m$ is characterized by the first order condition. Moreover, if there is no solution to the first order condition, its minimum is $-\infty$ (see for instance \citet[Example 4.5]{boydConvexOptimization2004a}).

One has that, for any $1 \leq j \leq m$ and $0 \leq l \leq k-1$,
\begin{align}
&\partial_t \mathcal{L} = 1 - \sum_{j=0}^m \nu_j\\
&\partial_{a_l} \mathcal{L} = \sum_{j=0}^m \chi_j \lambda_j^{l+1}\\
&\partial_{z_j} \mathcal{L} = \nu_j z_j - \chi_j\,.
\end{align}
Setting these quantities to zero yields the following dual problem:
\begin{equation}
\begin{aligned}
\max &\sum_{j=1,\ \nu_j \neq 0}^m \chi_j - \frac{1}{2\nu_j}\chi_j^2\\
\text{s.t.}\ \  &\nu_j \geq 0,\ \ \chi_j \in \R,\ \forall 1\leq j \leq m\\
&\sum_{j=1}^m \chi_j \lambda_j^l = 0,\ \forall 1\leq l\leq k\\
&\nu_j = 0 \implies \chi_j = 0\\
&\sum_{j=1}^m \nu_j = 1\,
\end{aligned}
\end{equation}
Taking $\nu_j\xi_j = \chi_j$ yields the result:
\begin{equation}
\begin{aligned}
\max &\sum_{j=1}^m \nu_j(\xi_j - \frac{1}{2}\xi_j^2)\\
\text{s.t.}\ \  &\nu_j \geq 0,\ \ \xi_j \in \R,\ \forall 1\leq j \leq m\\
&\sum_{j=1}^m \nu_j \xi_j \lambda_j^l = 0,\ \forall 1\leq l\leq k\\ 
&\sum_{j=1}^m \nu_j = 1\,.
\end{aligned}
\end{equation}
\end{proof}
The next lemma concerns Vandermonde matrices and Lagrange polynomials.
\begin{lemma}
Let $\lambda_1,\dots,\lambda_d$ be distinct reals. Denote the Vandermonde matrix by \begin{equation}
V(\lambda_1,\dots,\lambda_d) = \begin{pmatrix}
1 & \lambda_1 & \lambda_1^2 & \dots & \lambda_1^{d-1}\\
1 & \lambda_2 & \lambda_2^2 & \dots & \lambda_2^{d-1}\\
\vdots & \vdots & \vdots & & \vdots\\
1 & \lambda_1 & \lambda_1^2 & \dots & \lambda_1^{d-1}\\
\end{pmatrix}\,.
\end{equation}
Then 
\begin{equation}
V(\lambda_1,\dots,\lambda_d)^{-1} = \begin{pmatrix}
L_1^{(0)} & L_2^{(0)} & \dots & L_{d}^{(0)}\\
L_1^{(1)} & L_2^{(1)} & \dots & L_{d}^{(1)}\\
\vdots    & \vdots    &       & \vdots\\
L_1^{(d-1)} & L_2^{(d-1)} & \dots & L_{d}^{(d-1)}\\
\end{pmatrix}
\end{equation}
where $L_1, L_2, \dots, L_d$ are the Lagrange interpolation polynomials associated to $\lambda_1,\dots, \lambda_d$ and $L_j = \sum_{l=0}^{d-1} L_j^{(l)}X^l$ for $1 \leq j \leq d$.
\end{lemma}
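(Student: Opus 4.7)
The plan is to recognize this as essentially the matrix form of the defining interpolation property of Lagrange polynomials. First, I would recall that since $\lambda_1, \dots, \lambda_d$ are distinct, the Lagrange polynomials
\[
L_j(X) = \prod_{k \neq j} \frac{X - \lambda_k}{\lambda_j - \lambda_k}
\]
are well-defined, each of degree exactly $d-1$, and satisfy the interpolation property $L_j(\lambda_i) = \delta_{ij}$ for all $1 \leq i, j \leq d$. Writing $L_j(X) = \sum_{l=0}^{d-1} L_j^{(l)} X^l$, this interpolation property expands to
\[
\sum_{l=0}^{d-1} L_j^{(l)} \lambda_i^l = \delta_{ij}, \quad \forall\, 1 \leq i, j \leq d.
\]

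The second step is to read this off as a matrix identity. Let $M$ denote the candidate inverse proposed in the statement, i.e.\ the $d \times d$ matrix whose $(l+1, j)$ entry is $L_j^{(l)}$. Then, by inspection of the Vandermonde entries $V_{il} = \lambda_i^{l-1}$,
\[
(VM)_{ij} = \sum_{l=1}^{d} V_{il} M_{lj} = \sum_{l=0}^{d-1} \lambda_i^l L_j^{(l)} = L_j(\lambda_i) = \delta_{ij}.
\]
Hence $VM = I_d$. Since $V$ is a square matrix, the relation $VM = I_d$ already implies $M = V^{-1}$, which is exactly the stated formula.

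There is essentially no obstacle in this argument: it is a clean restatement of the Lagrange interpolation identity in matrix form. The only prerequisite one should note explicitly is that $V(\lambda_1, \dots, \lambda_d)$ is invertible whenever the $\lambda_i$ are pairwise distinct (by the classical determinant formula $\det V = \prod_{i < j}(\lambda_j - \lambda_i)$), which is the hypothesis of the lemma and also guarantees that the $L_j$ themselves are well-defined.
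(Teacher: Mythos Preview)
Your argument is correct: the identity $VM = I_d$ is exactly the matrix form of $L_j(\lambda_i) = \delta_{ij}$, and since $V$ is square (and invertible by the distinctness hypothesis) this gives $M = V^{-1}$. The paper does not actually prove this lemma but simply cites \citet[Theorem~3.1]{atkinsonIntroductionNumericalAnalysis1989}; your direct verification is the standard argument one would find there, so there is nothing to compare.
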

A proof of this result can be found at \citet[Theorem 3.1]{atkinsonIntroductionNumericalAnalysis1989}.

The next lemma is the last one before we finally prove the theorem. Recall that in \Cref{thm: lower bound convex} we assume that $k+1 \leq d$.
\begin{lemma}\label{lemma: Lagrange lower bound}
Assume that $\Sp \nabla v = \{\lambda_1,\dots,\lambda_{k+1}\}$ where $\lambda_1,\dots,\lambda_{k+1}$ are distinct non-zero reals. Then the problem of \cref{eq: spectral radius k extrapolation} is lower bounded by 
\begin{equation}\label{eq: Lagrange lower bound}
\half \left(\frac{1 - \sum_{j=1}^k\frac{\lambda_{k+1}}{\lambda_j}L_j(\lambda_{k+1})}{1+\sum_{j=1}^k|\frac{\lambda_{k+1}}{\lambda_j}L_j(\lambda_{k+1})|}\right)^2\,,
\end{equation}
where $L_1,\dots,L_k$ are the Lagrange interpolation polynomials associated to $\lambda_1,\dots,\lambda_{k}$.
\end{lemma}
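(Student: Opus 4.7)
The plan is to invoke the weak-duality bound from \cref{lemma: dual problem} and produce a carefully chosen feasible point whose objective value equals the right-hand side of \cref{eq: Lagrange lower bound}. With the change of variables $\chi_j := \nu_j \xi_j$, the dual objective rewrites as $\sum_j \chi_j - \half \sum_j \chi_j^2/\nu_j$ and the equality constraints become the $k$ linear relations $\sum_j \chi_j \lambda_j^l = 0$ for $l = 1, \dots, k$. The associated $k\times(k+1)$ matrix is a submatrix of a nonsingular Vandermonde, so these constraints cut out a one-dimensional subspace of $\R^{k+1}$, and the first step is to identify its generator explicitly in terms of the Lagrange polynomials $L_1, \dots, L_k$ on $\lambda_1, \dots, \lambda_k$.

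To carry out that identification, note that $\sum_j \chi_j \lambda_j^l = 0$ for $l = 1, \dots, k$ is equivalent to $\sum_j (\lambda_j \chi_j) R(\lambda_j) = 0$ for every polynomial $R$ of degree at most $k-1$. Applying this to $R = L_i$, which has degree $k-1$ and satisfies $L_i(\lambda_j) = \delta_{ij}$ for $j \leq k$, yields $\lambda_i \chi_i + \lambda_{k+1} \chi_{k+1} L_i(\lambda_{k+1}) = 0$ for each $i \leq k$. Normalizing $\chi_{k+1} = 1$ produces the line
\begin{equation*}
\chi_i = -\frac{\lambda_{k+1}}{\lambda_i} L_i(\lambda_{k+1}), \quad 1 \leq i \leq k, \qquad \chi_{k+1} = 1,
\end{equation*}
which I would then rescale by a free parameter $c \in \R$.

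For fixed $\chi$, the next step optimizes over $\nu$ on the simplex. Cauchy--Schwarz gives $\bigl(\sum_j |\chi_j|\bigr)^2 \leq \bigl(\sum_j \chi_j^2/\nu_j\bigr)\bigl(\sum_j \nu_j\bigr)$, with equality when $\nu_j \propto |\chi_j|$, so setting $\nu_j = |\chi_j|/T$ with $T := \sum_\ell |\chi_\ell|$ turns the objective into $\sum_j \chi_j - \half T^2$. Reinstating the scaling $c$ and writing $S := \sum_j \chi_j$, the objective becomes the concave quadratic $cS - (c^2/2) T^2$, maximized at $c = S/T^2$ with value $\half (S/T)^2$. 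Plugging in the explicit $\chi_j$ yields $S = 1 - \sum_{j=1}^k (\lambda_{k+1}/\lambda_j) L_j(\lambda_{k+1})$ and $T = 1 + \sum_{j=1}^k |(\lambda_{k+1}/\lambda_j) L_j(\lambda_{k+1})|$, which is exactly \cref{eq: Lagrange lower bound}.

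The only delicate point is handling indices with $\chi_j = 0$: for these one sets $\nu_j = 0$, which is permitted by the dual formulation, and interprets $\chi_j^2/\nu_j$ as $0$. Since $\chi_{k+1} = 1 \neq 0$ under any nonzero rescaling, $T$ is always strictly positive and the optimization in $c$ is well posed. No other serious obstacle is anticipated; the heart of the argument is recognizing the null space of the constraints in Lagrange form and then performing the two-step Cauchy--Schwarz optimization.
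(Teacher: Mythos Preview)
Your proof is correct and follows essentially the same approach as the paper: exhibit a feasible point for the dual problem of \cref{lemma: dual problem}, identify the constraint null space via the Lagrange interpolation polynomials, and then optimize over the remaining free parameters. Your presentation is arguably cleaner---you extract the relation $\lambda_i\chi_i + \lambda_{k+1}\chi_{k+1}L_i(\lambda_{k+1})=0$ directly from the interpolation identity $L_i(\lambda_j)=\delta_{ij}$ rather than through the Vandermonde inverse lemma, and you motivate the choice $\nu_j \propto |\chi_j|$ via Cauchy--Schwarz rather than simply positing it---but the underlying strategy and computation coincide with the paper's.
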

\begin{proof}
To prove this lemma, we start from the result of \Cref{lemma: dual problem} and we provide feasible $(\nu_j)_j$ and $(\xi_j)_j$. First, any feasible $(\nu_j)_j$ and $(\xi_j)_j$ must satisfy the $k$ constraints involving the powers of the eigenvalues, which can be rewritten as:
\begin{equation}
V(\lambda_1,\dots,\lambda_k)^T \begin{pmatrix}
\nu_1\xi_1\lambda_1\\
\nu_2\xi_2\lambda_2\\
\vdots\\
\nu_k\xi_k\lambda_k\\
\end{pmatrix}
= -\nu_{k+1}\xi_{k+1}\begin{pmatrix}
\lambda_{k+1}\\
\lambda_{k+1}^2\\
\dots\\
\lambda_{k+1}^{k}\\
\end{pmatrix}\,.
\end{equation}
Using the previous lemma yields, for $1 \leq j \leq k$,
\begin{align}
\nu_j \xi_j &=-\nu_{k+1}\xi_{k+1}\frac{1}{\lambda_j}\begin{pmatrix}
L_j^{(0)} & L_j^{(1)} & \dots & L_j^{(k-1)}\\
\end{pmatrix}\begin{pmatrix}
\lambda_{k+1}\\
\lambda_{k+1}^2\\
\dots\\
\lambda_{k+1}^{k}\\
\end{pmatrix}\\
&= -\nu_{k+1}\xi_{k+1}\frac{\lambda_{k+1}}{\lambda_j}L_j(\lambda_{k+1})\,.
\end{align}
Hence the problem can be rewritten only in terms of the $(\nu_j)_j$ and $\xi_{k+1}$. Let $c_j = \frac{\lambda_{k+1}}{\lambda_j}L_j(\lambda_{k+1})$. The objective becomes:
\begin{equation}
\sum_{j=1}^m \nu_j\left(\xi_j - \frac{1}{2}\xi_j^2\right) = \nu_{k+1}\xi_{k+1}\left(1 - \sum_{j=1}^k c_j\right) - \half\nu_{k+1}\xi_{k+1}^2\left(1 + \sum_{j=0}^k \frac{\nu_{k+1}}{\nu_j}c_j^2\right)\,.
\end{equation}
Choosing $\xi_{k+1} = \frac{1 - \sum_{j=1}^k c_j}{1 + \sum_{j=0}^k \frac{\nu_{k+1}}{\nu_j}c_j^2}$ to maximize this quadratic yields:
\begin{equation}
\sum_{j=1}^m \nu_j\left(\xi_j - \frac{1}{2}\xi_j^2\right) = \half \nu_{k+1} \frac{\left(1 - \sum_{j=1}^k c_j\right)^2}{1 + \sum_{j=0}^k \frac{\nu_{k+1}}{\nu_j}c_j^2}\,.
\end{equation}
Finally take $\nu_j = \frac{|c_j|}{1 + \sum_{j=1}^k |c_j|}$ for $j \leq k$ and $\nu_{k+1} = \frac{1}{1 + \sum_{j=1}^k |c_j|}$which satisfy the hypotheses of the problem of \Cref{lemma: dual problem}. With the feasible $(\nu_j)_j$ and $(\xi_j)_j$ defined this way, the value of the objective is
\begin{equation}
\half \left(\frac{1 - \sum_{j=1}^kc_j}{1+\sum_{j=1}^k|c_j|}\right)^2\,,
\end{equation}
which is the desired result.
\end{proof}
We finally prove $(i)$ of \Cref{thm: lower bound convex}.
\begin{proof}[Proof of $(i)$ of \Cref{thm: lower bound convex}]
To prove the theorem, we build on the result of \Cref{lemma: Lagrange lower bound}. We have to choose $\lambda_1,\dots,\lambda_{k+1} \in [\mu, L]$ positive distinct such that \cref{eq: Lagrange lower bound} is big. One could try to distribute the eigenvalues uniformly across the interval but this leads to a lower bound which decreases exponentially in $k$. To make things a bit better, we use Chebyshev points of the second kind studied by \citet{salzerLagrangianInterpolationChebyshev1971}. However we will actually refer to the more recent presentation of \citet{berrutBarycentricLagrangeInterpolation2004}.

For now, assume that $k$ is even and so $k \geq 4$. We will only use that $d-1 \geq k$ (and not that $d-2 \geq k$).
Define, for $1 \leq j \leq k$, $\lambda_j = \frac{\mu + L}{2} - \frac{L - \mu}{2}\cos{\frac{j - 1}{k - 1}\pi}$. Using the barycentric formula of \citet[Eq.~4.2]{berrutBarycentricLagrangeInterpolation2004}, the polynomial which interpolates $f_1,\dots,f_k$ at the points $\lambda_1,\dots,\lambda_k$ can be written as:
\begin{equation}
P(X) = \frac{\sum_{j=1}^k \frac{w_j}{X - \lambda_j}f_j}{\sum_{j=1}^k \frac{w_j}{X - \lambda_j}}\,,
\end{equation}
where 
\begin{equation}
w_j = \begin{cases}
(-1)^{j-1} \quad&\text{if}\quad 2 \leq j \leq k -1\\
\half(-1)^{j-1} \quad&\text{if}\quad j \in\{1, k\}\,.\\
\end{cases}    
\end{equation}
Define $Z(X) = \sum_{j=1}^k \frac{w_j}{X - \lambda_j}$.

Now, $\sum_{j=1}^k\frac{\lambda_{k+1}}{\lambda_j}L_j(\lambda_{k+1})$ can be seen as the polynomial interpolating $\frac{\lambda_{k+1}}{\lambda_1},\dots,\frac{\lambda_{k+1}}{\lambda_k}$ at the points $\lambda_1,\dots,\lambda_j$ evaluated at $\lambda_{k+1}$. Hence, using the barycentric formula,
\begin{equation}
\sum_{j=1}^k\frac{\lambda_{k+1}}{\lambda_j}L_j(\lambda_{k+1}) =  \frac{1}{Z(\lambda_{k+1})} \sum_{j=1}^k \frac{w_j}{\lambda_{k+1} - \lambda_j}\frac{\lambda_{k+1}}{\lambda_j}\,.
\end{equation}
Similarly, $\sum_{j=1}^k|\frac{\lambda_{k+1}}{\lambda_j}L_j(\lambda_{k+1})|$ can be seen as the polynomial interpolating $|\frac{\lambda_{k+1}}{\lambda_1}|\sign(L_1(\lambda_{k+1})),\dots,|\frac{\lambda_{k+1}}{\lambda_k}|\sign(L_k(\lambda_{k+1}))$ at the points $\lambda_1,\dots,\lambda_j$ evaluated at $\lambda_{k+1}$. However, from \citet[Section 3]{berrutBarycentricLagrangeInterpolation2004}, \begin{equation}
L_j(\lambda_{k+1}) = \left(\prod_{j = 1}^k(\lambda_{k+1} - \lambda_j)\right)\frac{w_j}{\lambda_{k+1} - \lambda_j}\,,
\end{equation}
and by \citet[Eq.~4.1]{berrutBarycentricLagrangeInterpolation2004},
\begin{equation}\label{eq: sign Z}
1 = \left(\prod_{j = 1}^k(\lambda_{k+1} - \lambda_j)\right)Z(\lambda_{k+1})\,.
\end{equation}
Hence 
\begin{equation}
 \sign(L_j(\lambda_{k+1})) = \sign Z(\lambda_{k+1}) \sign\left(\frac{w_j}{\lambda_{k+1} - \lambda_j}\right)\,.
\end{equation}
Therefore, using the barycentric formula again,
\begin{equation}
\sum_{j=1}^k\frac{\lambda_{k+1}}{\lambda_j}|L_j(\lambda_{k+1})| =  \frac{1}{|Z(\lambda_{k+1})|} \sum_{j=1}^k \left|\frac{w_j}{\lambda_{k+1} - \lambda_j}\right|\frac{\lambda_{k+1}}{\lambda_j}\,.
\end{equation}
Hence, \cref{eq: Lagrange lower bound} becomes:
\begin{align}
&\half \left(\frac{1 - \sum_{j=1}^k\frac{\lambda_{k+1}}{\lambda_j}L_j(\lambda_{k+1})}{1+\sum_{j=1}^k|\frac{\lambda_{k+1}}{\lambda_j}L_j(\lambda_{k+1})|}\right)^2\\
&= \half \left(\frac{1 - \frac{1}{Z(\lambda_{k+1})} \sum_{j=1}^k \frac{w_j}{\lambda_{k+1} - \lambda_j}\frac{\lambda_{k+1}}{\lambda_j}}{1 + \frac{1}{|Z(\lambda_{k+1})|} \sum_{j=1}^k \left|\frac{w_j}{\lambda_{k+1} - \lambda_j}\right|\frac{\lambda_{k+1}}{\lambda_j}}\right)^2\\
&= \half \left(1 - \frac{\frac{1}{|Z(\lambda_{k+1})|} \sum_{j=1}^k \left|\frac{w_j}{\lambda_{k+1} - \lambda_j}\right|\frac{\lambda_{k+1}}{\lambda_j}\left(1 + \sign Z(\lambda_{k+1}) \sign \left( \frac{w_j}{\lambda_{k+1} - \lambda_j}\right)\right)}{1 + \frac{1}{|Z(\lambda_{k+1})|} \sum_{j=1}^k \left|\frac{w_j}{\lambda_{k+1} - \lambda_j}\right|\frac{\lambda_{k+1}}{\lambda_j}}\right)^2\,.\label{eq: previous result LB}
\end{align}
Now take any $\lambda_{k+1}$ such that $\lambda_1 < \lambda_{k+1} < \lambda_2$. Then, from \cref{eq: sign Z}, $\sign Z(\lambda_{k+1}) = (-1)^{k+1} = -1$ as we assume that $k$ is even. By definition of the coefficients $w_j$, $\sign \frac{w_1}{\lambda_{k+1} - \lambda_1} = +1$. Hence $1 + \sign Z(\lambda_{k+1}) \sign \frac{w_1}{\lambda_{k+1} - \lambda_1} = 0$. Similarly, $\sign \frac{w_2}{\lambda_{k+1} - \lambda_2} = +1$ and so $1 + \sign Z(\lambda_{k+1}) \sign \frac{w_2}{\lambda_{k+1} - \lambda_2} = 0$ too\footnote{We could do without this, but it is free and gives slightly better constants.}.

As the quantity inside the parentheses of \cref{eq: previous result LB} is non-negative, we can focus on lower bounding it. Using the considerations on signs we get:
\begin{align}
&\frac{\frac{1}{|Z(\lambda_{k+1})|} \sum_{j=1}^k \left|\frac{w_j}{\lambda_{k+1} - \lambda_j}\right|\frac{\lambda_{k+1}}{\lambda_j}\left(1 + \sign Z(\lambda_{k+1}) \sign \left( \frac{w_j}{\lambda_{k+1} - \lambda_j}\right)\right)}{1 + \frac{1}{|Z(\lambda_{k+1})|} \sum_{j=1}^k \left|\frac{w_j}{\lambda_{k+1} - \lambda_j}\right|\frac{\lambda_{k+1}}{\lambda_j}}\\
&=\frac{\frac{1}{|Z(\lambda_{k+1})|} \sum_{j=3}^k \left|\frac{w_j}{\lambda_{k+1} - \lambda_j}\right|\frac{\lambda_{k+1}}{\lambda_j}\left(1 + \sign Z(\lambda_{k+1}) \sign \left( \frac{w_j}{\lambda_{k+1} - \lambda_j}\right)\right)}{1 + \frac{1}{|Z(\lambda_{k+1})|} \sum_{j=1}^k \left|\frac{w_j}{\lambda_{k+1} - \lambda_j}\right|\frac{\lambda_{k+1}}{\lambda_j}}\\
&\leq 2 \frac{\sum_{j=3}^k \left|\frac{1}{\lambda_{k+1} - \lambda_j}\right|\frac{\lambda_{k+1}}{\lambda_j}}{|Z(\lambda_{k+1})| + \sum_{j=1}^k \left|\frac{w_j}{\lambda_{k+1} - \lambda_j}\right|\frac{\lambda_{k+1}}{\lambda_j}}\\
&\leq 2 \frac{\sum_{j=3}^k \left|\frac{1}{\lambda_{k+1} - \lambda_j}\right|\frac{\lambda_{k+1}}{\lambda_j}}{ \half\left|\frac{1}{\lambda_{k+1} - \lambda_1}\right|\frac{\lambda_{k+1}}{\lambda_1}}\\
&\leq 2 \frac{(k-2) \left|\frac{1}{\lambda_{k+1} - \lambda_3}\right|\frac{\lambda_{k+1}}{\lambda_3}}{ \half\left|\frac{1}{\lambda_{k+1} - \lambda_1}\right|\frac{\lambda_{k+1}}{\lambda_1}}\\
\end{align}
where we used that, for $j\geq 3$, $\left|\frac{1}{\lambda_{k+1} - \lambda_j}\right|\frac{\lambda_{k+1}}{\lambda_j} \leq \left|\frac{1}{\lambda_{k+1} - \lambda_3}\right|\frac{\lambda_{k+1}}{\lambda_3}$ as $\lambda_1 < \lambda_{k+1} < \lambda_2 < \lambda_3 < \dots < \lambda_k$.
Now, recalling that $\lambda_1 = \mu$, and using that $\lambda_1 < \lambda_{k+1} < \lambda_2 < \lambda_3$ for the inequality,
\begin{align}
2 \frac{(k-2) \left|\frac{1}{\lambda_{k+1} - \lambda_3}\right|\frac{\lambda_{k+1}}{\lambda_3}}{ \half\left|\frac{1}{\lambda_{k+1} - \lambda_1}\right|\frac{\lambda_{k+1}}{\lambda_1}} &= 4(k-2)\frac{\mu}{\lambda_3}\frac{|\lambda_{k+1} - \lambda_1|}{|\lambda_{k+1} - \lambda_3|}\\
&\leq 4(k-2)\frac{\mu}{\lambda_3}\frac{|\lambda_{2} - \lambda_1|}{|\lambda_{2} - \lambda_3|}\\
&= 4(k-2)\frac{\mu}{\half L (1 - \cos\frac{2\pi}{k-1}) + \half \mu (1 + \cos\frac{2\pi}{k-1})}\frac{|\cos\frac{\pi}{k-1} - 1|}{|\cos\frac{\pi}{k-1} - \cos\frac{2\pi}{k-1}|}\\
&\leq 8(k-2)\frac{\mu}{ L (1 - \cos\frac{\pi}{k-1})}\frac{|\cos\frac{\pi}{k-1} - 1|}{|\cos\frac{\pi}{k-1} - \cos\frac{2\pi}{k-1}|}\\
&= 8(k-2)\frac{\mu}{L}\frac{1}{|\cos\frac{\pi}{k-1} - \cos\frac{2\pi}{k-1}|}
\end{align}
by definition of the interpolation points.
Now, for $k \geq 4$, the sinus is non-negative on $[\frac{\pi}{k-1}, \frac{2\pi}{k-1}]$ and reaches its minimum at $\frac{\pi}{k-1}$. Hence,
\begin{align}
\left|\cos\frac{\pi}{k-1} - \cos\frac{2\pi}{k-1}\right| &= \left|\int_{\pi/(k-1)}^{2\pi/(k-1)} \sin t dt \right|\\
&= \int_{\pi/(k-1)}^{2\pi/(k-1)} \sin t dt\\
&\geq \frac{\pi}{k-1}\sin\frac{\pi}{k-1}\\
&\geq 2\frac{\pi}{(k-1)^2}
\end{align}
as $0 \geq \frac{\pi}{k-1} \geq \frac{\pi}{2}$.
Putting everything together yields,
\begin{align}
\half \left(\frac{1 - \sum_{j=1}^k\frac{\lambda_{k+1}}{\lambda_j}L_j(\lambda_{k+1})}{1+\sum_{j=1}^k|\frac{\lambda_{k+1}}{\lambda_j}L_j(\lambda_{k+1})|}\right)^2 &\geq \half\left(1 - \frac{4(k-1)^2(k-2)}{\pi}\frac{\mu}{L}\right)^2\\
&\geq \half\left(1 - \frac{4(k-1)^3)}{\pi}\frac{\mu}{L}\right)^2\,,
\end{align}
which yields the desired result by the definition of the problem of \cref{eq: spectral radius k extrapolation}.

The lower bound holds for any $v$ such that $\Sp \nabla v = \{\lambda_1,\dots,\lambda_{k+1}\}$. As $\{\lambda_1,\dots,\lambda_{k+1}\} \subset [\mu, L]$, one can choose $v$ of the form $v = \nabla f$ where $f:\R^d \rightarrow \R$ is a $\mu$-strongly convex and $L$-smooth quadratic function with $\Sp \nabla^2 f =  \{\lambda_1,\dots,\lambda_{k+1}\}$.

Now, we tackle the case $k$ odd, with $k\geq3$ and $d-1 \geq k+1$. Note that if $\mathcal{N}$ is a real polynomial of degree at most $k - 1$, it is also a polynomial of degree at most $(k + 1) - 1$. Applying the result above yields that there exists $v \in \mathcal{V}_d$ with the desired properties such that,
\begin{equation}
\rho(F_{\N}) \geq 1 - \frac{k^3}{2\pi} \frac{\gamma^2}{L^2}\,.
\end{equation}
Hence, $(i)$ holds for any $d -2\geq k \geq 3$.
\end{proof}
Then, $(ii)$ is essentially a corollary of $(i)$.
\begin{proof}[Proof of $(ii)$ of \Cref{thm: lower bound convex}]
For a square zero-sum two player game, the Jacobian of the vector field can be written as,
\begin{equation}
\nabla v = \begin{pmatrix}
0_m & A\\
-A^T & 0_m
\end{pmatrix}
\end{equation} where $A \in \R^{m\times m}$. By \Cref{lemma: characterization of the spectrum}, \begin{equation}
\Sp \nabla v = \{i\sqrt{\lambda}\ |\ \lambda \in \Sp AA^T\} \cup \{-i\sqrt{\lambda}\ |\ \lambda \in \Sp AA^T\}\,.
\end{equation}
Using \Cref{lemma: spectral radius k extrapolation}, one gets that:
\begin{align}
\min_{N \in \R_{k-1}[X]} \half\rho(F_{\N})^2 &= \min_{a_0,\dots,a_{k-1} \in \R} \max_{\lambda \in \Sp AA^T}\half \max\left(\left|1+\sum_{l=0}^{k-1} a_l(i\sqrt{\lambda})^{l+1}\right|^2, \left|1+\sum_{l=0}^{k-1} a_l(-i\sqrt{\lambda})^{l+1}\right|^2\right)\\
&\geq \min_{a_0,\dots,a_{k-1} \in \R} \max_{\lambda \in \Sp AA^T} \half\left|1+\sum_{l=0}^{k-1} a_l(i\sqrt{\lambda})^{l+1}\right|^2\\
&\geq \min_{a_0,\dots,a_{k-1} \in \R} \max_{\lambda \in \Sp AA^T} \half\left(\Re\left(1+\sum_{l=0}^{k-1} a_l(i\sqrt{\lambda})^{l+1}\right)\right)^2\\
&= \min_{a_0,\dots,a_{k-1} \in \R} \max_{\lambda \in \Sp AA^T} \half\left(1+\sum_{l=1}^{\lfloor k/2\rfloor} a_{2l-1}(-1)^l\lambda^{l}\right)^2\\
&= \min_{a_0,\dots,a_{\lfloor k/2\rfloor -1} \in \R} \max_{\lambda \in \Sp AA^T} \half\left(1+\sum_{l=1}^{\lfloor k/2\rfloor} a_{l-1}\lambda^{l}\right)^2\,.
\end{align}
Using \Cref{lemma: spectral radius k extrapolation} again, \begin{equation}
 \min_{a_0,\dots,a_{\lfloor k/2\rfloor -1} \in \R} \max_{\lambda \in \Sp AA^T} \half\left(1+\sum_{l=1}^{\lfloor k/2 \rfloor} a_{l-1}\lambda^{l}\right)^2 = \min_{N \in \R_{\lfloor k/2\rfloor -1}[X]} \half\rho(\tilde F_{\N})^2\,.
\end{equation}
where $\tilde F_{\N}$ is the 1-SCLI operator of $\N$, as defined by \cref{eq: 1-SCLI rule} applied to the vector field $\omega \mapsto AA^T \omega$. Let $S \in \R^{m \times m}$ be a symmetric positive definite matrix given by $(i)$ of this theorem applied with $(\mu, L) = (\gamma^2, L^2)$ and $\left\lfloor\frac{k}{2}\right\rfloor$ instead of $k$ and so such that $\Sp S \subset [\gamma^2, L^2]$. Now choose $A \in \R^{m \times m}$ such that $A^TA = S$, for instance by taking a square root of $S$ (see \citet[Chapter 10]{laxLinearAlgebraIts2007}). Then, 
\begin{equation}
    \min_{N \in \R_{\lfloor k/2 \rfloor-1}[X]} \half\rho(\tilde F_{\N})^2 \geq \half\left(1 - \frac{k^3}{2\pi} \frac{\mu}{L}\right)\,.
\end{equation}
Moreover, by computing $\nabla v^T\nabla v$ and using that $\Sp AA^T = \Sp A^TA$, one gets that $\min_{\lambda \in \Sp \nabla v} |\lambda| = \sigma_{min}(\nabla v) = \sigma_{min}(A) \geq \gamma$ and $\sigma_{max}(\nabla v) = \sigma_{max}(A) \leq L$.
\end{proof}
\begin{rmk}
Interestingly, the examples we end up using have a spectrum similar to the one of the matrix Nesterov uses in the proofs of his lower bounds in \citet{nesterovIntroductoryLecturesConvex2004}. The choice of the spectrum of the Jacobian of the vector field was indeed the choice of interpolation points. Following \citet{salzerLagrangianInterpolationChebyshev1971, berrutBarycentricLagrangeInterpolation2004} we used points distributed across the interval as a cosinus as it minimizes oscillations near the edge of the interval. Therefore, this links the hardness Nesterov's examples to the well-conditioning of families of interpolation points.
\end{rmk}

\section{Handling singularity}\label{section: app handling singularity}
The following theorem is a way to use spectral techniques to obtain geometric convergence rates even if the Jacobian of the vector field at the stationary point is singular. We only need to ensure that the vector field is locally null along these directions of singularity.

In this subsection, for $A \in \R^{m \times p}$, $\Ker A = \{x \in \R^p\, |\, Ax = 0\}$ denotes the kernel (or the nullspace) of $A$.

The following theorem is actually a combination of the proof of \citet[Thm.~A.4]{nagarajanGradientDescentGAN2017}, which only proves asymptotic statibility in continuous time with no concern for the rate, and the classic \cref{thm: local convergence}.
\begin{thm}\label{thm: local convergence lemma with singularity}
Consider $h : \R^m \times \R^p \rightarrow \R^m \times \R^p$ twice continuously  differentiable vector field and write $h(\theta, \varphi) = (h_\theta(\theta, \varphi), h_\varphi(\theta, \varphi))$. Assume that $(0, 0)$ is a stationary point, i.e. $h(0, 0) = (0, 0)$ and that there exists $\tau > 0$ such that,
\begin{equation}
\forall \varphi \in \R^p \cap B(0, \tau),\quad  h(0, \varphi) = (0, 0)\,.    
\end{equation}
Let $\rho^* = \rho(\nabla_\theta (\Id + h_\theta)(0, 0))$ and define the iterates $(\theta_t, \varphi_t)_t$ by
\begin{equation}
(\theta_{t+1}, \varphi_{t+1}) = (\theta_t, \varphi_t) + h(\theta_t, \varphi_t)\,. 
\end{equation}
Then, if $\rho^* < 1$, for all $\epsilon > 0$, there exists a neighborhood of $(0, 0)$ such that for any initial point in this neighborhood, the distance of the iterates $(\theta_t, \varphi_t)_t$ to a stationary point of $h$ decreases as 
$\bigO((\rho^*+\epsilon)^t)$\,. If $v$ is linear, this is satisfied with the whole space as a neighborhood for all $\epsilon > 0$.
\end{thm}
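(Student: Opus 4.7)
First, I would exploit the degeneracy hypothesis to reveal a block structure. Differentiating the identity $h(0,\varphi) = 0$ on $B(0,\tau)$ with respect to $\varphi$ yields $\nabla_\varphi h(0,0) = 0$, so the Jacobian of $F := \Id + h$ at the origin is block lower triangular with diagonal blocks $I_m + \nabla_\theta h_\theta(0,0)$ and $I_p$. Consequently $\rho(\nabla F(0,0)) = \max(\rho^*, 1) = 1$, which explains why \cref{thm: local convergence} cannot be invoked directly: the degeneracy along $\varphi$ creates a persistent unit eigenvalue. Using $h \in C^2$ together with $h(0,\varphi) = 0$ on $B(0,\tau)$, a first-order Taylor expansion in $\theta$ gives, on a neighborhood of the origin,
\[
h_\theta(\theta,\varphi) = A_\theta(\varphi)\theta + r_\theta(\theta,\varphi), \qquad h_\varphi(\theta,\varphi) = A_\varphi(\varphi)\theta + r_\varphi(\theta,\varphi),
\]
where $A_\theta(\varphi) := \nabla_\theta h_\theta(0,\varphi)$, $A_\varphi(\varphi) := \nabla_\theta h_\varphi(0,\varphi)$, and $\|r_\theta(\theta,\varphi)\| + \|r_\varphi(\theta,\varphi)\| \le C\|\theta\|^2$ uniformly in a neighborhood.

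Fix $\epsilon > 0$. Since the spectral radius equals the infimum over induced operator norms, there exists a norm $\|\cdot\|_*$ on $\R^m$ with $\|I_m + A_\theta(0)\|_* \le \rho^* + \epsilon/3$; by continuity of $\varphi \mapsto A_\theta(\varphi)$, this bound persists (with constant $\rho^* + 2\epsilon/3$) on an open neighborhood $U$ of $0 \in \R^p$ chosen so that $\overline U \subset B(0,\tau)$. I would then prove by induction on $t$ that, for $(\theta_0,\varphi_0)$ in a sufficiently small neighborhood of the origin, $\varphi_t \in U$, $\|\theta_t\|_* \le (\rho^*+\epsilon)^t \|\theta_0\|_*$, and $\|\varphi_{t+1} - \varphi_t\| \le K(\rho^*+\epsilon)^t \|\theta_0\|_*$ for some constant $K$. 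The $\theta$-bound follows from the estimate $\|\theta_{t+1}\|_* \le (\rho^* + 2\epsilon/3)\|\theta_t\|_* + C'\|\theta_t\|_*^2$, absorbing the quadratic remainder into $\epsilon/3$ as long as $\|\theta_t\|_*$ remains below a fixed threshold; the $\varphi$-increment bound is immediate from the expansion of $h_\varphi$.

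Summability of the $\varphi$-increments, bounded by $K\|\theta_0\|_*/(1 - \rho^* - \epsilon)$, shows that $\varphi_t$ is Cauchy, hence converges to some $\varphi_\infty \in \overline U \subset B(0,\tau)$; since $h(0,\cdot) \equiv 0$ on $B(0,\tau)$, the point $(0,\varphi_\infty)$ is stationary. The tail bound $\|\varphi_t - \varphi_\infty\| \le \sum_{s \ge t} \|\varphi_{s+1} - \varphi_s\|$ then decays as $O((\rho^*+\epsilon)^t)$, matching the $\theta$-component, so $\|(\theta_t,\varphi_t) - (0,\varphi_\infty)\| = O((\rho^*+\epsilon)^t)$; since $\epsilon > 0$ is arbitrary, this completes the nonlinear case. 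For linear $h$, the remainders $r_\theta, r_\varphi$ vanish and $A_\theta(\varphi) \equiv A_\theta(0)$, so the smallness requirement disappears and the induction goes through from any initial $(\theta_0,\varphi_0)$. The main obstacle is precisely the coupled induction: the $\theta$-contraction rate requires $\varphi_t \in U$, while the $\varphi$-drift is controlled only through $\theta$-decay; this apparent circularity is closed by choosing $\|\theta_0\|_*$ small enough that the cumulative $\varphi$-drift ever produced cannot leave $U$.
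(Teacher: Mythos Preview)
Your proposal is correct and follows essentially the same route as the paper: choose a norm on $\R^m$ adapted to $I_m + \nabla_\theta h_\theta(0,0)$, Taylor-expand $h$ in $\theta$ around $\theta=0$ using $h(0,\varphi)=0$, control the $\varphi$-dependence of the linear part by continuity, and run a coupled induction showing $\|\theta_t\|$ decays geometrically while the $\varphi$-increments are summable, with the initial neighborhood chosen so that the cumulative $\varphi$-drift never leaves the region where the estimates hold. Your write-up is in fact slightly more complete than the paper's, which stops after establishing the inductive bounds and does not spell out that $(\varphi_t)$ is Cauchy with limit $\varphi_\infty \in B(0,\tau)$ and that $(0,\varphi_\infty)$ is the stationary point witnessing the $O((\rho^*+\epsilon)^t)$ decay.
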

The following proof is inspired from the ones of  \citet[Thm.~4]{nagarajanGradientDescentGAN2017} and \citet[Thm.~1]{gidelNegativeMomentumImproved2018b}.
\newcommand{\J}{\textbf{J}}
\begin{proof}
Let $\J = \nabla_\theta h(0, 0) \in \R^{(m+p) \times m}$, $\J_\theta = \nabla_\theta h_\theta(0, 0) \in \R^{m \times m}$ and $\J_\varphi = \nabla_\theta h_\varphi(0, 0) \in \R^{p \times m}$. Let $\epsilon > 0$ and suppose $\rho^* + \epsilon < 1$. By \citet[Prop.~A.15]{bertsekasNonlinearProgramming1999} there exists a norm $\|.\|$ on $\R^m$ such that the induced matrix norm on $\R^{m \times m}$ satisfy:
\begin{equation}
\|\Id + J_\theta\| \leq \rho^* + \frac{\epsilon}{2}\,.
\end{equation}
On the contrary the norm on $\R^p$ can be chosen arbitrarily.

The extension of these norms to $\R^{m} \times \R^p$ is chosen such that $\|\theta, \varphi\| = \|\theta\| + \|\varphi\|$ for simplicity (but this is without loss of generality).
In this proof, we denote the $d$-dimensional balls by $B_d(x, r) = \{y \in \R^d\ |\ \|x - y\| \leq r\}$ with $x \in \R^d$, $r>0$.
\begin{itemize}
    \item Let $\J = \nabla_\theta h(0, 0) \in \R^{(m+p) \times m}$. 
    
    We first show that, for all $\eta > 0$ there exists $\tau \geq \delta > 0$ such that, 
    \begin{equation}
    \forall (\theta, \varphi) \in B_{m+p}((0, 0), \delta):\ \|h(\theta, \varphi) - \J\theta\| \leq \eta\|\theta\|\,.    
    \end{equation}
    The interesting thing here is that we are completely getting rid of the dependence on $\varphi$, both in the linearization and in the bound.
    
    Let $\varphi \in B_p(0, \tau)$. Then, using that $ h(0, \varphi) = 0$, the Taylor development of $h(\theta, \varphi)$ w.r.t.~to $\theta$ yields:
    \begin{align}
    h(\theta, \varphi) &= \nabla_\theta h(0, \varphi)\theta + R(\theta, \varphi)\\
    &= J\theta + (\nabla_\theta h(0, \varphi) - \nabla_\theta h(0, 0))\theta + R(\theta, \varphi)\,.\\
    \end{align}
    We now deal with the last two terms.
    First the rest $R(\theta, \varphi)$. As $v$ is assumed to be continuously differentiable, there exists $c > 0$ constant (which depends on $\tau$) such that, for any $\theta \in \R^m$:
    \begin{equation}
    \forall \varphi \in B_p(0, \delta): \|R(\theta, \varphi)\| \leq c\|\theta\|^2\,,
    \end{equation}
    Hence, for $\theta \in B(0, \frac{\eta}{2c})$, we get that:
    \begin{equation}
    \forall \varphi \in B_p(0, \delta): \|R(\theta, \varphi)\| \leq \frac{\eta}{2}\|\theta\|\,.
    \end{equation}
    Concerning the other term, by continuity, $\nabla_\theta h(0, \varphi) - \nabla_\theta h(0, 0)$ goes to zero as $\varphi$ goes to zero.
    Hence, there exists $\delta > 0$, $\delta \leq \min(\tau, \frac{\eta}{2c})$ such that for any $\varphi \in B_p(0, \delta)$, $\|(\nabla_\theta h(0, \varphi) - \nabla_\theta h(0, 0))\theta\| \leq \frac{\eta}{2}\|\theta\|$.
    Combining the two bounds yields the desired result.
    \item We now apply the previous result with $\eta = \epsilon/2$. We first examine what this means for $(\theta_{t+1}, \varphi_{t+1})$ when $(\theta_t, \varphi_t)$ is in $B_{m+p}((0, 0), \delta)$. However, the neighborhood $B_{m+p}((0, 0), \delta)$ is not necessarily stable, so we will again restrict it afterwards. See the proof \cite[Thm.~4]{nagarajanGradientDescentGAN2017} for a more detailed discussion on this.
    Assume for now that $(\theta_t, \varphi_t) \in B_{m+p}((0, 0), \delta)$. Then,
\begin{align}
\|\theta_{t+1}\| &= \|(\Id + \J_\theta)\theta_t + (h_\theta(\theta_t, \varphi_t) - \J_\theta\theta_t)\|\\
&\leq \|(\Id + \J_\theta)\theta_t\| + \|(h_\theta(\theta_t, \varphi_t) - \J_\theta\theta_t)\|\\
&\leq (\rho^* + \epsilon)\|\theta_t\|\,.\\
\end{align}
Consider now the other coordinate $\varphi_{t+1}$, still under the assumption that $(\theta_t, \varphi_t) \in B_{m+p}((0, 0), \delta)$. Then,
\begin{align}
\|\varphi_{t+1} - \varphi_t\| &= \|\J_\varphi\theta_t + (h_\varphi(\theta_t, \varphi_t) - \J_\varphi\theta_t)\|\\
&\leq \|\J_\varphi\theta_t\| + \|(h_\varphi(\theta_t, \varphi_t) - \J_\varphi\theta_t)\|\\
&\leq (\|\J_\varphi\| + \frac{\epsilon}{2})\|\theta_t\|\,.\\
\end{align}
Now let $V = \{(\theta, \varphi) \in \R^m \times \R^p\ | (\theta, \varphi) \in B((0, 0), \delta),\, (1 + \frac{\|\J_\varphi\| + \frac{\epsilon}{2}}{1 - \rho^* - \epsilon})\|\theta\| + \|\varphi\| < \delta\}$ neighborhood of $(0, 0)$. We show, by induction, that if $(\theta_0, \varphi_0) \in V$, then the iterates stay in $B_{m+p}((0, 0), \delta)$.

Assume  $(\theta_0, \varphi_0) \in V$. By construction, $(\theta_0, \varphi_0) \in B_{m+p}((0, 0), \delta)$. Now assume that $(\theta_0, \varphi_0),(\theta_1, \varphi_1),\dots,(\theta_t, \varphi_t)$ are in $B_{m+p}((0, 0), \delta)$ for some $t \geq 0$. By what has been proven above, first, $\|\theta_{t+1}\| \leq (\rho^* + \epsilon)^{t+1} \|\theta_0\| \leq \|\theta_0\|$. Then,
\begin{align}
\|\varphi_{t+1}\| &\leq \|\varphi_0\| + \sum_{k=0}^t\|\varphi_{k+1} - \varphi_k\|\\
&\leq \|\varphi_0\| + (\|\J_\varphi\| + \frac{\epsilon}{2})\sum_{k=0}^t\|\theta_{k}\|\\
&\leq \|\varphi_0\| + (\|\J_\varphi\| + \frac{\epsilon}{2})\sum_{k=0}^t(\rho^* + \epsilon)^k\|\theta_{0}\|\\
&\leq \|\varphi_0\| + \frac{\|\J_\varphi\| + \frac{\epsilon}{2}}{1 - \rho^* - \epsilon}\|\theta_{0}\|\\
\end{align}
Hence, putting the two coordinates together,
\begin{align}
\|(\theta_{t+1}, \varphi_{t+1})\| &= \|\theta_{t+1}\| + \|\varphi_{t+1}\|\\
&\leq \|\varphi_0\| + \left(1 + \frac{\|\J_\varphi\| + \frac{\epsilon}{2}}{1 - \rho^* - \epsilon}\right)\|\theta_{0}\|\\
&< \delta\,.
\end{align}
by definition of $V$. Hence, $(\theta_{t+1}, \varphi_{t+1}) \in B_{m+p}((0, 0), \delta)$ which concludes the induction and the proof.
\end{itemize}

For the linear operator case, note that we can choose $\tau = +\infty$, $c = 0$, $\eta = 0$ and $\delta = +\infty$. Then we have $V = \R^m \times \R^p$.
\end{proof}

By a linear base change, we get the more practical corollary:
\begin{cor}\label{cor: local convergence with singularity}
Let $F : \R^d \rightarrow \R^d$ be twice continuously differentiable and $\omega^* \in \R^d$ be a fixed point. Assume that there exists $\delta > 0$ such that for all $\xi \in \Ker(\nabla F(\omega^*) - I_d) \cap B(0, \delta)$, $\omega^* + \xi$ is still a fixed point and that $\Ker(\nabla F(\omega^*) - I_d)^2 =  \Ker(\nabla F(\omega^*) - I_d)$. Define
\begin{equation}
\rho^* = \max\{|\lambda|\, |\, \lambda \in \Sp \nabla F(\omega^*),\, \lambda \neq 1\}\,,
\end{equation}
and assume $\rho^* < 1$. Consider the iterates $(\omega_t)_t$  built from $\omega_0 \in \R^d$ as:
\begin{equation}
\omega_{t+1} = F(\omega_t)\quad \forall t \geq 0\,.    
\end{equation}
Then, for all $\epsilon > 0$, for any $\omega_0$ in a neighborhood of $\omega^*$, the distance of the iterates $(\omega_t)^t$ to fixed points of $F$ decreases in $\bigO((\rho^*+\epsilon)^t)$.

Moreover, if $F$ is linear, we can take this neighborhood to be the whole space and $\epsilon = 0$.
\end{cor}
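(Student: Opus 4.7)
The plan is to reduce the corollary to Theorem \ref{thm: local convergence lemma with singularity} via translation to the origin followed by a linear change of basis. Define $h(\omega) = F(\omega) - \omega$ so that the iteration reads $\omega_{t+1} = \omega_t + h(\omega_t)$, with fixed points of $F$ equal to the zeros of $h$. Without loss of generality translate $\omega^*$ to $0$, and write $A = \nabla F(0) - I_d = \nabla h(0)$.

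The key step is a block-diagonalization of $A$. The hypothesis $\Ker A = \Ker A^2$ forces $\Ker A \cap A(\R^d) = \{0\}$: if $x = Ay$ with $Ax = 0$, then $A^2 y = 0$ gives $y \in \Ker A^2 = \Ker A$ and hence $x = Ay = 0$. By rank-nullity this yields $\R^d = \Ker A \oplus A(\R^d)$, and both summands are invariant under $I_d + A = \nabla F(0)$ (the first trivially; the second since $(I_d + A)Ay = A(y + Ay)$). Setting $p = \dim \Ker A$ and $m = d - p$, let $P$ be an adapted change of basis. In the new coordinates $(\theta, \varphi) \in \R^m \times \R^p$, the conjugate map $\tilde F(\theta, \varphi) := P^{-1} F(P(\theta, \varphi))$ has Jacobian $P^{-1}\nabla F(0) P$ at the origin that is block-diagonal, with $\R^p$-block equal to $I_p$ and $\R^m$-block whose spectrum is exactly $\Sp \nabla F(\omega^*) \setminus \{1\}$. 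Here the semisimplicity hypothesis $\Ker A = \Ker A^2$ is precisely what prevents generalized $1$-eigenvectors from appearing in the $\theta$-block.

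I would then verify the hypotheses of Theorem \ref{thm: local convergence lemma with singularity} for $\tilde h = \tilde F - \Id$. Vanishing at $(0,0)$ is immediate. The crucial condition $\tilde h(0, \varphi) = 0$ for $\|\varphi\|$ small holds because such points correspond under $P$ exactly to points $\omega^* + \xi$ with $\xi \in \Ker A \cap B(0, \delta)$, which are fixed points of $F$ by assumption. The matrix $\nabla_\theta(\Id + \tilde h_\theta)(0,0)$ is the $\theta$-block of $P^{-1}\nabla F(\omega^*) P$, whose spectral radius equals $\rho^*$ by construction. The theorem then gives $\bigO((\rho^* + \epsilon)^t)$ convergence in the new coordinates; pulling back by the linear isomorphism $P$ transports rates and neighborhoods with only constants changing, and the linear-$F$ strengthening (whole space, $\epsilon = 0$) is inherited directly from the corresponding statement of the theorem.

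The main obstacle is establishing the direct sum $\R^d = \Ker A \oplus A(\R^d)$ and the absence of $1$ from the $\theta$-block's spectrum; both rely essentially on the semisimplicity hypothesis $\Ker A = \Ker A^2$, without which $1$ could be a defective eigenvalue of $\nabla F(\omega^*)$ and the $\theta$-iteration would fail to contract.
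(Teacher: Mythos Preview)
Your proposal is correct and follows essentially the same approach as the paper: reduce to Theorem~\ref{thm: local convergence lemma with singularity} by a linear change of basis splitting $\R^d$ into the $1$-eigenspace of $\nabla F(\omega^*)$ and an invariant complement, then verify the fixed-point and spectral hypotheses on each block. The only cosmetic difference is that the paper obtains the complement as $E = \bigoplus_{\lambda \neq 1} \Ker(\nabla F(\omega^*) - \lambda I_d)^{m_\lambda}$ via the generalized eigenspace decomposition, while you obtain it as $A(\R^d)$ via rank--nullity; under the hypothesis $\Ker A = \Ker A^2$ these are the same subspace, and your argument is arguably the more elementary of the two.
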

\begin{proof}
We consider the spaces $\Ker(\nabla F(\omega^*) - \lambda I_d)^{m_\lambda}$, $\lambda \in \Sp \nabla F(\omega^*)$ where $m_\lambda$ denotes the multiplicity of the eigenvalue $\lambda$ as root of the characteristic polynomial of $\nabla F(\omega^*)$. Then, we have,
$$\R^d = \bigoplus_{\lambda \in \Sp \nabla F(\omega^*)} \Ker(\nabla F(\omega^*) - \lambda I_d)^{m_\lambda}\,, $$
see \citet[Chap.~6]{laxLinearAlgebraIts2007} for instance.

Now, using that $\Ker(\nabla F(\omega^*) - I_d)^2 =  \Ker(\nabla F(\omega^*) - I_d)$, we have that $\Ker(\nabla F(\omega^*) - I_d)^{m_1} = \Ker(\nabla F(\omega^*) - I_d)$. Hence, the whole space can be decomposed as $\R^d = \Ker(\nabla F(\omega^*) - I_d) \oplus E$ where $E =  \bigoplus_{\lambda \in \Sp \nabla F(\omega^*)\setminus\{1\}} \Ker(\nabla F(\omega^*) - \lambda I_d)^{m_\lambda}$. Note that $E$ is stable by $\nabla F(\omega^*)$ and so $\rho(\nabla F(\omega^*)\vert_E) = \rho^*$ as defined in the statement of the theorem.
Denote by $P \in \R^d\times\R^d$ the (invertible) change of basis such that $\Ker (\nabla F(\omega^*) - I_d)$ is sent on the subspace $\R^m \times \{0\}^p$ and $E$ on the subspace $\{0\}^m \times \R^p$, where $m$ and $p$ are the respective dimensions of $\Ker (\nabla F(\omega^*) - I_d)$ and $E$. Then, we apply the previous theorem \cref{thm: local convergence lemma with singularity} with $h$ defined by,
$$(\theta, \varphi) + h(\theta, \varphi) = PF\left(\omega^* + P^{-1}(\theta, \varphi)\right)\,.$$
which concludes the proof.
\end{proof}
\begin{rmk}
In general the condition $\Ker(\nabla F(\omega^*) - I_d)^2 =  \Ker(\nabla F(\omega^*) - I_d)$ will be equivalent to $\Ker\nabla v(\omega^*)^2 =  \Ker \nabla v(\omega^*)$ where $v$ is the game vector field. We keep this remark informal but  we prove this for extragradient below as an example. Indeed, as seen with $1-SCLI$ in \cref{subsection: pSCLI framework} with \cref{eq: 1-SCLI rule}, $\nabla F(\omega^*)$ is of the form $\Id + \mathcal{N}(\nabla v(\omega^*)) \nabla v(\omega^*)$ where $\mathcal{N}$ is a polynomial. Hence, $(\nabla F(\omega^*) - I_d)^j = \mathcal{N}(\nabla v(\omega^*))^j \nabla v(\omega^*)^j$. Moreover, in practice, $\mathcal{N}(\nabla v(\omega^*))$ will be chosen --- e.g.~by the choice of the step-size --- to be non-singular. Hence,  $\Ker (\nabla F(\omega^*) - I_d)^j = \Ker \nabla v(\omega^*)^j$ and so $\Ker(\nabla F(\omega^*) - I_d)^2 =  \Ker(\nabla F(\omega^*) - I_d)$ will be equivalent to $\Ker\nabla v(\omega^*)^2 =  \Ker \nabla v(\omega^*)$. 
\end{rmk}
We now prove a lemma concerning extragradient which as a first step before apply \cref{cor: local convergence with singularity}. We could have proven this result for $k$-extrapolation methods but we focus on extragradient for simplicity.
\begin{lemma}\label{lemma: correctness eg}
Let $F_{2, \eta} : \omega \rightarrow \omega - \eta v(\omega - \eta v(\omega))$ denote the extragradient operator. Assume that $v$ is L-Lipschitz. Then, if $0 <\eta < \frac{1}{L}$, for $\omega^*$ stationary point of $v$, 
$$\Ker(\nabla F_{2, \eta}(\omega^*) - I_d) = \Ker \nabla v(\omega^*)\,,$$
and 
$$\Ker(\nabla F_{2, \eta}(\omega^*) - I_d)^2 =  \Ker(\nabla F_{2, \eta}(\omega^*) - I_d) \iff \Ker\nabla v(\omega^*)^2 =  \Ker \nabla v(\omega^*)\,.$$
\end{lemma}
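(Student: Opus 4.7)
The plan is to reduce everything to the explicit formula for $\nabla F_{2,\eta}(\omega^*)$ provided by \cref{lemma: spectra operators}, factor out a matrix that turns out to be invertible under the step-size assumption, and read off the two kernel identities.

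First, I would invoke the computation of $\nabla F_{\eta, k}$ in the proof of \cref{lemma: spectra operators}: specialized to $k = 2$, it yields
\begin{equation*}
\nabla F_{2, \eta}(\omega^*) \;=\; I_d \;-\; \eta \nabla v(\omega^*) \;+\; \eta^2 \nabla v(\omega^*)^2.
\end{equation*}
Consequently $\nabla F_{2,\eta}(\omega^*) - I_d = -\eta\, \nabla v(\omega^*)\bigl( I_d - \eta \nabla v(\omega^*)\bigr)$, where the two factors commute since both are polynomials in $\nabla v(\omega^*)$. Next, I would argue that $I_d - \eta \nabla v(\omega^*)$ is non-singular under the hypothesis $\eta < 1/L$: since $v$ is $L$-Lipschitz, the mean-value argument used in \cref{lemma: relation global quantities to spectrum} gives $\|\nabla v(\omega^*)\|_{\mathrm{op}} \leq L$, so $\|\eta \nabla v(\omega^*)\|_{\mathrm{op}} < 1$ and the Neumann series $\sum_{k\geq 0}(\eta \nabla v(\omega^*))^k$ converges to the inverse of $I_d - \eta \nabla v(\omega^*)$.

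From this invertibility the first claim is immediate: for any $u \in \R^d$,
\begin{equation*}
(\nabla F_{2,\eta}(\omega^*) - I_d)u = 0 \iff \nabla v(\omega^*)\bigl(I_d - \eta \nabla v(\omega^*)\bigr)u = 0 \iff \bigl(I_d - \eta \nabla v(\omega^*)\bigr)u \in \Ker \nabla v(\omega^*).
\end{equation*}
Using that $I_d - \eta \nabla v(\omega^*)$ commutes with $\nabla v(\omega^*)$ and is bijective, the preimage of $\Ker \nabla v(\omega^*)$ under this map is exactly $\Ker \nabla v(\omega^*)$, which gives $\Ker(\nabla F_{2,\eta}(\omega^*) - I_d) = \Ker \nabla v(\omega^*)$.

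For the second assertion I would square the factorization to get
\begin{equation*}
(\nabla F_{2,\eta}(\omega^*) - I_d)^2 \;=\; \eta^2\, \nabla v(\omega^*)^2 \bigl(I_d - \eta \nabla v(\omega^*)\bigr)^2,
\end{equation*}
and repeat the same reasoning: $(I_d - \eta \nabla v(\omega^*))^2$ is invertible and commutes with $\nabla v(\omega^*)^2$, so $\Ker (\nabla F_{2,\eta}(\omega^*) - I_d)^2 = \Ker \nabla v(\omega^*)^2$. Combining this with the first identity gives the claimed equivalence. The proof is essentially a one-line factorization; the only delicate point is justifying the invertibility of $I_d - \eta \nabla v(\omega^*)$, which is where the assumption $\eta < 1/L$ is consumed.
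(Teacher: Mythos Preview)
Your proposal is correct and follows essentially the same route as the paper: factor $\nabla F_{2,\eta}(\omega^*) - I_d = -\eta\,\nabla v(\omega^*)(I_d - \eta \nabla v(\omega^*))$, observe the factors commute, and use invertibility of $I_d - \eta \nabla v(\omega^*)$ (the paper asserts this ``by the choice of $\eta$'' whereas you spell out the Neumann-series justification) to conclude $\Ker(\nabla F_{2,\eta}(\omega^*) - I_d)^j = \Ker \nabla v(\omega^*)^j$ for $j=1,2$.
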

\begin{proof}
We have $\nabla F_{2, \eta}(\omega^*) = I_d - \eta \nabla v(\omega^*)(I_d - \eta \nabla v(\omega^*))$ and so $\nabla F_{2, \eta}(\omega^*) - I_d = - \eta \nabla v(\omega^*)(I_d - \eta \nabla v(\omega^*))$. As $\nabla v(\omega^*)$ and $I_d - \eta \nabla v(\omega^*))$ commute, for $j \in \{1,  2\}$,
$$\left(\nabla F_{2, \eta}(\omega^*) - I_d\right)^j = \left(- \eta\left(I_d - \eta \nabla v(\omega^*)\right)\right)^j \nabla v(\omega^*)^j\,. $$
By the choice of $\eta$, $\eta(I_d - \eta \nabla v(\omega^*))$ is non-singular and so $\Ker (\nabla F_{2, \eta}(\omega^*) - I_d)^j = \Ker \nabla v(\omega^*)^j$ which yields the result.
\end{proof}
 The whole framework developed implies in particular that \cref{thm: rate of n-extrapolation} actually also yields convergence guarantees for extragradient on more general bilinear games than those considered in \cref{ex: highly adversarial game}.
 \begin{ex}[Bilinear game with potential singularity]\label{ex: singular bilinear game}
 A saddle-point problem of the form:
  \begin{equation}
\min_{x \in \R^{m}} \max_{y \in \R^{p}} x^TA y + b^T x + c^Ty 
 \end{equation}
 with $A \in \R^{m \times p}$ not null, $b \in \R^m$, $c \in \R^p$.
 \end{ex}
 \begin{cor}
 Consider the bilinear game of \cref{ex: singular bilinear game}. The iterates of extragradient with $\eta = (4 \sigma_{max}(A))$ converge globally to $\omega^*$ at a linear rate of $\bigO\big(\big(1 - \frac{1}{64}\frac{\tilde \sigma_{min}(A)^2}{{\sigma_{max}(A)^2}}\big)^t\big)$ where $\tilde \sigma_{min}(A)$ is the smallest non-zero singular value of $A$.
 \end{cor}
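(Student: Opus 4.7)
The plan is to combine the spectral analysis of extragradient from \cref{thm: rate of n-extrapolation} with the singularity‐handling machinery of \cref{cor: local convergence with singularity}, exploiting the fact that for a bilinear game $v$ is affine so that $F_{2,\eta}$ is affine and the spectral control globalizes.

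First I would compute the Jacobian, which is the constant matrix
\begin{equation*}
\nabla v = \begin{pmatrix} 0 & A \\ -A^T & 0 \end{pmatrix}.
\end{equation*}
By \cref{lemma: spectrum bilinear game} (whose proof generalizes to non-square $A$ by the same block determinant argument), its nonzero eigenvalues are $\pm i\sigma$ for each nonzero singular value $\sigma$ of $A$, while $0$ is an eigenvalue whenever $\nabla v$ fails to be injective. In particular, restricted to its nonzero part, $\Sp \nabla v \setminus\{0\}$ lies in $i\R$ with $\min_{\lambda\neq 0}|\lambda| = \tilde\sigma_{\min}(A)$ and $\max|\lambda| = \sigma_{\max}(A)$.

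Next I would check the two hypotheses of \cref{cor: local convergence with singularity} applied to $F = F_{2,\eta}$ with $\eta = (4\sigma_{\max}(A))^{-1}$ at a stationary point $\omega^*$ (whose existence is assumed since the corollary speaks of convergence \emph{to} $\omega^*$). By \cref{lemma: correctness eg} we have $\Ker(\nabla F_{2,\eta}(\omega^*) - I_d) = \Ker \nabla v$, and the condition $\Ker(\nabla F_{2,\eta}(\omega^*) - I_d)^2 = \Ker(\nabla F_{2,\eta}(\omega^*) - I_d)$ reduces to $\Ker \nabla v^2 = \Ker \nabla v$; this holds because $\nabla v^T = -\nabla v$, so $\nabla v$ is normal, and for a normal matrix the kernel of $M$ equals the kernel of $M^2$ (e.g.\ $\|Mx\|^2 = x^T M^T M x = x^T M M^T x$, combined with unitary diagonalization over $\C$). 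The stability of the fixed-point set in a neighborhood of $\omega^*$ along $\Ker \nabla v$ is immediate: for any $\xi \in \Ker \nabla v$, since $v$ is affine we have $v(\omega^* + \xi) = v(\omega^*) + \nabla v\,\xi = 0$, so $\omega^* + \xi$ is again a stationary point, i.e.\ a fixed point of $F_{2,\eta}$.

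Then I would bound $\rho^* := \max\{|\mu| : \mu \in \Sp \nabla F_{2,\eta}(\omega^*),\ \mu\neq 1\}$. By \cref{lemma: spectra operators}, $\Sp \nabla F_{2,\eta}(\omega^*) = \{1 - \eta\lambda + \eta^2\lambda^2 : \lambda \in \Sp \nabla v\}$, and the eigenvalue $1$ arises exactly from $\lambda = 0$, so $\rho^*$ is controlled only by the nonzero eigenvalues $\pm i\sigma$ for $\sigma \in [\tilde\sigma_{\min}(A),\sigma_{\max}(A)]$. Applying the bound of \cref{thm: rate of n-extrapolation} (equation for $k=2$) with $\eta = (4\sigma_{\max}(A))^{-1}$ to this restricted spectrum---all nonzero eigenvalues have $\Re\lambda = 0$ so only the second term contributes---yields
\begin{equation*}
(\rho^*)^2 \leq 1 - \tfrac{1}{64}\,\tfrac{\tilde\sigma_{\min}(A)^2}{\sigma_{\max}(A)^2}.
\end{equation*}
Finally, since $v$ is affine, $F_{2,\eta}$ is affine; \cref{cor: local convergence with singularity} then gives global convergence of the iterates to some fixed point of $F_{2,\eta}$ (which by \cref{lemma: correctness eg} is a stationary point of $v$) at the rate $\mathcal{O}((\rho^*)^t)$, i.e.\ $\mathcal{O}\bigl((1 - \tfrac{1}{64}\tilde\sigma_{\min}(A)^2/\sigma_{\max}(A)^2)^t\bigr)$ as claimed. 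The main obstacle is the singularity step: carefully verifying that the Jordan structure at the eigenvalue $1$ of $\nabla F_{2,\eta}$ is trivial (equivalently, that $\nabla v$ is normal, hence has no nontrivial Jordan block at $0$) so that \cref{cor: local convergence with singularity} applies and the zero eigenvalues of $\nabla v$ simply parameterize the affine manifold of stationary points rather than introducing a linearly-growing drift.
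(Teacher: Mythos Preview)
Your proposal is correct and follows essentially the same route as the paper: verify the kernel condition of \cref{cor: local convergence with singularity} via skew-symmetry of $\nabla v$ (the paper does exactly your computation $\nabla v^2 X=0 \Rightarrow -\nabla v^T\nabla v X=0 \Rightarrow \nabla v X=0$), use affinity of $v$ to show $\omega^*+\xi$ is stationary for $\xi\in\Ker\nabla v$, identify $\rho^*$ with the maximum over the nonzero eigenvalues $\pm i\sigma$ via \cref{lemma: spectra operators}, and then invoke the estimate from \cref{thm: rate of n-extrapolation}. The only cosmetic addition you might want is the one-line check that $1-\eta\lambda+\eta^2\lambda^2=1$ forces $\lambda=0$ under $|\eta\lambda|\le 1/4$, which justifies that excluding the eigenvalue $1$ of $\nabla F_{2,\eta}$ is the same as excluding $\lambda=0$ in $\Sp\nabla v$.
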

 \begin{proof}
 Let $\omega^*$ be a stationary point of the associated vector field $v$. Then, $\nabla v(\omega^*) = \begin{pmatrix}
 O & A\\
 -A^T & 0
 \end{pmatrix}$ which is skew-symmetric. 
 Note that if $\eta = (4 \sigma_{max}(A))^{-1}$, then $0 < \eta < L$ where $L$ is the Lipschitz constant of $v$.
 
 We check that $\Ker\nabla v(\omega^*)^2 =  \Ker \nabla v(\omega^*)$. Let $X \in \R^{m+p}$ such that $\nabla v(\omega^*)^2 X = 0$. As $\nabla v(\omega^*)$ is skew-symmetric, this is equivalent to $\nabla v(\omega^*)^T \nabla v(\omega^*) X = 0$ which implies that $\|\nabla v(\omega^*)\| = 0$ which implies our claim. By \cref{lemma: correctness eg}, this implies that $\Ker(\nabla F_{2, \eta}(\omega^*) - I_d)^2 =  \Ker(\nabla F_{2, \eta}(\omega^*) - I_d)$.
 Moreover, if $\xi \in \Ker(\nabla F(\omega^*) - I_d)$ then by \cref{lemma: correctness eg}, $\xi \in \Ker \nabla v(\omega^*)$ and so $v(\omega^* + \xi) = 0$ too. Hence the hypothesises of \cref{cor: local convergence with singularity} are satisfied. 
 Then, by our choice of $\eta$ and \cref{lemma: spectra operators}, 
 
\begin{align*}
\rho^* &= \max\{|\lambda|\, |\, \lambda \in \Sp \nabla F_{2, \eta}(\omega^*),\, \lambda \neq 1\}\\
&= \max\{|1 - \eta\lambda(1 - \eta \lambda)|\, |\, \lambda \in \Sp \nabla v(\omega^*),\, \lambda \neq 0\}\\
&= \max\{|1 - \eta\lambda(1 - \eta \lambda)|\, |\, \lambda = \pm i\sigma,\, \sigma^2 \in \Sp AA^T,\, \sigma \neq 0\}\,,
\end{align*}
by a similar reasoning as \cref{lemma: spectrum bilinear game} since $\Sp AA^T \setminus\{0\} = \Sp A^TA \setminus\{0\}$.

The result is now a consequence of the proof of \cref{thm: rate of n-extrapolation}.
 \end{proof}

\section{Improvement of global rate}\label{section: app improvement of rate}
In this section we study experimentally the importance of the term $\eta^2\gamma^2$ in the global rate of \Cref{thm: global rate eg}. For this we generate two player zero-sum random montone matrix games, that is to say saddle-point problems of the form
\begin{equation}
\min_{\omega_1 \in \R^{d_1}}\max_{\omega_2 \in \R^{d_2}} \begin{pmatrix}
\omega_1 & \omega_2
\end{pmatrix}\begin{pmatrix}
S_1 & A \\
-A^T & S_2 \\
\end{pmatrix}\begin{pmatrix}
\omega_1 \\ \omega_2
\end{pmatrix}\,,
\end{equation}
where $S_1$ and $S_2$ are symmetric semi-definite positive.
To generate a symmetric semi-definite positive matrix of dimension $m$, we first draw independently $m$ non-negative scalars 
$\lambda_1,\dots,\lambda_m$ according to the chi-squared law. Then, we draw an orthogonal matrix $O$ according to the uniform distribution over the orthogonal group. The result is $S = O^T \diag(\lambda_1,\dots,\lambda_m)O$. The coefficients of $A$ are chosen independently according to a normal law $\mathcal{N}(0, 1)$.

To study how the use of Tseng's error bound improves the standard rate which uses the strong monotonicty only, we compute, for each such matrix game, the ratio $\frac{\eta\mu}{\eta\mu + \frac{7}{16} \eta^2\gamma^2}$ with $\eta = (4L)^{-1}$ (with the same notations as \Cref{thm: global rate eg}). This ratio lies between 0 and 1: if it close to $0$, it means that $\eta^2\gamma^2$ is much bigger than $\eta\mu$ so that our new convergence rate improves the previous one a lot, while if its near 1, it means that $\eta^2\gamma^2$ is much smaller than $\eta\mu$ and so that our new result does not improve much.

We realize two sets of graphics, each time keeping a different parameter fixed. These histograms are constructed from $N = 500$ samples.

\newcounter{N}
\setcounter{N}{500}

\newcommand{\drawImprovementPlot}[3]{
\begin{subfigure}{0.45\linewidth}
\begin{tikzpicture}[]
  \begin{axis}[
    width=7cm,
    height=7cm,
    ybar,
    xmin = 0, xmax=1,
    ymin = 0, ymax=\value{N},
    ylabel=Frequence,
    xlabel=Ratio,
    xtick = {0,0.2,0.4,0.6,0.8,1},
    yticklabel={\pgfmathparse{(\tick/\value{N})*100}\pgfmathprintnumber{\pgfmathresult}\%},
    yticklabel style={
      /pgf/number format/.cd,
      fixed, precision=0,
      /tikz/.cd
    },
    ]
    \addplot+ [hist={bins=50}]
    table [col sep=comma, y=#1] {gamma.csv};
  \end{axis}
\end{tikzpicture}
\subcaption{$d_1 = #2$ and $d_2 = #3$}
\end{subfigure}
}

\begin{figure}
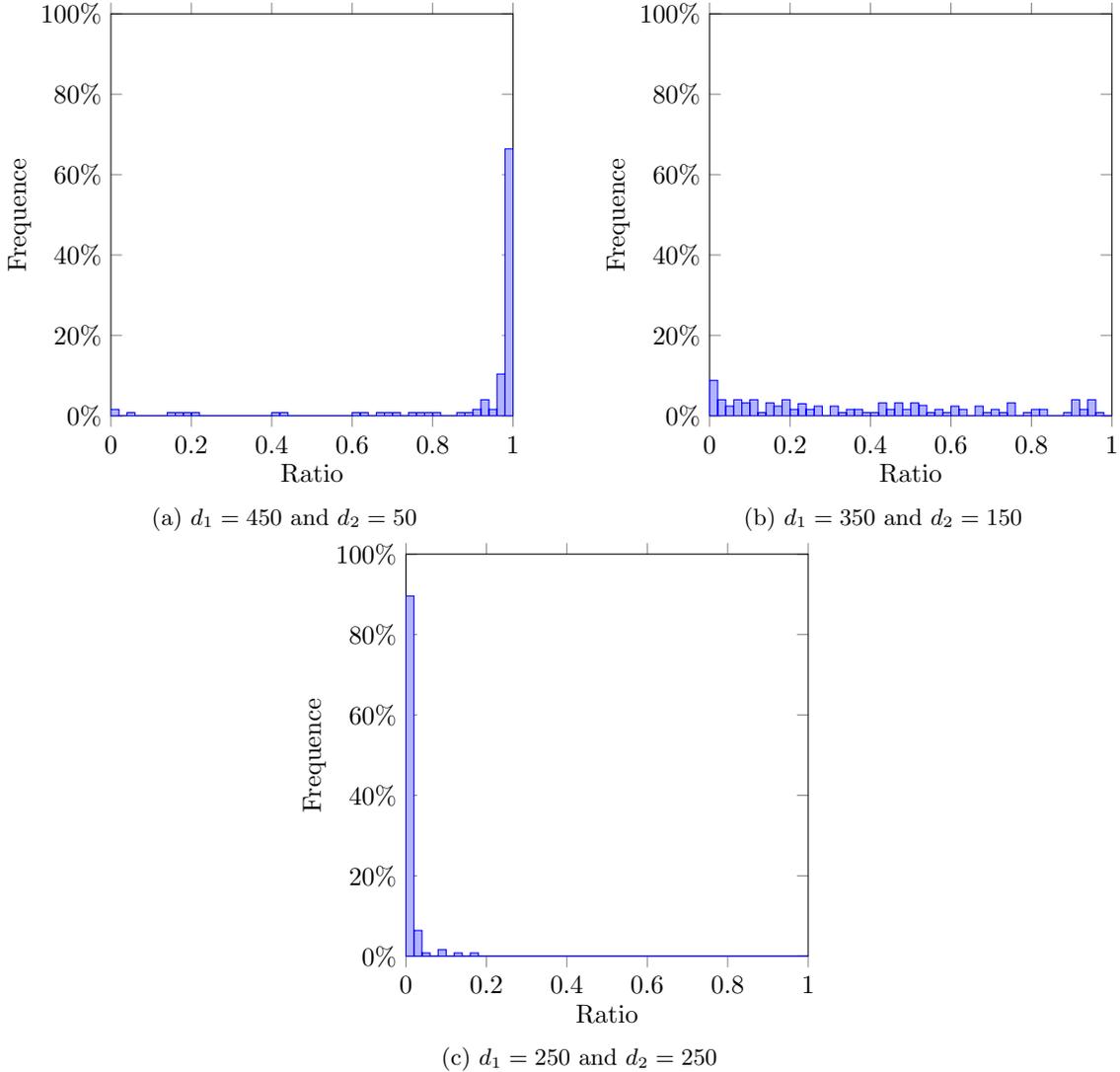

    \centering
    \drawImprovementPlot{450vs50}{450}{50}
\drawImprovementPlot{350vs150}{350}{150}
\drawImprovementPlot{250vs250}{250}{250}
    \caption{We keep $d_1 + d_2 = 500$ fixed and vary the balance between the two players.}
    \label{fig: graphics one}
\end{figure}
\begin{figure}
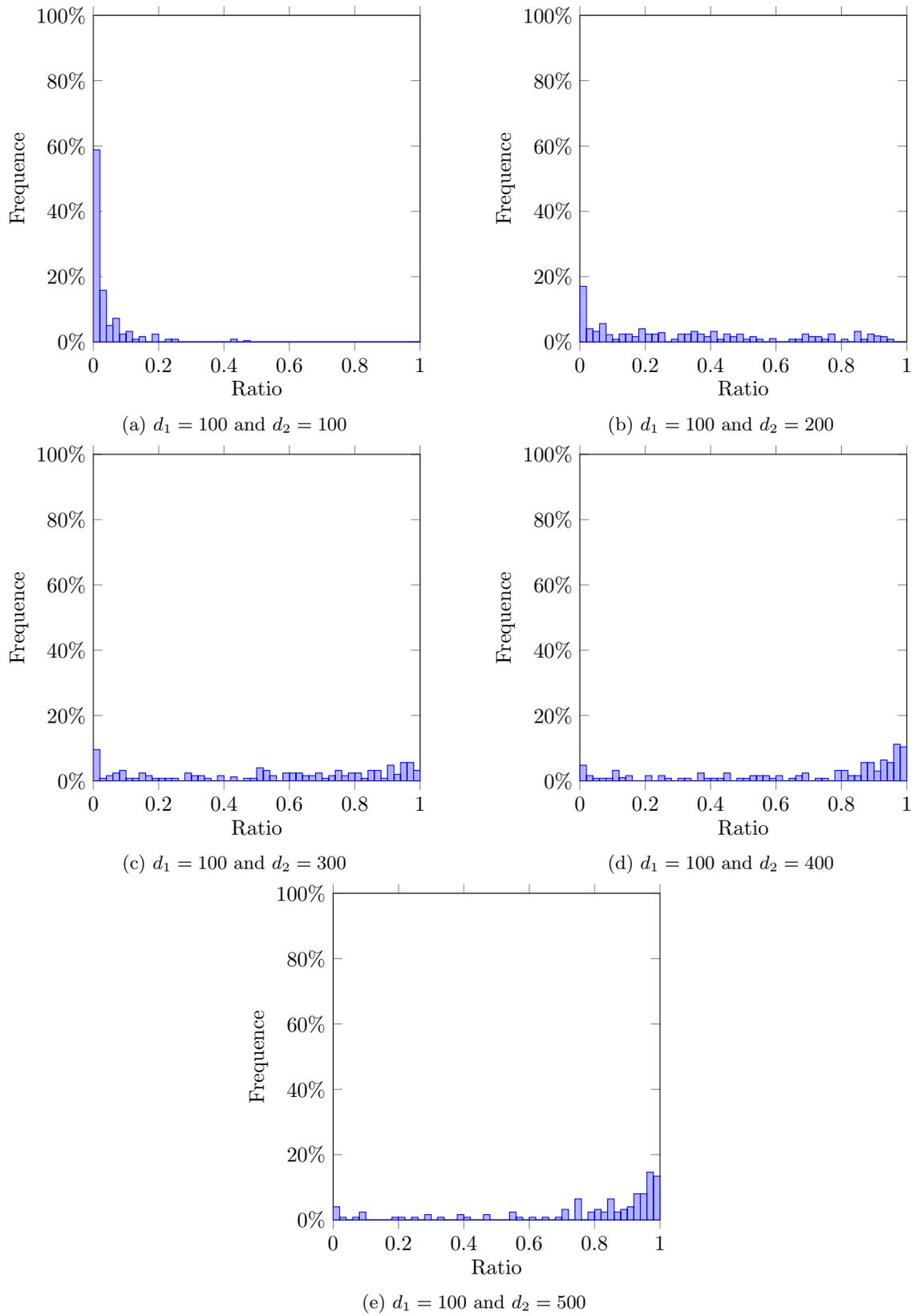

    \centering
\drawImprovementPlot{100vs100}{100}{100}
\drawImprovementPlot{200vs100}{100}{200}
\drawImprovementPlot{300vs100}{100}{300}
\drawImprovementPlot{400vs100}{100}{400}
\drawImprovementPlot{500vs100}{100}{500}
    \caption{We keep $d_1 = 100$ fixed and make $d_2$ vary.}
    \label{fig:my_label}
\end{figure}
What observe is that, as soon as none of the dimensions are too small, our new rate improves the previous one in many situations. This is greatly amplified if $d_1$ and $d_2$ are similar.
\end{document}